\pdfoutput=1
\documentclass[10pt,letterpaper,twoside,twocolumn,journal,final]{IEEEtran}

\usepackage{amsmath,amsfonts}
\usepackage{algorithm}
\usepackage{array}
\usepackage[caption=false,justification=raggedright]{subfig}
\usepackage{textcomp}
\usepackage{stfloats}
\usepackage{url}
\usepackage{verbatim}
\usepackage{graphicx}
\usepackage{cite}
\usepackage{bm}
\usepackage{mathtools}
\usepackage{url}
\usepackage{multirow}
\usepackage{colortab}
\usepackage{colortbl}
\usepackage{arydshln}
\usepackage{algpseudocode}
\usepackage{tablefootnote}
\usepackage{amsthm}
\usepackage{amssymb}
\usepackage{placeins}
\definecolor{g}{rgb}{0.925, 0.957, 0.831}
\definecolor{p}{rgb}{0.980,0.910,0.922}
\newcommand{\bhline}[1]{\noalign{\hrule height #1}}
\usepackage{tikz}
\usetikzlibrary{positioning}
\usetikzlibrary{calc}
\usetikzlibrary{arrows.meta}
\usetikzlibrary{arrows.meta}
\usepackage{pgfplots}
\pgfplotsset{compat=1.18}

\usepgfplotslibrary{fillbetween,groupplots}
\newlength{\syntheticsurfacefullwidth}
\newsavebox{\syntheticsurfaceplotbox}
\newcount\syntheticrastriginplotform
\newcount\syntheticrosenbrockplotform
\newcount\syntheticcrossplotform
\newcount\syntheticstyblinskitangplotform

\newcommand{\syntheticrastriginplot}{\pdfrefxform\syntheticrastriginplotform}
\newcommand{\syntheticrosenbrockplot}{\pdfrefxform\syntheticrosenbrockplotform}
\newcommand{\syntheticcrossplot}{\pdfrefxform\syntheticcrossplotform}
\newcommand{\syntheticstyblinskitangplot}{\pdfrefxform\syntheticstyblinskitangplotform}

\newcommand{\storesyntheticsurfaceplot}[1]{
  \immediate\pdfxform\syntheticsurfaceplotbox
  \global#1=\pdflastxform
  \global\setbox\syntheticsurfaceplotbox=\hbox{}
}

\newif\ifsyntheticsurfaceplotsready
\syntheticsurfaceplotsreadyfalse

\newcommand{\preparesyntheticsurfaceplots}{
  \ifsyntheticsurfaceplotsready
  \else
    \global\setbox\syntheticsurfaceplotbox=\hbox{
      \begin{tikzpicture}
        \begin{axis}[
            width=0.22\syntheticsurfacefullwidth,
            height=0.165\syntheticsurfacefullwidth,
            xlabel={$x_1$}, ylabel={$x_2$}, zlabel={$f_1(x_1,x_2)$},
            view={60}{30},
            colormap/viridis,
            samples=30, samples y=30,
          ]
          \addplot3[
            surf,
            domain=-5.12:5.12,
            domain y=-5.12:5.12,
          ]
          {20 + x^2 - 10*cos(360*x) + y^2 - 10*cos(360*y)};
        \end{axis}
      \end{tikzpicture}
    }
    \storesyntheticsurfaceplot{\syntheticrastriginplotform}
    \global\setbox\syntheticsurfaceplotbox=\hbox{
      \begin{tikzpicture}[
          declare function={
            rosenbrock(\x,\y) = (1-\x)^2 + 100*(\y - \x^2)^2;
          }
        ]
        \begin{axis}[
            width=0.22\syntheticsurfacefullwidth,
            height=0.165\syntheticsurfacefullwidth,
            xlabel={$x_1$}, ylabel={$x_2$}, zlabel={$f_2(x_1,x_2)$},
            view={60}{30},
            colormap/viridis,
            xmin=-2, xmax=2,
            ymin=-2, ymax=2,
            zmode=log,
            samples=30, samples y=30,
          ]
          \addplot3[
            surf,
            domain=-2:2,
            domain y=-2:2,
            restrict z to domain=0.01:1e6,
          ]
          {rosenbrock(x,y)};
        \end{axis}
      \end{tikzpicture}
    }
    \storesyntheticsurfaceplot{\syntheticrosenbrockplotform}
    \global\setbox\syntheticsurfaceplotbox=\hbox{
      \begin{tikzpicture}[
          declare function={
            crossfunc(\x,\y) = max(
              exp(-10*\x*\x),
              exp(-50*\y*\y),
              1.25*exp(-5*(\x*\x + \y*\y))
            );
          }
        ]
        \begin{axis}[
            width=0.22\syntheticsurfacefullwidth,
            height=0.165\syntheticsurfacefullwidth,
            xlabel={$x_1$}, ylabel={$x_2$}, zlabel={$f_3(x_1,x_2)$},
            view={60}{30},
            colormap/viridis,
            xmin=-1, xmax=1,
            ymin=-1, ymax=1,
            samples=30, samples y=30,
          ]
          \addplot3[
            surf,
            domain=-1:1,
            domain y=-1:1,
          ]
          {crossfunc(x,y)};
        \end{axis}
      \end{tikzpicture}
    }
    \storesyntheticsurfaceplot{\syntheticcrossplotform}
    \global\setbox\syntheticsurfaceplotbox=\hbox{
      \begin{tikzpicture}[
          declare function={
            stybtang(\x,\y) = 0.5*((\x^4 - 16*\x^2 + 5*\x) + (\y^4 - 16*\y^2 + 5*\y));
          }
        ]
        \begin{axis}[
            width=0.22\syntheticsurfacefullwidth,
            height=0.165\syntheticsurfacefullwidth,
            xlabel={$x_1$}, ylabel={$x_2$}, zlabel={$f_4(x_1,x_2)$},
            view={60}{30},
            colormap/viridis,
            xmin=-5, xmax=5,
            ymin=-5, ymax=5,
            samples=30, samples y=30,
          ]
          \addplot3[
            surf,
            domain=-5:5,
            domain y=-5:5,
          ]
          {stybtang(x,y)};
        \end{axis}
      \end{tikzpicture}
    }
    \storesyntheticsurfaceplot{\syntheticstyblinskitangplotform}
    \global\syntheticsurfaceplotsreadytrue
  \fi
}

\definecolor{myblue}{RGB}{31,119,180}
\definecolor{myorange}{RGB}{255,127,14}

\usepackage[libertine]{newtxmath}

\newtheorem{theorem}{Theorem}
\newtheorem{lemma}{Lemma}

\newtheorem{assumption}{Assumption}

\usepackage{hyperref}
\usepackage{xcolor}
\hypersetup{
  colorlinks=true,
  linkcolor=[HTML]{B2001D},
  citecolor=[HTML]{27408B},
  urlcolor=[HTML]{27408B}
}
\usepackage{orcidlink}

\makeatletter
\let\arxiv@maketitlecmd\maketitle
\let\arxiv@maketitleinternal\@maketitle
\let\arxiv@thankscmd\thanks
\let\arxiv@maincite\cite
\newcommand{\supplementtableofcontents}{%
  \section*{Contents}%
  \@starttoc{stoc}%
}
\makeatother

\begin{document}
\title{Kolmogorov-Arnold Classifier Systems\\ as Universal Approximators}
\author{Hiroki Shiraishi$^{\orcidlink{0000-0001-8730-1276}}$,
  Hisao Ishibuchi$^{\orcidlink{0000-0001-9186-6472}}$,~\IEEEmembership{Fellow,~IEEE},
  and Masaya Nakata$^{\orcidlink{0000-0003-3428-7890}}$,~\IEEEmembership{Member,~IEEE}
  \thanks{
    Manuscript received 31 March 2026; revised 6 August 2026; accepted 11 September 2026.
  This work was supported by
  JSPS KAKENHI (Grant Nos. JP23KJ0993, JP25K03195),
   National Natural Science Foundation of China (Grant No. 62376115), and Guangdong Provincial Key Laboratory (Grant No. 2020B121201001). (\textit{Corresponding authors: Hisao Ishibuchi; Masaya Nakata.})}
  \thanks{Hiroki Shiraishi was with the Graduate School of Engineering Science, Yokohama National University, Yokohama 240-8501, Japan, at the time of manuscript submission on 31 March 2026. He is now with the Digital Healthcare Research Department, Hitachi, Ltd., Tokyo 185-8601, Japan (e-mail: hiroki.shiraishi.we@hitachi.com).}
  \thanks{Hisao Ishibuchi is with the Department of Computer Science and Engineering, Southern University of Science and Technology, Shenzhen 518055, China (e-mail: hisao@sustech.edu.cn).}
  \thanks{Masaya Nakata is with the Faculty of Engineering, Yokohama National University, Yokohama 240-8501, Japan (e-mail: nakata-masaya-tb@ynu.ac.jp).}
}

\markboth{IEEE TRANSACTIONS ON EVOLUTIONARY COMPUTATION, DOI: 10.1109/TEVC.2026.3736664, SEPTEMBER 2026}{
  SHIRAISHI \MakeLowercase{{\em et al.}}:~Kolmogorov-Arnold Classifier Systems as Universal Approximators
}
\maketitle

\begin{abstract}

  As the input dimension $n$ grows, rule-based machine learning, such as Learning Classifier Systems (LCSs), faces a fundamental scalability bottleneck for function approximation: both rule count and parameter count grow exponentially with $n$. {Traditional} LCSs partition the $n$-dimensional input space directly, requiring $\mathcal{O}(m^n)$ rules for adequate coverage, where $m$ is the per-variable resolution.\label{r3-2-1} This article breaks from this paradigm by reorganizing rules dimension-wise, guided by the Kolmogorov-Arnold representation theorem: any continuous $n$-dimensional function can be expressed as a finite superposition of one-dimensional functions. The proposed Kolmogorov-Arnold Classifier System (KACS) decomposes the target function into one-dimensional subproblems and assigns a dedicated ruleset to each, reducing the worst-case rule count from $\mathcal{O}(m^n)$ to $\mathcal{O}(mn^2)$ and replacing $n$-dimensional local models with one-dimensional models requiring only two parameters per rule, independent of $n$. We also provide the first constructive proof that an LCS, namely KACS, is a universal approximator for continuous functions on compact domains. Evaluated against a direct $n$-dimensional input space partitioning approach under otherwise identical conditions, KACS {achieves competitive accuracy in many settings} while using only 2\% to 40\% of the parameters.\label{r1-1-1} Our implementation is available at \url{https://github.com/YNU-NakataLab/KACS}.

\end{abstract}

\begin{IEEEkeywords}
  Learning classifier systems, evolutionary rule-based machine learning, Kolmogorov-Arnold representation theorem, universal approximation, dimension-wise decomposition
\end{IEEEkeywords}

\section{Introduction}
\label{sec:introduction}

\IEEEPARstart{F}{unction} approximation, also known as regression, is the problem of constructing an approximation $\hat{f}: \mathcal{X} \to \mathbb{R}$ from a set of training samples $\mathcal{D}_\mathrm{tr} = \{(\mathbf{x}_i, y_i)\}_{i=1}^{N_\mathrm{tr}}$, where $\mathbf{x}_i \in \mathcal{X} \subseteq \mathbb{R}^n$ and $y_i \in \mathbb{R}$. The goal is to {approximate an unknown target function $f$ such that $\hat{f}(\mathbf{x}) \approx f(\mathbf{x})$ for all $\mathbf{x}\in\mathcal{X}$.}\label{others-11} Ideally, $\hat{f}$ generalizes well to unseen inputs~\cite{bishop2006pattern,hastie2009elements}. Reliable function approximation is a core requirement in many real-world applications~\cite{goodfellow2016deep}, where a model supports not only data fitting but also decision making, monitoring, and control~\cite{sutton2018reinforcement,aastrom1995adaptive}. In such settings, approximation accuracy, robustness, and multi-dimensional scalability directly affect overall performance and safety~\cite{goodfellow2016deep}.

From an algorithmic viewpoint, function approximation becomes challenging when the target relationship $y=f(\mathbf{x})$ is complex. In particular, two difficulties are frequently observed:
\begin{itemize}
  \item \emph{Heterogeneity}: Different regions of the input space may exhibit different local behaviors, such as sharp changes, discontinuities, or region-specific nonlinearities \cite{vijayakumar2005incremental,jacobs1991adaptive}. In such cases, a single global model can be difficult to train and may require high complexity to represent all local patterns simultaneously~\cite{nguyen2018practical}.
  \item \emph{Multiple input variables}: Approximation becomes harder as the number of input variables increases (i.e., $n\geq 2$) \cite{bishop2006pattern}. Even when $f$ is continuous, multi-dimensional spaces require many samples for adequate coverage \cite{hinrichs2010curse}, and the model class must scale without an explosive increase in parameters \cite{hastie2009elements}.
\end{itemize}

To address heterogeneity, local modeling and divide-and-conquer strategies have been widely studied (e.g., regression trees~\cite{breiman2017classification}, Takagi-Sugeno fuzzy systems~\cite{takagi1985fuzzy}, and mixture-of-experts models~\cite{jacobs1991adaptive}). Among these approaches, Learning Classifier Systems (LCSs)~\cite{urbanowicz2017introduction} are a representative family. LCSs are an evolutionary rule-based machine learning paradigm that represents a model as a population of IF-THEN rules~\cite{siddique2024survey}. Each rule covers a local region (IF-part or \emph{antecedent}) and provides a local classification or approximation model (THEN-part or \emph{consequent}); the final output is obtained by aggregating the predictions of matching rules. Because of this rule-based, adaptive local modeling nature, LCSs have been used as automated tools for designing classifiers and approximators \cite{shiraishi2025xkan, preen2021autoencoding,dam2007neural}. For function approximation, to the best of our knowledge, all LCSs directly handle rules defined over the $n$-dimensional input space. For example, XCSF~\cite{wilson2002classifiers}, the most widely studied Michigan-style LCS, combines evolutionary search for rule-antecedents with gradient-based online updates of rule-consequents. SupRB~\cite{heider2023suprb}, a recently proposed Pittsburgh-style LCS, alternates between an evolutionary rule discovery phase and a batch supervised fitting of rule-consequents. Despite these architectural differences, both systems represent rules over the full $n$-dimensional input space.

While LCSs are effective in heterogeneous problems due to their local modeling nature \cite{urbanowicz2017introduction}, they still face a major scalability issue in multi-dimensional settings~\cite{debie2019implications,urbanowicz2009learning,heider2022metaheuristic,drugowitsch2008design}.\label{r2-1} From a partitioning perspective, if each of the $n$ input variables requires $m$ effective subdivisions to achieve good accuracy, the number of local regions (and thus rules) can grow as $\mathcal{O}(m^n)$ \cite{debie2019implications}. This growth makes both rule discovery and parameter learning increasingly costly \cite{stalph2012resource,debie2019implications}. Moreover, when the rule-consequent is parameterized in an $n$-dimensional form (e.g., linear regression), each rule must estimate multiple parameters, which can be difficult when each local region contains limited data \cite{bishop2006pattern}.

Based on these considerations, this article proposes a Kolmogorov-Arnold Classifier System (KACS), an LCS designed to address both heterogeneity and multi-dimensional scalability. KACS is guided by the Kolmogorov-Arnold (KA) representation theorem~\cite{kolmogorov1961representation,arnold2009functions}. The KA theorem states that any continuous function of $n$ variables can be represented as a finite superposition of one-dimensional functions~\cite{arnold2009functions,schmidt2021kolmogorov}:
\begin{equation}
  f(\mathbf{x}) = \sum_{q=1}^{2n+1} \Phi_q \left( \sum_{p=1}^{n} \psi_{q,p}(x_p) \right),
  \label{eq:ka_theorem}
\end{equation}
where $\psi_{q,p}$ are inner one-dimensional functions and $\Phi_q$ are outer one-dimensional functions.

KACS follows this KA structure and converts the original $n$-dimensional approximation task into learning a collection of one-dimensional functions: the inner functions $\{\psi_{q,p}\}$ and the outer functions $\{\Phi_q\}$. Instead of evolving an $n$-dimensional ruleset, KACS focuses on learning one-dimensional rulesets for these functions, which are then combined according to \eqref{eq:ka_theorem}. Each one-dimensional ruleset is used to represent an inner or outer function. Thus, the total number of required rulesets is $n(2n +1) + (2n + 1) = 2n^2 + 3n + 1$. If each ruleset has $m$ rules, the total number of rules is $2mn^2 + 3mn + m$. This design has two advantages. First, under a fixed per-variable resolution (i.e., roughly $m$ effective subdivisions per variable), the number of required rules is reduced from $\mathcal{O}(m^n)$ to $\mathcal{O}(mn^2)$ because KACS allocates rules to one-dimensional intervals rather than to $n$-dimensional regions (cf. Fig.~\ref{fig:xcsf_vs_kacs}) {\footnote{{Here, $m$ is an analytical measure of the effective resolution required by a problem, not a parameter set directly in XCSF or KACS. In practice, this resolution emerges from evolved rule coverage under the selected population budget. A larger effective $m$ can capture finer variation but increases the rule count exponentially in $n$ for XCSF and only linearly in $m$ for KACS.}}}.\label{r3-5} Second, KACS replaces $n$-dimensional rule-consequent models with one-dimensional models, reducing the number of parameters per local model and making parameter estimation more reliable from the same training data~\cite{bishop2006pattern}. Beyond these practical advantages, this article establishes the theoretical expressive power of KACS by proving that it is a universal approximator for continuous functions on compact domains (cf. Theorem \ref{thm:uat})~\cite{cybenko1989approximation}. Specifically, we prove that for any continuous target function $f$ and any desired accuracy level, there always exists a finite KACS model $\hat{f}$ that can approximate $f$ within that accuracy.

Our contributions are as follows:
\begin{enumerate}
  \item We propose KACS, a new paradigm for rule-based machine learning in which rules are organized dimension-wise rather than by partitioning only the input space (cf. Section~\ref{sec:kacs}). Unlike conventional LCSs (e.g., XCSF) that perform divide-and-conquer solely in the $n$-dimensional input space, KACS decomposes the problem into one-dimensional subproblems via the KA theorem, allocates dedicated rulesets to each subproblem, and composes them as a finite superposition. This reduces the worst-case rule count from $\mathcal{O}(m^n)$ to $\mathcal{O}(mn^2)$.\label{others-1}

  \item We prove that KACS is a universal approximator for continuous functions on compact domains.\label{others-2} To our knowledge, this is the first universal approximation proof for any LCS (cf. Section~\ref{sec:uat}).

  \item We empirically validate the proposed paradigm by holding all design choices constant except rule dimensionality. Specifically, we benchmark KACS against XCSF on artificial and real-world datasets (cf. Section~\ref{sec:experiments}). This controlled comparison {evaluates KA-based rule organization under otherwise comparable LCS settings,}\label{r1-2-1} and the results show consistent reductions in both approximation error and model complexity.
\end{enumerate}

{It should be noted that compactness and transparency are important practical properties of LCSs~\cite{heider2023suprb}. Predictive accuracy is not the sole objective of regression; when the target function is known, fidelity to its underlying structure may also be important. This article focuses on local modeling and scalability rather than on a comparative evaluation of global interpretability. Although individual KACS rules are one-dimensional, we do not claim that the composed KACS model is more globally interpretable than XCSF solely because it uses fewer rules or parameters. A systematic evaluation of global interpretability remains future work.}\label{r2-2}\label{r3-1-1}

The rest of this article is organized as follows. Section~\ref{sec:preliminaries} {describes XCSF and the KA theorem.}\label{others-3} Section~\ref{sec:related} reviews related work. Section~\ref{sec:kacs} presents KACS. Section~\ref{sec:uat} establishes the universal approximation theorem of KACS. Section~\ref{sec:experiments} reports experimental results, discussion, and analysis. Finally, Section~\ref{sec:conclusion} concludes this article.

\section{Preliminaries}
\label{sec:preliminaries}

\subsection{XCSF Classifier System}
\label{sec:xcsf}

XCSF is one of the most successful Michigan-style LCSs for function approximation. It incrementally optimizes a population $\mathcal{P}$ of IF-THEN rules online, updating and evolving rules as new samples arrive. These rules collectively partition the input space, and each rule covers a local region with a linear prediction model. We next provide a brief overview of XCSF. For further details on XCSF, see \cite{wilson2002classifiers,preen2021autoencoding}; {for complementary perspectives on LCSs, see \cite{drugowitsch2008design,heider2022metaheuristic,lanzi2025proposal}.}\label{r2-3}

\subsubsection{Rule Representation}

An $n$-dimensional XCSF rule\footnote{Due to historical background, rules in LCSs, including XCSF, are called \textit{classifiers}, though they work as approximators. To avoid this potential confusion, we use the term \textit{rules} throughout this article.} $cl_k$ is written as
\begin{equation}
  cl_k:
  \textbf{ IF } \mathbf{x}\in C_k= \prod_{i=1}^{n}\left[l_{k,i}, u_{k,i}\right]
  \textbf{ THEN } P_k(\mathbf{x}) = \mathbf{w}_k^\top \mathbf{x}',
  \label{eq:xcsf_rule}
\end{equation}
where the antecedent $C_k$ is an $n$-dimensional hyperrectangle and the consequent $P_k$ is a linear model with weight vector $\mathbf{w}_k \in \mathbb{R}^{n+1}$ and the augmented input vector $\mathbf{x}' = (1, x_1, \ldots, x_n)^\top$~\cite{wilson2002classifiers}. {Alternative antecedent representations, including hyperellipsoids, are discussed in Section~\ref{sec:related_scalability}.}\label{r3-7} Each rule $cl_k$ maintains seven bookkeeping parameters: (i) a fitness $F_k \in (0,1]$, which represents the relative usefulness of the rule; (ii) a prediction error $\epsilon_k \ge 0$, which measures the absolute prediction error of the rule; (iii) an accuracy $\kappa_k\in(0,1]$, which is computed from $\epsilon_k$ and quantifies the rule accuracy; (iv) a match set size estimate $\mathrm{ms}_k\ge 1$, which estimates the average size of the match set in which the rule appears; (v) an experience $\mathrm{exp}_k\in\mathbb{N}_0$, which counts how many updates the rule has received; (vi) a numerosity $\mathrm{num}_k\in\mathbb{N}$, which records how many copies of the rule are represented by this rule; and (vii) a time stamp $\mathrm{ts}_k\in\mathbb{N}_0$, which records when a steady-state genetic algorithm (GA) was last applied to the rule~\cite{wilson1995xcs,butz2002algorithmic}.

\subsubsection{Algorithm}

XCSF processes one training sample $(\mathbf{x}_i, y_i)$ per iteration. First, it forms the \emph{match set} $\mathcal{M} = \{cl_k \in \mathcal{P} \mid \mathbf{x}_i \in C_k\}$. \footnote{{In the LCS literature, the population and match sets are conventionally denoted by $[P]$ and $[M]$, respectively~\cite{urbanowicz2017introduction}; this article instead uses $\mathcal{P}$ and $\mathcal{M}$ to follow standard mathematical set notation.}}\label{r3-8} If $\mathcal{M}$ is empty, a \emph{covering} operator generates a new rule whose hyperrectangular antecedent is centered near $\mathbf{x}_i$ with random width controlled by a half-width parameter $r_0$; each dimension is independently set to the full domain $[0,1]$ with probability $P_\#$~\cite{nakata2020learning}.

Each rule $cl_k \in \mathcal{M}$ then computes its prediction $\hat{y}_k = \mathbf{w}_k^\top \mathbf{x}'$ and updates its weight vector using a gradient-based method (e.g., the Widrow-Hoff delta rule~\cite{widrow1960adaptive}, Adam~\cite{kingma2015adam}), based on the rule-specific error $y_i - \hat{y}_k$. We use Adam in XCSF to enable a fair comparison with KACS, which also uses Adam for weight updates. The prediction error $\epsilon_k$ is updated as a running mean of $|y_i - \hat{y}_k|$, and the accuracy $\kappa_k$ is computed from $\epsilon_k$ relative to a threshold $\epsilon_0$. The fitness $F_k$ is updated within $\mathcal{M}$ in proportion to accuracy and numerosity, so accurate rules accumulate high fitness over time~\cite{wilson1995xcs}. The system output is the fitness-weighted average $\hat{f}(\mathbf{x}_i) = \sum_{k \in \mathcal{M}} P_k(\mathbf{x}_i) F_k \,/\, \sum_{k \in \mathcal{M}} F_k$~\cite{wilson2002classifiers}.

The GA is applied to $\mathcal{M}$ when the average time since the last GA application across rules in $\mathcal{M}$ exceeds $\theta_\mathrm{GA}$~\cite{butz2002algorithmic}. Two parents (i.e., two existing rules in $\mathcal{M}$) are chosen by tournament selection; two offspring (i.e., two new rules) are created by uniform antecedent crossover and bound mutation. The created offspring are added to the current population $\mathcal{P}$ unless a parent \emph{subsumes} them: a parent subsumes an offspring when it is both more general (antecedent contains the offspring's) and sufficiently accurate and experienced~\cite{wilson1998generalization}. When the total numerosity exceeds the population size limit $N$, rules with low fitness-adjusted deletion votes are deleted by roulette-wheel selection~\cite{wilson1995xcs,butz2002algorithmic}.

\subsection{Kolmogorov-Arnold Representation Theorem}
\label{sec:ka_theorem}

The Kolmogorov-Arnold (KA) representation theorem~\cite{kolmogorov1961representation,arnold2009functions} gives a fundamental result about the structure of multivariate continuous functions.

\begin{theorem}[Kolmogorov-Arnold Representation Theorem]
  \label{thm:ka}
  Let $n\ge 2$ and $\mathcal{K} = [0,1]^n$. For any continuous function $f \in C(\mathcal{K})$, there exist continuous one-dimensional functions $\psi_{q,p} : [0,1] \to \mathbb{R}$ and $\Phi_q : \mathbb{R} \to \mathbb{R}$ such that
  \begin{equation}
    f(\mathbf{x}) = \sum_{q=1}^{2n+1}
    \Phi_q\!\left(\sum_{p=1}^{n} \psi_{q,p}(x_p)\right),
    \tag{\ref{eq:ka_theorem}}
  \end{equation}
  where $\psi_{q,p}$ are called \emph{inner functions} and $\Phi_q$ are called \emph{outer functions}.
\end{theorem}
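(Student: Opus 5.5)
We prove Theorem~\ref{thm:ka} by the classical Kolmogorov--Lorentz--Sprecher route. The inner functions $\psi_{q,p}$ are built once, independently of $f$. The outer functions $\Phi_q$ are then obtained as a uniformly convergent series produced by an iterative error-reduction scheme.

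The first step, which carries most of the difficulty, is to construct continuous, strictly increasing inner functions $\psi_{q,p}:[0,1]\to[0,1]$ with a separation property. Write $\psi_{q,p}=\lambda_p\,\psi(x_p+q a)$, where $\psi$ is one fixed monotone function, $a>0$ is a small shift, and $\lambda_1,\dots,\lambda_n$ are rationally independent positive constants. For each refinement level $k$, partition $[0,1]$ into families of closed intervals of length about $10^{-k}$, separated by small gaps; the family for index $q$ is the base family shifted by $qa$. Form the induced cubes $S^{k}_{q,\mathbf{i}}=\prod_p I^{k}_{q,i_p}$. We need two properties.
\begin{itemize}
  \item[(a)] For every $k$ and every $\mathbf{x}\in[0,1]^n$, $\mathbf{x}$ lies in cubes $S^k_{q,\mathbf{i}}$ for at least $n+1$ of the $2n+1$ indices $q$. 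This holds because each coordinate falls in a gap for at most one shift, so at most $n$ shifts are lost.
  \item[(b)] For fixed $q$ and $k$, the images $\sum_p\psi_{q,p}(S^k_{q,\mathbf{i}})$ are pairwise disjoint intervals as $\mathbf{i}$ varies.
\end{itemize}
Property (b) is obtained by defining $\psi$ inductively on the dyadic or decimal grid: at each level it is adjusted only by small amounts, and the rational independence of the $\lambda_p$ keeps the finitely many image intervals apart. Showing that this induction can continue at every level while the limit stays continuous and strictly increasing is the main obstacle. Our first attempt follows Lorentz's formulation. Sprecher's explicit construction is the fallback.

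Second, we establish the key lemma. Let $\theta=\frac{n}{n+1}<\gamma<1$. For every $g\in C(\mathcal{K})$ there exist continuous $\chi_q$ with $\|\chi_q\|_\infty\le\frac{1}{n+1}\|g\|_\infty$ and
\begin{equation*}
\Bigl\|g-\sum_{q}\chi_q\circ\textstyle\sum_p\psi_{q,p}\Bigr\|_\infty\le\gamma\|g\|_\infty .
\end{equation*}
To prove it, pick $k$ large enough that $g$ oscillates by at most $\delta\|g\|_\infty$ on every cube $S^k_{q,\mathbf{i}}$. On the image interval of $S^k_{q,\mathbf{i}}$, let $\chi_q$ equal $\frac{1}{n+1}g(\text{center of the cube})$. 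Between these disjoint intervals, extend $\chi_q$ linearly, which is possible by (b).

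Now fix $\mathbf{x}$. By (a), at least $n+1$ indices $q$ contribute about $\frac{1}{n+1}g(\mathbf{x})$ each. The remaining at most $n$ indices contribute at most $\frac{1}{n+1}\|g\|_\infty$ each in absolute value. The error is therefore bounded by $\bigl(\frac{n}{n+1}+\delta\bigr)\|g\|_\infty\le\gamma\|g\|_\infty$ once $\delta$ is small.

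Third, we iterate. Set $g_0=f$ and $g_{r+1}=g_r-\sum_q\chi_q^{(r)}\circ\sum_p\psi_{q,p}$, so that $\|g_r\|_\infty\le\gamma^r\|f\|_\infty$ and $\|\chi^{(r)}_q\|_\infty\le\gamma^r\|f\|_\infty/(n+1)$. Define $\Phi_q=\sum_{r\ge0}\chi^{(r)}_q$. This series converges uniformly on $\mathbb{R}$; each $\chi_q^{(r)}$ is taken constant outside the bounded image range, so each is defined on all of $\mathbb{R}$. Hence every $\Phi_q$ is continuous. The telescoping identity gives $f-\sum_q\Phi_q\circ\sum_p\psi_{q,p}=\lim_r g_r=0$, which is exactly \eqref{eq:ka_theorem}.

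Since the paper only quotes this classical theorem~\cite{kolmogorov1961representation,arnold2009functions}, it is also acceptable to cite these sources for the construction of the $\psi_{q,p}$ and give only the iteration argument in full.
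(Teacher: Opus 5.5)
The paper does not prove Theorem~\ref{thm:ka}. It quotes the theorem from Kolmogorov and Arnold and uses it only as a black box in Step~2 of the proof of Theorem~\ref{thm:uat}. Your proposal follows a different, more detailed route: the classical Kolmogorov--Lorentz argument with Sprecher-type inner functions. Its second half is sound. The reduction lemma is correct, and so is the choice of $\chi_q$ as constants on finitely many disjoint compact image intervals joined linearly, which keeps $\|\chi_q\|_\infty\le\|g\|_\infty/(n+1)$. The geometric iteration with the Weierstrass M-test is also correct. You also use the fact that the paper's form allows a separate $\Phi_q$ for each $q$. This means (b) is needed only within a fixed $q$, a weaker requirement than what Lorentz needs for a single outer function.

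Two small repairs are needed in the reduction lemma:
\begin{itemize}
\item When $m>n+1$ indices are good, those terms sum to about $\frac{m}{n+1}g(\mathbf{x})$, not $g(\mathbf{x})$. The overshoot $\frac{m-n-1}{n+1}|g(\mathbf{x})|$ is offset because there are only $2n+1-m$ bad indices. This gives the bound $\bigl(\frac{n}{n+1}+2\delta\bigr)\|g\|_\infty$, and you should say so.
\item Shifted cubes can stick out of $\mathcal{K}$, so sample $g$ at a point of $S^k_{q,\mathbf{i}}\cap\mathcal{K}$ rather than at the center.
\end{itemize}

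As a standalone proof, the gap is the one you name yourself: the actual substance of the theorem, a single $\psi$ satisfying (a) and (b) at every rank simultaneously, is only sketched. On that point you stand where the paper does, on a citation. That is acceptable here, but you should say it plainly.

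If you want to carry it out, your setup needs adjusting:
\begin{itemize}
\item With a shift $qa$ independent of $k$, the $2n+1$ shifted gap families must be disjoint modulo the cell length at every rank.
\item The ranks must nest: every rank-$(k+1)$ interval must lie inside a rank-$k$ interval or a rank-$k$ gap. This nesting is what lets $\psi$ stay nearly constant on intervals yet increase across gaps. It forces the gaps to be a non-negligible fraction of each cell.
\item Sprecher achieves both with cells of length $\gamma^{-k}$, gap fraction $1/(\gamma-1)$, and $a=1/(\gamma(\gamma-1))$. The residue of $a$ modulo $\gamma^{-k}$ is $\gamma^{-k}/(\gamma-1)$ for every $k$. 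The gap families are then disjoint only if $\gamma\ge 2n+2$, so your decimal grid fails for larger $n$.
\item The explicit formula for $\psi$ in Sprecher's numerical implementation is not continuous, as K\"oppen observed. The usable version is K\"oppen's corrected $\psi$, whose continuity and monotonicity were proved by Braun and Griebel (2009).
\end{itemize}

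A simpler route avoids the infinite induction entirely: Kahane's Baire-category argument with $\psi_{q,p}=\lambda_p\phi_q(x_p)$. The paper's statement does not require a single $\psi$, so this form is allowed.
\begin{itemize}
\item Fix $f$. The set of tuples $(\phi_1,\dots,\phi_{2n+1})$ of continuous nondecreasing functions for which your reduction lemma holds with strict inequality is open.
\item That set is dense, because (a) and (b) at one fixed rank are easy to arrange by a small perturbation with rational plateau values.
\item Intersecting over a countable dense set of $f$ gives inner functions that work for all $f$, with a slightly larger contraction factor.
\end{itemize}
Your iteration then applies unchanged.
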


Although Theorem~\ref{thm:ka} formally defines each outer function as $\Phi_q : \mathbb{R} \to \mathbb{R}$, in practice $\Phi_q$ is evaluated only at values taken by $z_q(\mathbf{x}) = \sum_{p=1}^{n} \psi_{q,p}(x_p)$ as $\mathbf{x}$ ranges over $\mathcal{K}$. Hence, $\Phi_q$ is only required on $\mathcal{K}^{(q)}_z \coloneqq z_q(\mathcal{K}) \subseteq \mathbb{R}$. This image is a compact interval: it is compact because the continuous image of a compact set is compact~\cite{rudin1976principles}, and it is an interval because the continuous image of the connected set $\mathcal{K}$ is connected~\cite{rudin1976principles}, and every compact connected subset of $\mathbb{R}$ is a closed bounded interval~\cite{rudin1976principles}.

Theorem~\ref{thm:ka} states that any $n$-dimensional continuous function can be represented exactly using only one-dimensional functions. Specifically, it requires $n(2n+1)$ inner functions and $2n+1$ outer functions, giving $2n^2 + 3n + 1$ one-dimensional functions in total. This is quadratic in $n$, a much more favorable scaling than the exponential growth that appears in direct $n$-dimensional partitioning~\cite{debie2019implications}.

Two remarks are worth noting for the use of this theorem in KACS. First, the domain of $f$ can always be mapped to $[0,1]^n$ by a linear rescaling without loss of generality. Second, the theorem guarantees the \emph{existence} of the representation, but does not specify a unique set of $\psi_{q,p}$ and $\Phi_q$~\cite{schmidt2021kolmogorov}; in practice, these functions need to be learned from data \cite{liu2025kan}.

\section{Related Work}
\label{sec:related}

\subsection{Multi-Dimensional Scalability in LCSs}
\label{sec:related_scalability}

The multi-dimensional scalability problem in LCSs, including XCSF~\cite{wilson2002classifiers}, is well known. To approximate a target function with $m$ effective subdivisions per variable in $n$ dimensions, XCSF needs to maintain $\mathcal{O}(m^n)$ rules in the worst case~\cite{debie2019implications}. Several directions have been explored to ease this burden.

A first direction focuses on the \emph{antecedent representation} (for a recent comprehensive review, cf. \cite{shiraishi2026adapting}). The standard hyperrectangular representation{\cite{wilson1999xcsr,wilson2000mining}} is axis-aligned, which forces the system to use many small rules when the true subproblem boundaries are tilted or curved~\cite{shiraishi2022can}.\label{r2-4} Replacing hyperrectangles with hyperellipsoids via kernel functions reduces the rule count for curved boundaries~\cite{butz2008function}. Neural network-based conditions can represent even more flexible boundaries but raise the cost of rule evaluation and evolutionary search~\cite{bull2002accuracy}. Code-fragment-based conditions, inspired by genetic programming, handle multi-dimensional problems where only a few variables are relevant~\cite{arif2017solving}, {thereby yielding shorter and more interpretable conditions. Such representations are practically effective when real-world data have low effective dimensionality. The worst case instead corresponds to a target with strong joint dependence on all variables and fine-scale variation throughout the input space; for such problems, achieving a fixed per-variable resolution can still require exponentially many local models.}\label{r3-2-2}

A second direction applies \emph{dimensionality reduction} before passing the input to an LCS. Behdad et al.~\cite{behdad2011pca} showed that combining an LCS with principal component analysis (PCA) substantially reduces the required population size while maintaining accuracy on multi-dimensional classification tasks. Several approaches~\cite{shiraishi2021increasing,yatsu2023exploring,schonberner2025dimensionality} have also been proposed to reduce the dimensionality of multi-dimensional inputs using autoencoders. A shared limitation is that preserving high accuracy in the compressed representation is difficult, and any compression error can propagate into an LCS.

A third direction uses \emph{feature selection} to reduce the effective input dimension before or during learning. Urbanowicz and Moore~\cite{urbanowicz2015exstracs} introduced ExSTraCS 2.0, which improves scalability by incorporating expert knowledge scores based on TuRF~\cite{moore2007tuning} into the covering and mutation operators so that rules are biased toward more relevant features. ExSTraCS 2.0 was motivated in part by biomedical classification settings, where a multi-dimensional input may contain many irrelevant genetic attributes with only a small subset being predictive. This approach works well when many input variables are genuinely irrelevant to the target output. However, in function approximation, variables may interact and contribute jointly to the target output~\cite{friedman2008predictive}. In such cases, feature selection cannot reduce the exponential rule count increase~\cite{bellman1966dynamic,debie2019implications}, and the multi-dimensional scalability problem remains.

{Unlike} these approaches, KACS reorganizes rules according to the KA theorem (i.e., Theorem~\ref{thm:ka}), reducing the worst-case rule count from $\mathcal{O}(m^n)$ to $\mathcal{O}(mn^2)$ without preprocessing.\label{others-4}

\subsection{Kolmogorov-Arnold Networks and Variants}
\label{sec:related_kan}

A representative recent use of the KA theorem in machine learning is the Kolmogorov-Arnold Network (KAN)~\cite{liu2025kan}. KAN brings the KA theorem into the neural network paradigm by replacing the fixed activation functions of a multi-layer perceptron (MLP) \cite{cybenko1989approximation} with learnable univariate B-spline functions on the edges (i.e., connections between neurons).

Since then, several other KA-inspired variants have been proposed. Wav-KAN~\cite{bozorgasl2024wav} replaces B-spline activation functions with wavelets to enable efficient capture of both high-frequency and low-frequency components through multiresolution analysis. Ensemble-KAN~\cite{de2024ensemble} builds multiple KANs, each using a different subset of input features, and aggregates their predictions. Chebyshev KAN~\cite{ss2024chebyshev} uses Chebyshev polynomials for improved parameter efficiency. Temporal-KAN~\cite{genet2025tkan} and Federated-KAN~\cite{zeydan2025f} extend the framework to sequential and federated learning settings, respectively.

Within the LCS community, X-KAN~\cite{shiraishi2025xkan} exploits the divide-and-conquer nature of LCSs to optimize multiple local KAN models through evolutionary search. By placing a KAN model in the consequent of each $n$-dimensional XCSF rule, X-KAN achieves effective approximation of discontinuous functions and functions with high local complexity. These two types of functions are difficult for a single global KAN.

As explained in the brief literature review above, KAN and its variants apply the KA theorem to neural architecture design, and X-KAN uses it to design local models in rule-consequents. Our KACS in this article takes a different approach: it first applies the KA theorem to the rule organization itself within a rule-learning framework. That is, our approach in KACS and the above-mentioned approaches in existing studies operate at structurally different levels of KA application (i.e., neural architecture, rule-consequents, and rule organization).

\subsection{Universal Approximation in Rule-Based Systems}
\label{sec:related_uat}

Universal approximation has been studied across a range of model classes. For neural networks, Cybenko~\cite{cybenko1989approximation} and Hornik et al.~\cite{hornik1989multilayer} showed that a single hidden layer of sigmoidal units is sufficient to approximate any continuous function on a compact set. Wang and Mendel~\cite{wang1992fuzzy} proved an analogous result for fuzzy systems using the Stone-Weierstrass theorem~\cite{stone1948generalized}. Ying~\cite{ying1994sufficient} extended this to a broader class of fuzzy inference systems covering product, minimum, and other common t-norms. Kosko~\cite{kosko2002fuzzy} gave a separate proof for additive fuzzy systems, and Castro~\cite{castro2002fuzzy} further showed that fuzzy logic controllers with arbitrary membership functions and a broad class of inference engines are universal approximators.

A practically important question is how the required number of rules scales with dimension $n$. Standard fuzzy systems with a uniform grid still need $\mathcal{O}(m^n)$ rules, the same exponential cost as XCSF~\cite{bellman1966dynamic,debie2019implications}. Wang~\cite{wang1998universal} showed that a hierarchical fuzzy system (HFS), which consists of several low-dimensional fuzzy systems connected in sequence, is a universal approximator whose rule count grows only linearly with $n$. This result is important because it shows that the exponential rule count is not an essential requirement for universal approximation.

KACS reaches a similar conclusion via the KA decomposition rather than a manually designed hierarchy, such as HFS; it achieves $\mathcal{O}(mn^2)$ rules in a mathematically grounded two-layer structure. Furthermore, to our knowledge, no prior LCS has been proved to be a universal approximator. In contrast, this article provides the first constructive universal approximation proof for an LCS, namely KACS.

\section{Kolmogorov-Arnold Classifier System}
\label{sec:kacs}

{KACS has two main properties. First, it organizes the rule population into one-dimensional inner and outer submodels according to the KA structure. Second, it learns rule consequents through system-level backpropagation across the KA structure. The following subsections detail these characteristics.}\label{r3-12}

\usetikzlibrary{positioning, arrows.meta, backgrounds, fit}

\subsection{Overview}
\label{sec:kacs_overview}
\begin{figure*}[t]
  \centering
  \begin{tikzpicture}[
      font=\small,
      >=Stealth,
      inode/.style ={circle, draw=gray!60, fill=gray!12, thick,
      minimum size=0.62cm, inner sep=0pt},
      inbox/.style ={draw=blue!65!black, fill=blue!10, thick, rounded corners=2pt,
        minimum width=1.42cm, minimum height=0.46cm,
      inner sep=2pt, font=\scriptsize, text=blue!75!black},
      outbox/.style={draw=orange!70!black, fill=orange!15, thick, rounded corners=2pt,
        minimum width=1.42cm, minimum height=0.46cm,
      inner sep=2pt, font=\scriptsize, text=orange!82!black},
      snode/.style ={circle, draw=gray!60, fill=gray!12, thick,
      minimum size=0.52cm, inner sep=0pt, font=\scriptsize},
      garr/.style={->, thick, gray!55},
      barr/.style={->, thick, blue!55!black},
      oarr/.style={->, thick, orange!75!black},
      arr/.style ={->, thick},
    ]

    \begin{scope}

      \fill[red!3,  rounded corners=0pt] (-0.25, 0.90) rectangle (6.6, -6.6);
      \draw[red!20, rounded corners=0pt, thick] (-0.25, 0.90) rectangle (6.6, -6.6);

      \node[font=\large\bfseries, text=red!65!black]  at (3.2, 0.55) {XCSF};
      \node[font=\footnotesize\itshape, text=red!50!black]
      at (3.2, 0.10) {$n$-dimensional rules\quad$(n=2)$};

      \def\gL{1.3}\def\gB{-4.80}\def\gS{3.5}
      \fill[white] (\gL,\gB) rectangle (\gL+\gS,\gB+\gS);

      \fill[red!18,fill opacity=0.72]
      (\gL+0.03*\gS,\gB+0.07*\gS) rectangle (\gL+0.62*\gS,\gB+0.65*\gS);
      \draw[red!55!black,thick]
      (\gL+0.03*\gS,\gB+0.07*\gS) rectangle (\gL+0.62*\gS,\gB+0.65*\gS);
      \node[font=\scriptsize\bfseries,text=red!65!black]
      at (\gL+0.18*\gS,\gB+0.20*\gS) {$cl_1$};
      \fill[red!22,fill opacity=0.72]
      (\gL+0.35*\gS,\gB+0.04*\gS) rectangle (\gL+0.96*\gS,\gB+0.62*\gS);
      \draw[red!55!black,thick]
      (\gL+0.35*\gS,\gB+0.04*\gS) rectangle (\gL+0.96*\gS,\gB+0.62*\gS);
      \node[font=\scriptsize\bfseries,text=red!65!black]
      at (\gL+0.80*\gS,\gB+0.18*\gS) {$cl_2$};
      \fill[red!18,fill opacity=0.72]
      (\gL+0.05*\gS,\gB+0.40*\gS) rectangle (\gL+0.65*\gS,\gB+0.97*\gS);
      \draw[red!55!black,thick]
      (\gL+0.05*\gS,\gB+0.40*\gS) rectangle (\gL+0.65*\gS,\gB+0.97*\gS);
      \node[font=\scriptsize\bfseries,text=red!65!black]
      at (\gL+0.20*\gS,\gB+0.82*\gS) {$cl_3$};
      \fill[red!22,fill opacity=0.72]
      (\gL+0.42*\gS,\gB+0.38*\gS) rectangle (\gL+0.97*\gS,\gB+0.96*\gS);
      \draw[red!55!black,thick]
      (\gL+0.42*\gS,\gB+0.38*\gS) rectangle (\gL+0.97*\gS,\gB+0.96*\gS);
      \node[font=\scriptsize\bfseries,text=red!65!black]
      at (\gL+0.80*\gS,\gB+0.82*\gS) {$cl_4$};
      \fill[red!28,fill opacity=0.65]
      (\gL+0.23*\gS,\gB+0.23*\gS) rectangle (\gL+0.77*\gS,\gB+0.77*\gS);
      \draw[red!55!black,thick]
      (\gL+0.23*\gS,\gB+0.23*\gS) rectangle (\gL+0.77*\gS,\gB+0.77*\gS);
      \node[font=\scriptsize\bfseries,text=red!65!black]
      at (\gL+0.50*\gS,\gB+0.50*\gS) {$cl_5$};
      \draw[gray!45,thick] (\gL,\gB) rectangle (\gL+\gS,\gB+\gS);

      \draw[garr] (\gL,\gB-0.22) -- (\gL+\gS,\gB-0.22)
      node[right,font=\scriptsize,text=gray!70!black] {$x_1$};
      \draw[garr] (\gL-0.22,\gB) -- (\gL-0.22,\gB+\gS)
      node[above,font=\scriptsize,text=gray!70!black] {$x_2$};
      \node[font=\footnotesize\bfseries,text=red!60!black]
      at (\gL+0.5*\gS,\gB+\gS+0.30) {Population $\mathcal{P}$\;(2-D rules)};

      \node[inode] (xin) at (0.38,\gB+0.5*\gS) {$\mathbf{x}$};
      \node[font=\scriptsize,below=1pt of xin,text=gray!70!black] {$(x_1,x_2)$};
      \draw[barr] (xin) -- (\gL,\gB+0.5*\gS);
      \node[inode] (youtL) at (5.7,\gB+0.5*\gS) {$\hat{y}$};
      \draw[barr] (\gL+\gS,\gB+0.5*\gS) -- (youtL);

      \node[font=\scriptsize\normalshape,text=gray!65!black,align=center]
      at (3.2,-6.00)
      {Rule $cl_k:$ IF $\mathbf{x}\in[\mathbf{l},\mathbf{u}]\subseteq\mathbb{R}^{2}$
      \;THEN $P_k(\mathbf{x})=\mathbf{w}^{\!\top}\!\mathbf{x}'$};
      \node[font=\scriptsize\bfseries,text=red!65!black]
      at (3.2,-6.34) {\#Rules:\;$\mathcal{O}(m^n)$\quad \#Parameters per rule:\;$n+1=3$};

    \end{scope}

    \begin{scope}[xshift=7.3cm]

      \fill[blue!2,  rounded corners=0pt] (-0.25, 0.90) rectangle (9.8, -6.6);
      \draw[blue!20, rounded corners=0pt, thick] (-0.25, 0.90) rectangle (9.8, -6.6);

      \node[font=\large\bfseries, text=blue!65!black] at (4.8, 0.55) {KACS};
      \node[font=\footnotesize\itshape, text=blue!50!black]
      at (4.8, 0.10)
      {$1$-dimensional rules\quad$(n=2,\;2n{+}1=5$ channels$)$};

      \node[font=\footnotesize\bfseries, text=blue!65!black, anchor=west]
      at (1.8,-0.33) {Inner submodels \!$\left\{\mathcal{P}_{q,p}^\psi\right\}$};
      \node[font=\scriptsize\itshape, text=blue!52!black, anchor=west]
      at (1.8,-0.63) {domain\;$[0,1]$};
      \node[font=\scriptsize\itshape, text=blue!52!black, anchor=west]
      at (1.8,-0.92) {total\;$n(2n+1)=10$};

      \node[font=\footnotesize\bfseries, text=orange!72!black, anchor=west]
      at (4.95,-0.33) {Outer submodels \!$\left\{\mathcal{P}_{q}^\Phi\right\}$};
      \node[font=\scriptsize\itshape, text=orange!62!black, anchor=west]
      at (4.95,-0.63) {domain\;$\mathcal{K}_z^{(q)}$};
      \node[font=\scriptsize\itshape, text=orange!62!black, anchor=west]
      at (4.95,-0.92) {total\;$2n+1=5$};

      \node[inode] (x1) at (0.48,-1.83) {$x_1$};
      \node[inode] (x2) at (0.48,-5.00) {$x_2$};

      \fill[gray!7, rounded corners=3pt]  (1.60,-1.2) rectangle (6.78,-2.47);
      \draw[gray!25,rounded corners=3pt,thick] (1.60,-1.2) rectangle (6.78,-2.47);
      \fill[gray!7, rounded corners=3pt]  (1.60,-2.74) rectangle (6.78,-4.01);
      \draw[gray!25,rounded corners=3pt,thick] (1.60,-2.74) rectangle (6.78,-4.01);
      \fill[gray!7, rounded corners=3pt]  (1.60,-4.37) rectangle (6.78,-5.64);
      \draw[gray!25,rounded corners=3pt,thick] (1.60,-4.37) rectangle (6.78,-5.64);

      \node[font=\scriptsize\itshape,text=gray!55!black] at (1.92,-1.43) {$q{=}1$};
      \node[inbox]  (p11) at (3.18,-1.55) {$\mathcal{P}^{\psi}_{1,1}$};
      \node[inbox]  (p12) at (3.18,-2.11) {$\mathcal{P}^{\psi}_{1,2}$};
      \node[snode]  (z1)  at (4.72,-1.83) {$\Sigma$};
      \node[font=\scriptsize,below=-1pt of z1] {$\hat{z}_1$};
      \node[outbox] (f1)  at (5.95,-1.83) {$\mathcal{P}^{\Phi}_{1}$};

      \node[font=\scriptsize\itshape,text=gray!55!black] at (1.92,-3.37) {$q{=}2$};
      \node[inbox]  (p21) at (3.18,-3.09) {$\mathcal{P}^{\psi}_{2,1}$};
      \node[inbox]  (p22) at (3.18,-3.65) {$\mathcal{P}^{\psi}_{2,2}$};
      \node[snode]  (z2)  at (4.72,-3.37) {$\Sigma$};
      \node[font=\scriptsize,below=-1pt of z2] {$\hat{z}_2$};
      \node[outbox] (f2)  at (5.95,-3.37) {$\mathcal{P}^{\Phi}_{2}$};

      \foreach \xx in {3.18, 4.72, 5.95}{
        \node[text=gray!55!black,font=\large] at (\xx,-4.20) {$\vdots$};
      }
      \node[font=\scriptsize\itshape,text=gray!40!black] at (1.92,-4.20) {$\vdots$};

      \node[font=\scriptsize\itshape,text=gray!55!black] at (1.92,-4.95) {$q{=}5$};
      \node[inbox]  (p51) at (3.18,-4.72) {$\mathcal{P}^{\psi}_{5,1}$};
      \node[inbox]  (p52) at (3.18,-5.28) {$\mathcal{P}^{\psi}_{5,2}$};
      \node[snode]  (z5)  at (4.72,-5.00) {$\Sigma$};
      \node[font=\scriptsize,below=0pt of z5] {$\hat{z}_5$};
      \node[outbox] (f5)  at (5.95,-5.00) {$\mathcal{P}^{\Phi}_{5}$};

      \node[snode]  (ysum) at (7.90,-3.37) {$\Sigma$};
      \node[inode]  (yout) at (9.20,-3.37) {$\hat{y}$};

      \draw[barr] (x1) -- (p11);
      \draw[barr] (x1) to[out=-52, in=175] (p21);
      \draw[barr] (x1) to[out=-68, in=175] (p51);

      \draw[barr] (x2) to[out=78,  in=175] (p12);
      \draw[barr] (x2) to[out=78,  in=175] (p22);
      \draw[barr] (x2) -- (p52);

      \draw[barr] (p11) -- (z1);
      \draw[barr] (p12) -- (z1);
      \draw[barr] (p21) -- (z2);
      \draw[barr] (p22) -- (z2);
      \draw[barr] (p51) -- (z5);
      \draw[barr] (p52) -- (z5);

      \draw[oarr] (z1) -- (f1);
      \draw[oarr] (z2) -- (f2);
      \draw[oarr] (z5) -- (f5);

      \draw[oarr] (f1.east) to[out=0, in=105] (ysum);
      \draw[oarr] (f2.east) to[out=0, in=165] (ysum);
      \draw[oarr] (f5.east) to[out=0, in=-105] (ysum);

      \draw[arr] (ysum) -- (yout);

      \node[font=\scriptsize\normalshape, text=gray!65!black, align=center]
      at (4.8,-6.00)
      {Rule $cl_k$: IF $z\in[a,b]\subseteq\mathbb{R}$ \;THEN $P_k(z)=w_0+w_1 z$};
      \node[font=\scriptsize\bfseries, text=blue!65!black]
      at (4.8,-6.34)
      {\#Rules:\;$\mathcal{O}(mn^{2})$\quad \#Parameters per rule:\;$2$\;(independent of\;$n$)};

    \end{scope}

  \end{tikzpicture}
  \caption{{Conceptual} comparison of XCSF and KACS for $n=2$
    ($2n+1=5$ channels). {The left panel illustrates 2-D rule
      regions used by XCSF, where each rule covers a rectangle in the
    $(x_1,x_2)$ plane and maps} $\mathbf{x}$ directly to $\hat{y}$,
    {yielding} $\mathcal{O}(m^n)$ rules with $n+1$ consequent
    parameters per rule. {The right panel illustrates the KACS
    architecture, where prediction is decomposed into} one-dimensional
    inner submodels $\psi_{q,p}(x_p)$, intermediate sums $\hat{z}_q$,
    {and} one-dimensional outer submodels $\Phi_q(\hat{z}_q)$,
    {which are finally summed to obtain $\hat{y}$, yielding}
  $\mathcal{O}(mn^2)$ rules with two consequent parameters per rule.}
  \label{r3-4}
  \label{fig:xcsf_vs_kacs}
\end{figure*}

Fig.~\ref{fig:xcsf_vs_kacs} schematically illustrates XCSF and KACS. KACS is a Michigan-style LCS that inherits the structural framework of XCSF, including the covering mechanism, fitness-based prediction aggregation, GA, subsumption, and hyperparameter settings. By organizing rules via the KA decomposition (Theorem~\ref{thm:ka}), KACS not only suppresses the exponential growth in the number of rules with dimension $n$, but also reduces the difficulty of optimizing consequents by learning one-dimensional local models with only two parameters per rule, independent of $n$. Like XCSF, KACS optimizes its rules online, updating and evolving them when a new sample arrives.

{As visualized in Fig.~\ref{fig:xcsf_vs_kacs}, the KA decomposition in \eqref{eq:ka_theorem} reduces learning $f$ to learning $n(2n+1)$ inner functions $\{\psi_{q,p}: [0,1] \to \mathbb{R}\}$ and $(2n+1)$ outer functions $\{\Phi_q: \mathcal{K}^{(q)}_z \to \mathbb{R}\}$, all one-dimensional.}\label{r3-13} KACS maintains a dedicated ruleset, called a \emph{submodel}, for each one-dimensional function. All submodels live in a single population $\mathcal{P}$, partitioned into:
\begin{itemize}
  \item $n(2n+1)$ \emph{inner submodels} $\mathcal{P}^\psi_{q,p}$, each approximating $\psi_{q,p}$ using rules over the input domain $[0,1]$;
  \item $(2n+1)$ \emph{outer submodels} $\mathcal{P}^\Phi_q$, each approximating $\Phi_q$ using rules over the intermediate domain $\mathcal{K}^{(q)}_z$.
\end{itemize}
Given an input $\mathbf{x}$, KACS computes $\hat{y}$ through a two-stage feedforward pass. It then backpropagates the system-level loss through this structure to update the weights of all active rules jointly. This is a key departure from XCSF, which updates each rule independently using its own prediction error.

{Although each inner submodel processes one variable, KACS captures epistasis through the potentially nonlinear outer function $\Phi_q$ applied to $z_q=\sum_p\psi_{q,p}(x_p)$, which couples the dimensions. This is not bi-level optimization: active inner and outer consequents are jointly updated from the system-level loss, allowing gradients through $\Phi_q$ to coordinate the inner submodels. Theorem~\ref{thm:uat} guarantees representation of continuous interactions in principle, although finite capacity and imperfect optimization may limit their learning.}\label{r3-6-1} Algorithm~\ref{alg:kacs} summarizes one complete training iteration of KACS; the following subsections describe each component in detail.

\begin{algorithm}[t]
  \footnotesize
  \caption{One training iteration of KACS}
  \label{alg:kacs}
  \begin{algorithmic}[1]
    \Require Sample $(\mathbf{x}, y)$, population $\mathcal{P}$, iteration $t$
    \Ensure  Updated $\mathcal{P}$
    \Statex \texttt{Step 1: Feedforward}\hfill $\triangleright$ Section \ref{sec:kacs_feedforward}
    \For{$q = 1$ \textbf{to} $2n+1$}
    \For{$p = 1$ \textbf{to} $n$}
    \State Form $\mathcal{M}^\psi_{q,p}$
    \eqref{eq:inner_match};
    cover if empty \eqref{eq:cover_inner}
    \State Compute $\hat{\psi}_{q,p}(x_p)$
    \eqref{eq:kacs_inner}
    \EndFor
    \State Compute $\hat{z}_q$ as $\hat{z}_q \leftarrow
    \sum_{p} \hat{\psi}_{q,p}(x_p)$ \eqref{eq:kacs_zq}
    \State Form $\mathcal{M}^\Phi_q$
    \eqref{eq:outer_match};
    cover if empty \eqref{eq:cover_outer}
    \State Compute $\hat{\Phi}_q(\hat{z}_q)$
    \eqref{eq:kacs_outer}
    \EndFor
    \State Compute $\hat{y}$ as $\hat{y} \leftarrow
    \sum_{q} \hat{\Phi}_q(\hat{z}_q)$ \eqref{eq:kacs_output}
    \Statex
    \Statex \texttt{Step 2: Backpropagation}\hfill $\triangleright$ Section \ref{sec:kacs_backprop}
    \State Form $\mathcal{M}_\mathrm{act}$ as $\mathcal{M}_\mathrm{act}
    = (\bigcup_{q,p} \mathcal{M}^\psi_{q,p})
    \cup (\bigcup_{q} \mathcal{M}^\Phi_q)$ \eqref{eq:mact}
    \For{each $cl_k \in \mathcal{M}_\mathrm{act}$}
    \State Compute $\mathbf{g}_k$ via
    \eqref{eq:grad_outer} or \eqref{eq:grad_inner}
    \State Update $\mathbf{w}_k$ via Adam
    \eqref{eq:adam_m}--\eqref{eq:adam_w}
    \EndFor
    \Statex
    \Statex \texttt{Step 3: Parameter update} \hfill $\triangleright$ Section \ref{sec:kacs_params}
    \For{each $cl_k \in \mathcal{M}_\mathrm{act}$}
    \State Update $\mathrm{exp}_k$ as $\mathrm{exp}_k \leftarrow \mathrm{exp}_k + 1$
    \State Update $\epsilon_k$,
    $\kappa_k$,
    $F_k$,
    $\mathrm{ms}_k$
    within submodel $\mathcal{M}$ via \eqref{eq:kacs_err}--\eqref{eq:kacs_ms}
    \EndFor
    \Statex
    \Statex \texttt{Step 4: GA and subsumption}\hfill $\triangleright$ Section \ref{sec:kacs_ea}
    \For{each $\mathcal{M} \in
    \{\mathcal{M}^\psi_{q,p}\} \cup \{\mathcal{M}^\Phi_q\}$}
    \If{$t -
      {\sum_{cl_k \in \mathcal{M}}
      \mathrm{num}_k \cdot \mathrm{ts}_k}/
      {\sum_{cl_k \in \mathcal{M}} \mathrm{num}_k}
    > \theta_\mathrm{GA}$}
    \State Update $\mathrm{ts}_k$ as $\mathrm{ts}_k \leftarrow t$
    for $cl_k \in \mathcal{M}$
    \State Select $cl_{p_1}$, $cl_{p_2}$ by
    tournament
    \State Copy to $cl_{o_1}$, $cl_{o_2}$;
    apply crossover and mutation
    \State Initialize offspring parameters
    \State Attempt subsumption by parents;
    add survivors to $\mathcal{P}$
    \While{$\sum_k \mathrm{num}_k > N$}
    \State Delete by roulette-wheel on
    \eqref{eq:kacs_del}
    \EndWhile
    \EndIf
    \EndFor
  \end{algorithmic}
\end{algorithm}

\subsection{Rule Representation and Population}
\label{sec:kacs_rule}

Every rule $cl_k \in \mathcal{P}$, regardless of which submodel it belongs to, has the same one-dimensional format:
\begin{equation}
  cl_k:
  \textbf{ IF } z \in C_k = [l_k,\, u_k]
  \textbf{ THEN } P_k(z) = w_{k,0} + w_{k,1}\, z,
  \label{eq:kacs_rule}
\end{equation}
where $C_k \subset \mathbb{R}$ is a compact interval and $\mathbf{w}_k = (w_{k,0},\, w_{k,1})^\top \in \mathbb{R}^2$. For inner rules $cl_k \in \mathcal{P}^\psi_{q,p}$, $z = x_p \in [0,1]$. For outer rules $cl_k \in \mathcal{P}^\Phi_q$, $z = \hat{z}_q \in \mathcal{K}_z^{(q)}$.

{A finite set of such local linear rules can represent a piecewise-linear function. This property is central to the universal approximation proof in Section~\ref{sec:uat_1d}, but does not constrain KACS operation, permitting overlapping rules.}\label{r3-15-1}

The weight vector always has exactly two elements regardless of $n$, in contrast to $n+1$ elements per rule in XCSF. Each rule $cl_k$ maintains the same seven bookkeeping parameters as in XCSF: fitness $F_k$, prediction error $\epsilon_k$, accuracy $\kappa_k$, match set size estimate $\mathrm{ms}_k$, experience $\mathrm{exp}_k$, numerosity $\mathrm{num}_k$, and GA time stamp $\mathrm{ts}_k$.

A rule $cl_k$ belongs to exactly one submodel, identified by its type ($\psi$ or $\Phi$) and index: $(q,p)$ for inner submodels (channel index $q$ and input-dimension index $p$), or $q$ for outer submodels (channel index). The population size limit $N$ constrains the total numerosity, i.e., $\sum_{cl_k \in \mathcal{P}} \mathrm{num}_k \leq N$.

\subsection{Feedforward Prediction and Covering}
\label{sec:kacs_feedforward}

\subsubsection{Stage 1: Inner Function Evaluation}

For each pair $(q, p)$, where $q$ is the channel index ($q \in \{1,\ldots,2n+1\}$) and $p$ is the input-dimension index ($p \in \{1,\ldots,n\}$), form the inner match set for a given input $\mathbf{x}$:
\begin{equation}
  \mathcal{M}^\psi_{q,p}
  = \bigl\{cl_k \in \mathcal{P}^\psi_{q,p}
  \mid x_p \in C_k\bigr\}.
  \label{eq:inner_match}
\end{equation}
If $\mathcal{M}^\psi_{q,p} = \emptyset$, generate a covering rule according to \eqref{eq:cover_inner}, as described later in Section~\ref{sec:kacs_covering}. Compute the fitness-weighted prediction of $\psi_{q,p}$:
\begin{equation}
  \hat{\psi}_{q,p}(x_p)
  = \frac{\sum_{cl_k \in \mathcal{M}^\psi_{q,p}}
  P_k(x_p) \cdot F_k}
  {\sum_{cl_k \in \mathcal{M}^\psi_{q,p}}
  F_k}.
  \label{eq:kacs_inner}
\end{equation}
Sum over $p$ to obtain the intermediate value for channel $q$:
\begin{equation}
  \hat{z}_q = \sum_{p=1}^{n} \hat{\psi}_{q,p}(x_p),
  \qquad q = 1, \ldots, 2n+1.
  \label{eq:kacs_zq}
\end{equation}

\subsubsection{Stage 2: Outer Function Evaluation}

For each channel index $q$, form the outer match set for the obtained intermediate value $\hat{z}_q$:
\begin{equation}
  \mathcal{M}^\Phi_q
  = \bigl\{cl_k \in \mathcal{P}^\Phi_q
  \mid \hat{z}_q \in C_k\bigr\}.
  \label{eq:outer_match}
\end{equation}
If $\mathcal{M}^\Phi_q = \emptyset$, generate a covering rule according to \eqref{eq:cover_outer}, as described later in Section~\ref{sec:kacs_covering}. Compute the fitness-weighted prediction of $\Phi_q$:
\begin{equation}
  \hat{\Phi}_q(\hat{z}_q)
  = \frac{\sum_{cl_k \in \mathcal{M}^\Phi_q}
  P_k(\hat{z}_q) \cdot F_k}
  {\sum_{cl_k \in \mathcal{M}^\Phi_q}
  F_k}.
  \label{eq:kacs_outer}
\end{equation}
Sum over $q$ to obtain the final system prediction:
\begin{equation}
  \hat{y} = \hat{f}(\mathbf{x})
  = \sum_{q=1}^{2n+1} \hat{\Phi}_q(\hat{z}_q).
  \label{eq:kacs_output}
\end{equation}
Stage~2 requires the intermediate values $\hat{z}_q$ in \eqref{eq:kacs_zq}. Thus, KACS first completes Stage~1 for all $(q,p)$ pairs and then proceeds to Stage~2. Within each stage, the submodels are independent and can be evaluated in any order (and also in parallel).\footnote{Under an idealized rule-allocation assumption with $m$ rules per local region or KACS submodel, the worst-case inference complexities of XCSF and KACS scale as $\mathcal{O}(nm^n)$ and $\mathcal{O}(mn^2)$, respectively. For further details, see Section~\ref{sec:supp_inference_complexity} of the supplementary material.\label{r2-20-1}}

\subsubsection{Covering}
\label{sec:kacs_covering}

When a submodel match set is empty, a new rule $cl_\mathrm{cov}$ is generated to ensure every input is always matched.

For inner submodel $\mathcal{P}^\psi_{q,p}$ with uncovered $x_p \in [0,1]$:
\begin{equation}
  (l_\mathrm{cov},\; u_\mathrm{cov}) =
  \bigl(\max(0,\, x_p - \mathcal{U}(0,r_0]),\;
  \min(1,\, x_p + \mathcal{U}(0,r_0])\bigr),
  \label{eq:cover_inner}
\end{equation}
where $r_0 \in (0,1]$ is the maximum covering half-width. With probability $P_\# \in [0,1]$, a maximally general rule is generated instead, by setting $(l_\mathrm{cov},u_\mathrm{cov})=(0,1)$ as in XCSF.

For outer submodel $\mathcal{P}^\Phi_q$ with uncovered $\hat{z}_q \in \mathcal{K}_z^{(q)} \subseteq \mathbb{R}$, the maximally general option is inapplicable because $\mathcal{K}_z^{(q)}$ is not known a priori. The antecedent is set to:
\begin{equation}
  (l_\mathrm{cov},\; u_\mathrm{cov})
  = \bigl(\hat{z}_q - \mathcal{U}(0,r_0],\;
  \hat{z}_q + \mathcal{U}(0,r_0]\bigr).
  \label{eq:cover_outer}
\end{equation}
In both cases, the new rule is initialized as follows: the consequent weight vector $\mathbf{w}_\mathrm{cov}$ is sampled i.i.d.\ from $\mathcal{U}[-1,1]$ for each element, $\epsilon_\mathrm{cov} = 0$, $F_\mathrm{cov} = 0.01$, $\mathrm{ms}_\mathrm{cov} = 1$, $\mathrm{exp}_\mathrm{cov} = 0$, $\mathrm{num}_\mathrm{cov} = 1$, with its time stamp $\mathrm{ts}_\mathrm{cov}$ set to the current iteration $t$, and is added to both its submodel and $\mathcal{P}$.

\subsection{Consequent Update by Backpropagation}
\label{sec:kacs_backprop}

After computing $\hat{y}$, KACS updates the weight vectors of all active rules by backpropagating the system-level loss through \eqref{eq:kacs_output}. Define the loss as $\mathcal{L} = \frac{1}{2}(y - \hat{y})^2$ and the set of all active rules as
\begin{equation}
  \mathcal{M}_\mathrm{act}
  = \Bigl(\bigcup_{q,p} \mathcal{M}^\psi_{q,p}\Bigr)
  \cup \Bigl(\bigcup_{q} \mathcal{M}^\Phi_q\Bigr).
  \label{eq:mact}
\end{equation}

\subsubsection{Gradient for Outer Rules}

For $cl_k \in \mathcal{M}^\Phi_q$, differentiating $\mathcal{L}$ through \eqref{eq:kacs_output} and \eqref{eq:kacs_outer} gives:
\begin{align}
  \frac{\partial \mathcal{L}}{\partial \mathbf{w}_k}
  &= \frac{\partial \mathcal{L}}{\partial \hat{y}}
  \cdot \frac{\partial \hat{y}}{\partial \hat{\Phi}_q(\hat{z}_q)}
  \cdot \frac{\partial \hat{\Phi}_q(\hat{z}_q)}{\partial P_k(\hat{z}_q)}
  \cdot \frac{\partial P_k(\hat{z}_q)}{\partial \mathbf{w}_k}
  \notag \\
  &= -(y-\hat{y})
  \cdot \frac{F_k}{\sum_{cl_{k'} \in \mathcal{M}^\Phi_q} F_{k'}}
  \cdot \hat{\mathbf{z}}'_q,
  \label{eq:grad_outer}
\end{align}
where $\hat{\mathbf{z}}'_q = (1,\, \hat{z}_q)^\top$.

\subsubsection{Gradient for Inner Rules}

For $cl_k \in \mathcal{M}^\psi_{q,p}$, the gradient must pass through the outer stage because $\hat{z}_q$ enters \eqref{eq:kacs_outer}. Applying the chain rule through \eqref{eq:kacs_output}$\to$\eqref{eq:kacs_outer}$\to$ \eqref{eq:kacs_zq}$\to$\eqref{eq:kacs_inner}:
\begin{align}
  \frac{\partial \mathcal{L}}{\partial \mathbf{w}_k}
  &= \frac{\partial \mathcal{L}}{\partial \hat{y}}
  \cdot \frac{\partial \hat{y}}{\partial \hat{\Phi}_q(\hat{z}_q)}
  \cdot \frac{\partial \hat{\Phi}_q(\hat{z}_q)}{\partial \hat{z}_q} \notag \\
  &\quad \cdot \frac{\partial \hat{z}_q}{\partial \hat{\psi}_{q,p}(x_p)}
  \cdot \frac{\partial \hat{\psi}_{q,p}(x_p)}{\partial P_k(x_p)}
  \cdot \frac{\partial P_k(x_p)}{\partial \mathbf{w}_k}
  \notag \\
  &= -(y-\hat{y})
  \cdot \frac{\sum_{cl_j \in \mathcal{M}^\Phi_q}
  w_{j,1} \cdot F_j}
  {\sum_{cl_j \in \mathcal{M}^\Phi_q}
  F_j}
  \cdot \frac{F_k}{\sum_{cl_{k'} \in \mathcal{M}^\psi_{q,p}} F_{k'}}
  \cdot \mathbf{x}'_p,
  \label{eq:grad_inner}
\end{align}
where $\mathbf{x}'_p = (1,\, x_p)^\top$.

\subsubsection{Adam Update}

Each rule $cl_k \in \mathcal{M}_\mathrm{act}$ maintains its own first and second moment vectors $\mathbf{m}_k, \mathbf{v}_k \in \mathbb{R}^2$, both initialized to $\mathbf{0}$ when the rule is first created. Here, $t_k$ denotes a \emph{rule-local} Adam time step, defined as $t_k=\mathrm{exp}_k+1$ at the moment of updating $cl_k$. We use $+1$ because newly created rules (by covering or the GA) have $\mathrm{exp}_k=0$ before their first gradient update. At iteration $t$, after computing $\mathbf{g}_k = \partial \mathcal{L} / \partial \mathbf{w}_k$:
\begin{align}
  \mathbf{m}_k &\leftarrow
  \beta_1 \mathbf{m}_k + (1 - \beta_1)\,\mathbf{g}_k,
  \;
  \mathbf{v}_k \leftarrow
  \beta_2 \mathbf{v}_k
  + (1 - \beta_2)\,\mathbf{g}_k^2,
  \label{eq:adam_m} \\
  \mathbf{w}_k &\leftarrow
  \mathbf{w}_k
  - \eta\,
  \frac{\mathbf{m}_k / (1 - \beta_1^{t_k})}
  {\sqrt{\mathbf{v}_k / (1 - \beta_2^{t_k})}
  + \varepsilon_\mathrm{Adam}},
  \label{eq:adam_w}
\end{align}
where squaring and the square root in \eqref{eq:adam_m}--\eqref{eq:adam_w} are elementwise. {We use the default values recommended in \cite{kingma2015adam}: $\eta=0.001$, $\beta_1=0.9$, $\beta_2=0.999$, and $\varepsilon_\mathrm{Adam}=10^{-8}$.}\label{r3-14}

\subsection{Parameter Update}
\label{sec:kacs_params}

After the gradient update, the following bookkeeping parameters are updated for all $cl_k \in \mathcal{M}_\mathrm{act}$.

First, the experience is updated as $\mathrm{exp}_k \leftarrow \mathrm{exp}_k + 1$.

Next, the prediction error is updated. Unlike XCSF, which uses each rule's individual error $|y - \hat{y}_k|$, KACS uses the system-level error for all rules because each rule contributes to $\hat{y}$ as part of a larger structure, and its individual output is not directly comparable to $y$:
\begin{equation}
  \epsilon_k \leftarrow
  \epsilon_k + \beta\,(|y - \hat{y}| - \epsilon_k),
  \label{eq:kacs_err}
\end{equation}
with learning rate $\beta\in(0,1]$.

Then, the accuracy is calculated as:
\begin{equation}
  \kappa_k =
  \begin{cases}
    1, & \text{if } \epsilon_k < \epsilon_0, \\
    \alpha\,(\epsilon_k / \epsilon_0)^{-\nu},
    & \text{otherwise,}
  \end{cases}
  \label{eq:kacs_acc}
\end{equation}
with error threshold $\epsilon_0>0$, accuracy fall-off rate $\alpha \in [0,1]$, accuracy exponent $\nu > 0$.

After that, the fitness is updated within the \emph{submodel} match set $\mathcal{M}\in \{\mathcal{M}^\psi_{q,p}\} \cup \{\mathcal{M}^\Phi_q\}$, not across all active rules. For $cl_k\in\mathcal{M}$:
\begin{equation}
  F_k \leftarrow F_k + \beta\left(
    \frac{\kappa_k \cdot \mathrm{num}_k}
    {\sum_{cl_{k'} \in \mathcal{M}}
    \kappa_{k'} \cdot \mathrm{num}_{k'}}
  - F_k \right).
  \label{eq:kacs_fit}
\end{equation}
This per-submodel update ensures that the fitness of a rule reflects its accuracy relative to other rules competing to approximate the same component function.

Finally, the match set size estimate is updated as:
\begin{equation}
  \mathrm{ms}_k \leftarrow \mathrm{ms}_k
  + \beta\left(
    \textstyle\sum_{cl_{k'} \in \mathcal{M}} \mathrm{num}_{k'}
  - \mathrm{ms}_k\right).
  \label{eq:kacs_ms}
\end{equation}

\subsection{Genetic Algorithm and Subsumption}
\label{sec:kacs_ea}

For each active submodel match set $\mathcal{M}$, the GA is activated when $t - {\sum_{cl_k \in \mathcal{M}} \mathrm{num}_k \cdot \mathrm{ts}_k}/ {\sum_{cl_k \in \mathcal{M}} \mathrm{num}_k} > \theta_\mathrm{GA}$. When triggered, the time stamp is updated as $\mathrm{ts}_k \leftarrow t$ for all $cl_k \in \mathcal{M}$. In the GA, two parents $cl_{p_1}$ and $cl_{p_2}$ are selected from $\mathcal{M}$ by tournament selection with tournament size $\tau$. Two offspring $cl_{o_1}$ and $cl_{o_2}$ are initialized as copies of $cl_{p_1}$ and $cl_{p_2}$. With probability $\chi$, crossover is applied: $l_{o_1}$ and $l_{o_2}$ are swapped with probability $0.5$, and the same is done for $u_{o_1}$ and $u_{o_2}$. If crossover yields an invalid interval (i.e., $l_{o_i} > u_{o_i}$ for $i \in \{1,2\}$), then $l_{o_i}$ and $u_{o_i}$ are swapped so that $l_{o_i} \le u_{o_i}$. With probability $\mu$ per bound, mutation is applied: add $\mathcal{U}[-m_0, m_0]$ to each of $l_{o_i}$ and $u_{o_i}$ for $i \in \{1,2\}$, where $m_0>0$ is the mutation magnitude. For inner rules, both bounds are clipped to $[0,1]$. For outer rules, no clipping is applied. After crossover and mutation, for $i \in \{1,2\}$, set $\epsilon_{o_i} \leftarrow \tfrac{1}{2}(\epsilon_{p_1} + \epsilon_{p_2})$, $F_{o_i} \leftarrow 0.1 \times \tfrac{1}{2}(F_{p_1} + F_{p_2})$, $\mathrm{exp}_{o_i} = 0$, and $\mathrm{num}_{o_i} = 1$.

Before adding an offspring $cl_o\in\{cl_{o_1},cl_{o_2}\}$ to $\mathcal{P}$, each parent $cl_p$ attempts to subsume it. Subsumption occurs when all the following three conditions are satisfied: (i) $[l_o, u_o] \subseteq [l_p, u_p]$; (ii) $\epsilon_p < \epsilon_0$; and (iii) $\mathrm{exp}_p > \theta_\mathrm{sub}$. If subsumed, $\mathrm{num}_p \leftarrow \mathrm{num}_p + \mathrm{num}_o$ and $cl_o$ is discarded.

After inserting surviving offspring, while $\sum_{cl_k \in \mathcal{P}} \mathrm{num}_k > N$, one rule is deleted by roulette-wheel selection with vote
\begin{equation}
  d_k =
  \begin{cases}
    \mathrm{ms}_k \cdot \mathrm{num}_k
    \cdot \bar{F} / F_k,
    & \mathrm{exp}_k > \theta_\mathrm{del}
    \text{ and }
    F_k < \delta\bar{F}, \\
    \mathrm{ms}_k \cdot \mathrm{num}_k,
    & \text{otherwise,}
  \end{cases}
  \label{eq:kacs_del}
\end{equation}
where $\bar{F} = \sum_{k \in \mathcal{P}} F_k / N$, $\theta_\mathrm{del}>0$ is the deletion threshold, and $\delta\in(0,1]$. The numerosity of the selected rule is decreased by one; any rule with $\mathrm{num}_k = 0$ is removed from $\mathcal{P}$.

\section{Universal Approximation Theorem of KACS}
\label{sec:uat}

\subsection{Assumptions and Main Theorem}
\label{sec:uat_main}

The proof in this section concerns the \emph{expressive power} of KACS, not the behavior of its learning algorithm. The following two assumptions make this scope precise.

\begin{assumption}[Target function]
  \label{asm:target}
  The target function $f$ is a real-valued continuous function defined on the compact set $\mathcal{K} = [0,1]^n$, i.e., $f \in C(\mathcal{K})$.
\end{assumption}

\begin{assumption}[Existence (not learnability)]
  \label{asm:expressive}
  This assumption concerns representational existence rather than algorithmic learnability. The proof is an existence theorem for the expressive power of KACS. We assume that KACS can represent and store any rule $cl_k$ with any compact interval $C_k = [l_k, u_k]$ as its antecedent, any linear function $P_k(z) = w_{k,0} + w_{k,1}z$ as its consequent, and any fitness value $F_k \in (0,1]$. This is the same standard assumption adopted in universal approximation proofs for neural networks~\cite{cybenko1989approximation} and fuzzy systems~\cite{ying1994sufficient}, where arbitrary weights or membership functions are assumed to be expressible. {It does not guarantee that the online updates and GA find such rules.}\label{r2-6}
\end{assumption}

Under these assumptions, the main theorem is stated as follows.

\begin{theorem}[Universal Approximation of KACS]
  \label{thm:uat}
  Let $\mathcal{F}_\mathrm{KACS}$ be the set of all functions $\hat{f}$ expressible by KACS models on $\mathcal{K} = [0,1]^n$ as defined in Section~\ref{sec:kacs}. Then $\mathcal{F}_\mathrm{KACS}$ is dense in $C(\mathcal{K})$. That is, for any $f \in C(\mathcal{K})$ and any $\varepsilon > 0$, there exists $\hat{f} \in \mathcal{F}_\mathrm{KACS}$ such that
  \begin{equation}
    \sup_{\mathbf{x} \in \mathcal{K}}
    |f(\mathbf{x}) - \hat{f}(\mathbf{x})| < \varepsilon.
    \label{eq:uat}
  \end{equation}
\end{theorem}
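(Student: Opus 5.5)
The proof will be constructive: start from the exact Kolmogorov-Arnold representation of Theorem~\ref{thm:ka}, replace each one-dimensional inner and outer function by a KACS submodel realizing a piecewise-linear interpolant, and control how the errors propagate through the two-stage composition.

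Fix $f\in C(\mathcal{K})$ and $\varepsilon>0$. Theorem~\ref{thm:ka} gives continuous $\psi_{q,p}$ on $[0,1]$ and $\Phi_q$ such that \eqref{eq:ka_theorem} holds exactly. Let $z_q(\mathbf{x})=\sum_p\psi_{q,p}(x_p)$. As noted after Theorem~\ref{thm:ka}, $z_q(\mathcal{K})=\mathcal{K}_z^{(q)}=[a_q,b_q]$ is a compact interval. Enlarge it to $I_q=[a_q-1,b_q+1]$. On $I_q$ the function $\Phi_q$ is uniformly continuous, so there is $\delta_q\in(0,1)$ with $|\Phi_q(s)-\Phi_q(t)|<\varepsilon/(2(2n+1))$ whenever $|s-t|<\delta_q$ and $s,t\in I_q$. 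Set $\delta=\min_q\delta_q$.

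\textbf{Key lemma (one-dimensional).} For any continuous $g$ on a compact interval $[a,b]$ and any $\eta>0$, there is a finite ruleset of the form \eqref{eq:kacs_rule} whose fitness-weighted output \eqref{eq:kacs_inner} satisfies $\sup_{[a,b]}|g-\hat g|<\eta$. I would prove it as follows.
\begin{itemize}
\item Take a uniform grid $a=t_0<\dots<t_M=b$ fine enough that the piecewise-linear interpolant $L$ of $g$ is within $\eta$ of $g$, by uniform continuity.
\item For each cell, use one rule with antecedent $[t_{j-1},t_j]$, fitness $1$, and consequent equal to the linear piece of $L$ on that cell.
\item The only subtlety is the shared endpoints $t_j$, where two rules match. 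Both consequents equal $g(t_j)$ there, so the fitness-weighted average is still $L(t_j)$.
\end{itemize}
Hence $\hat g=L$ exactly on $[a,b]$, and every point is matched, so covering never fires.

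\textbf{Assembly.} First apply the lemma to $\Phi_q$ on $I_q$ with $\eta=\varepsilon/(2(2n+1))$, obtaining $\hat\Phi_q$. The interpolant is continuous on the compact $I_q$, hence uniformly continuous. This is the step I expect to be the main obstacle: the outer submodel is evaluated at the perturbed argument $\hat z_q$ rather than $z_q$, so the error budget for the inner approximations must be fixed only after the outer interpolants are chosen. Doing it in this order, choose $\delta'\le\delta$ so that $|s-t|<\delta'$ on $I_q$ implies $|\hat\Phi_q(s)-\hat\Phi_q(t)|$ is small. Actually it is cleaner to bound directly, which avoids needing a second modulus:
\begin{equation}
|\Phi_q(z_q)-\hat\Phi_q(\hat z_q)|\le|\Phi_q(z_q)-\Phi_q(\hat z_q)|+|\Phi_q(\hat z_q)-\hat\Phi_q(\hat z_q)|,
\end{equation}
which requires only $|z_q-\hat z_q|<\delta$ and $\hat z_q\in I_q$.

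Next apply the lemma to each $\psi_{q,p}$ on $[0,1]$ with $\eta=\delta/n$. Then $|z_q(\mathbf{x})-\hat z_q(\mathbf{x})|<\delta<1$ for every $\mathbf{x}\in\mathcal{K}$, so $\hat z_q\in I_q$. Each of the two terms above is then below $\varepsilon/(2(2n+1))$. Summing over the $2n+1$ channels in \eqref{eq:kacs_output} gives a total error below $\varepsilon$, which is \eqref{eq:uat}.

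The finite union of these submodels is a legitimate KACS population under Assumption~\ref{asm:expressive}, so the resulting $\hat f$ lies in $\mathcal{F}_\mathrm{KACS}$.
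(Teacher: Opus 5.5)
Your proposal is correct and follows essentially the paper's proof. The paper likewise enlarges each $\mathcal{K}_z^{(q)}$ by a buffer $r$ (your $r=1$, absorbed into $\delta<1$), splits each channel's error as $|\Phi_q(z_q)-\Phi_q(\hat z_q)|+|\Phi_q(\hat z_q)-\hat\Phi_q(\hat z_q)|$, gives each inner submodel an error budget of $\delta_z/n$, and realizes a continuous piecewise-linear function exactly with one rule per cell (its Lemma~\ref{lem:cpl} additionally tolerates extra low-fitness rules, which your fitness-1 partition construction does not need). Your tentative detour about a second modulus of continuity for $\hat\Phi_q$ can simply be deleted: as you yourself conclude, the direct bound uses only the uniform continuity of $\Phi_q$.
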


The proof uses the following two-step strategy.
\begin{enumerate}
  \item \text{Step 1} (Section~\ref{sec:uat_1d}): We prove that a KACS ruleset operating on a one-dimensional compact interval, referred to as a \emph{1D KACS submodel}, is itself a universal approximator for $C([a,b])$.
  \item \text{Step 2} (Section~\ref{sec:uat_nd}): We apply the KA representation theorem (Theorem~\ref{thm:ka}) to extend the one-dimensional result to $n$ dimensions, thereby proving Theorem~\ref{thm:uat}.
\end{enumerate}

\subsection{Step 1: 1D KACS Submodel as a Universal Approximator}
\label{sec:uat_1d}

A \emph{1D KACS submodel} $\hat{g}$ is a KACS ruleset defined on a one-dimensional compact interval $\mathcal{I} = [a, b] \subset \mathbb{R}$. Each rule $cl_k \in \mathcal{P}$ has the form given in \eqref{eq:kacs_rule}, and the output for a given $z \in \mathcal{I}$ is
\begin{equation}
  \hat{g}(z) =
  \frac{\sum_{cl_k \in \mathcal{M}(z)}
  P_k(z) \cdot F_k}
  {\sum_{cl_k \in \mathcal{M}(z)} F_k},
  \quad
  \mathcal{M}(z) = \{cl_k \in \mathcal{P} \mid z \in C_k\},
  \label{eq:1d_output}
\end{equation}
which matches \eqref{eq:kacs_inner} and \eqref{eq:kacs_outer} in the one-dimensional case. Let $\mathcal{G}^\mathrm{1D}_\mathrm{KACS}$ denote the set of all functions expressible by 1D KACS submodels on $\mathcal{I}$.

\begin{theorem}[1D Universal Approximation]
  \label{thm:uat_1d}
  $\mathcal{G}^\mathrm{1D}_\mathrm{KACS}$ is dense in $C(\mathcal{I})$. That is, for any $g \in C(\mathcal{I})$ and any $\varepsilon > 0$, there exists $\hat{g} \in \mathcal{G}^\mathrm{1D}_\mathrm{KACS}$ such that $\sup_{z \in \mathcal{I}} |g(z) - \hat{g}(z)| < \varepsilon$.
\end{theorem}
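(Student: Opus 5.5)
The plan is to show that 1D KACS submodels can realise exactly any continuous piecewise-linear interpolant of $g$, and then use uniform continuity of $g$ on the compact interval $\mathcal{I}$ to make that interpolant uniformly close to $g$.

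\textbf{Choosing the partition.} Fix $g\in C(\mathcal{I})$ and $\varepsilon>0$. Since $\mathcal{I}=[a,b]$ is compact, $g$ is uniformly continuous on it. Choose $\delta>0$ such that $|z-z'|\le\delta$ implies $|g(z)-g(z')|<\varepsilon/2$. Then take a uniform partition $a=t_0<t_1<\dots<t_K=b$ with mesh at most $\delta$.

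\textbf{The rule set.} For each $j=1,\dots,K$ I define one rule $cl_j$ with antecedent $C_j=[t_{j-1},t_j]$, which is a compact interval. Its consequent $P_j$ is the line through $(t_{j-1},g(t_{j-1}))$ and $(t_j,g(t_j))$, namely
\[
w_{j,1}=\frac{g(t_j)-g(t_{j-1})}{t_j-t_{j-1}},\qquad w_{j,0}=g(t_{j-1})-w_{j,1}t_{j-1}.
\]
Every rule gets the same fitness, say $F_j=1$. All of these choices are permitted by Assumption~\ref{asm:expressive}.

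\textbf{Evaluating the submodel.} I then evaluate \eqref{eq:1d_output} on this rule set.
\begin{itemize}
\item The intervals cover $\mathcal{I}$, so $\mathcal{M}(z)\neq\emptyset$ for every $z\in\mathcal{I}$ and the output is well defined.
\item For $z$ in the interior of some $[t_{j-1},t_j]$, exactly one rule matches, so $\hat g(z)=P_j(z)$.
\item At an interior breakpoint $z=t_j$, the two rules $cl_j$ and $cl_{j+1}$ both match. Both give the value $g(t_j)$ there, so the fitness-weighted average equals $g(t_j)$.
\end{itemize}
Hence $\hat g$ is exactly the continuous piecewise-linear interpolant of $g$ at the nodes $t_j$. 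The breakpoint case is the one step I expect to need care, because overlapping closed antecedents make the output an average. It works precisely because adjacent consequents agree at the shared endpoint, and the result would hold for any positive fitness values.

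\textbf{The error bound.} Take $z\in[t_{j-1},t_j]$. Then $\hat g(z)$ is a convex combination of $g(t_{j-1})$ and $g(t_j)$, so
\[
|g(z)-\hat g(z)|\le\max\bigl(|g(z)-g(t_{j-1})|,\,|g(z)-g(t_j)|\bigr)<\varepsilon/2<\varepsilon.
\]
This bound is uniform in $z$, so $\sup_{z\in\mathcal{I}}|g(z)-\hat g(z)|<\varepsilon$. This proves the claimed density of $\mathcal{G}^\mathrm{1D}_\mathrm{KACS}$ in $C(\mathcal{I})$.
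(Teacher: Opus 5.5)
Your proof is correct, and it takes a more elementary, self-contained route than the paper's.

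The paper argues in two stages. It cites a piecewise-linear approximation theorem (Theorem~\ref{thm:pla}) to obtain some $h\in\mathrm{CPL}(\mathcal{I})$ within $\varepsilon/2$ of $g$. It then proves Lemma~\ref{lem:cpl}, which places one reference rule with consequent $h_j$ on each piece $R_j$ but also lets the ruleset contain arbitrary additional rules. Most of that proof bounds the resulting aggregated error~(B) by $\sigma(\gamma)$. It makes this small by driving the ratio $\gamma$ of the extra rules' fitness to the reference fitness toward zero.

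You instead prove the piecewise-linear step inline, using uniform continuity and the explicit nodal interpolant. You also use a minimal ruleset with no extraneous rules, so the submodel output \emph{equals} the interpolant. Three things follow:
\begin{itemize}
\item The aggregated-error term vanishes identically.
\item The fitness values are irrelevant: any positive choice works, since at a breakpoint both matching rules return $g(t_j)$.
\item The only error left is interpolation error, which your convex-combination bound controls directly.
\end{itemize}
This shows cleanly that every continuous piecewise-linear function on $\mathcal{I}$ lies exactly in $\mathcal{G}^\mathrm{1D}_\mathrm{KACS}$. Your breakpoint treatment is also more transparent than the paper's fitness-ratio bound.

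In exchange, the paper's proof gives a robustness property that yours does not. Accuracy survives arbitrary additional rules in the population, provided they carry sufficiently small fitness relative to the reference rules, which is closer in spirit to an actual evolved KACS population. That extra strength is not needed for the density claim. Step~2 of the proof of Theorem~\ref{thm:uat} uses only density, so your argument serves there equally well.
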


{As noted in Section~\ref{sec:kacs_rule}, the proof uses piecewise-linear approximation as a bridge from local linear rules to arbitrary continuous functions.}\label{r3-15-2}

\begin{theorem}[Piecewise Linear Approximation~\cite{folland1999real,rudin1976principles}]
  \label{thm:pla}
  For any $g \in C(\mathcal{I})$ and any $\varepsilon > 0$, there exists a continuous piecewise linear function $h \in \mathrm{CPL}(\mathcal{I})$ such that $\sup_{z \in \mathcal{I}} |g(z) - h(z)| < \varepsilon$, where $\mathrm{CPL}(\mathcal{I})$ is the set of continuous functions on $\mathcal{I}$ that are linear on each subinterval of a finite partition of $\mathcal{I}$.
\end{theorem}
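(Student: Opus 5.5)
The plan is to use uniform continuity of $g$ on the compact interval $\mathcal{I}=[a,b]$ and build $h$ as the linear interpolant of $g$ on a fine uniform grid.

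First, by the Heine--Cantor theorem~\cite{rudin1976principles}, $g$ is uniformly continuous on $\mathcal{I}$. So there is $\delta>0$ such that $|z-z'|<\delta$ implies $|g(z)-g(z')|<\varepsilon/2$. Choose $K\in\mathbb{N}$ with $(b-a)/K<\delta$. Set nodes $z_j=a+j(b-a)/K$ for $j=0,\ldots,K$, and define $h$ on each $[z_{j-1},z_j]$ by
\[
h(z)=g(z_{j-1})+\frac{g(z_j)-g(z_{j-1})}{z_j-z_{j-1}}\,(z-z_{j-1}).
\]
Adjacent pieces agree at the shared node, where both equal $g(z_j)$. Hence $h$ is well defined, continuous, and linear on each cell of a finite partition, so $h\in\mathrm{CPL}(\mathcal{I})$.

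For the error bound, fix $z\in[z_{j-1},z_j]$ and write $h(z)=(1-\lambda)g(z_{j-1})+\lambda g(z_j)$ with $\lambda\in[0,1]$. Then
\[
|g(z)-h(z)|\le(1-\lambda)|g(z)-g(z_{j-1})|+\lambda|g(z)-g(z_j)|<\varepsilon/2,
\]
because both $|z-z_{j-1}|$ and $|z-z_j|$ are at most $(b-a)/K<\delta$. Taking the supremum over the compact set gives $\sup|g-h|\le\varepsilon/2<\varepsilon$. The margin $\varepsilon/2$ is there so that the strict inequality survives the supremum.

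The argument is routine. The only step needing care is the one just noted: a strict bound at each point only yields a non-strict bound on the supremum, and the $\varepsilon/2$ margin fixes this. The convex-combination form of the interpolant is what turns the pointwise modulus-of-continuity bound directly into a uniform bound, with no derivative assumptions on $g$.
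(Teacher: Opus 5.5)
Your proof is correct and complete. It uses Heine--Cantor uniform continuity, then linear interpolation of $g$ on a uniform grid of mesh less than $\delta$, then the convex-combination bound, with the $\varepsilon/2$ margin to keep the strict inequality after taking the supremum; this is the standard textbook argument. The paper gives no proof of its own and simply cites the result as a classical fact from Folland and Rudin, so your argument supplies what those citations stand for.
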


Theorem~\ref{thm:pla} reduces the proof of Theorem~\ref{thm:uat_1d} to the following lemma.

\begin{lemma}
  \label{lem:cpl}
  For any $h \in \mathrm{CPL}(\mathcal{I})$ and any $\varepsilon > 0$, there exists $\hat{g} \in \mathcal{G}^\mathrm{1D}_\mathrm{KACS}$ such that $\sup_{z \in \mathcal{I}} |h(z) - \hat{g}(z)| < \varepsilon / 2$.
\end{lemma}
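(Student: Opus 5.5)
Since $h$ is linear on each piece of a finite partition, the natural plan is to assign one rule per piece. Take the partition $a = t_0 < t_1 < \cdots < t_K = b$ on whose cells $[t_{j-1}, t_j]$ the function $h$ coincides with an affine map $h(z) = \alpha_j + \beta_j z$. For each $j$ I would build a rule $cl_j$ with antecedent $C_j = [t_{j-1}, t_j]$ and consequent $P_j(z) = \alpha_j + \beta_j z$. All fitness values are set equal, say $F_j = 1$, which Assumption~\ref{asm:expressive} allows. The resulting ruleset defines $\hat{g} \in \mathcal{G}^\mathrm{1D}_\mathrm{KACS}$ via \eqref{eq:1d_output}. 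Because the antecedents cover $\mathcal{I}$, the match set $\mathcal{M}(z)$ is nonempty for every $z \in \mathcal{I}$, and the denominator in \eqref{eq:1d_output} is positive.

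The verification then splits into two cases. If $z$ lies in the interior $(t_{j-1}, t_j)$ of a cell, then $\mathcal{M}(z) = \{cl_j\}$, so $\hat{g}(z) = P_j(z) = h(z)$. If $z = t_j$ is an interior breakpoint, then $\mathcal{M}(z) = \{cl_j, cl_{j+1}\}$. In that case $\hat{g}(z) = \tfrac{1}{2}\bigl(P_j(t_j) + P_{j+1}(t_j)\bigr)$. By continuity of $h$, both affine pieces take the value $h(t_j)$ at $t_j$, so again $\hat{g}(t_j) = h(t_j)$. The endpoints $a$ and $b$ are each matched by a single rule. Hence $\hat{g} \equiv h$ on $\mathcal{I}$, and the bound $\sup_{z\in\mathcal{I}}|h(z)-\hat{g}(z)| = 0 < \varepsilon/2$ holds trivially for every $\varepsilon > 0$.

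The only delicate point is the overlap at the breakpoints. The fitness-weighted average there combines two different affine maps, and it is exactly the continuity built into the definition of $\mathrm{CPL}(\mathcal{I})$ that makes them agree. For any weights in $(0,1]$, not just equal ones, the convex combination still equals $h(t_j)$, so the choice of fitness values is immaterial.

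I would also remark, though the proof does not need it, that one could instead use half-open antecedents or slightly overlapping intervals. Overlapping intervals would introduce a small error controlled by the uniform continuity of $h$, which is why the statement allows a slack of $\varepsilon/2$. With closed antecedents meeting exactly at the breakpoints, the construction is exact.

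Combined with Theorem~\ref{thm:pla} applied at tolerance $\varepsilon/2$ and the triangle inequality, this yields Theorem~\ref{thm:uat_1d}.
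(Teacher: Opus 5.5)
Your proof is correct, and it is essentially the paper's construction with one layer removed. The paper also places a reference rule $cl^*_j$ on each cell $R_j$ with consequent $h_j$, and it handles breakpoints by the same continuity observation. However, it additionally allows the population to contain an arbitrary finite set of other rules with arbitrary consequents. Its main step is then bounding the aggregated error (B): it assigns fitness $F_{\mathrm{ref}}$ to reference rules and $\delta$ to the others, so the deviation is at most $\sigma(\gamma)=\Lambda(|\mathcal{P}|-1)\gamma/(1+(|\mathcal{P}|-1)\gamma)$, which tends to $0$ as $\gamma=\delta/F_{\mathrm{ref}}\to 0^+$.

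Because the lemma is a pure existence statement, you can take that extra set to be empty. Then (B) vanishes identically and no fitness tuning is needed. Your route is shorter and proves the stronger fact $\mathrm{CPL}(\mathcal{I})\subseteq\mathcal{G}^{\mathrm{1D}}_{\mathrm{KACS}}$ with zero error, so Theorem~\ref{thm:uat_1d} follows from Theorem~\ref{thm:pla} without splitting $\varepsilon$.

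What the paper's version buys is robustness. The approximation survives embedding the reference rules in any larger finite population, as a realistic KACS population with overlapping rules would be, provided the extraneous rules have small relative fitness. Its (A)/(B) split is also what the supplementary extension to merely approximate consequents builds on, though your convex-combination argument adapts to that case just as easily.

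One correction to your closing aside: the paper spends its slack on absorbing the extra rules in (B), not on overlapping reference intervals. Also, half-open antecedents are not available, since Assumption~\ref{asm:expressive} only provides compact intervals $[l_k,u_k]$.
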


\begin{proof}
  Since $h \in \mathrm{CPL}(\mathcal{I})$, there is a finite partition $a = z_0 < z_1 < \cdots < z_M = b$ of $\mathcal{I}$ into $M$ subintervals $R_j = [z_{j-1},\, z_j]$ such that $h$ restricts to a linear function $h_j(z) = a_j z + b_j$ on each $R_j$.

  For each subinterval $R_j$ ($j = 1, \ldots, M$), we place a \emph{reference rule} $cl^*_j$ in $\mathcal{P}$ with antecedent $C^*_j = R_j$ and consequent $P^*_j(z) = h_j(z)$. Since $h_j$ is linear, this satisfies \eqref{eq:kacs_rule} under Assumption~\ref{asm:expressive}. The ruleset $\mathcal{P}$ may additionally contain any finite number of rules $\{cl_k\}_{k \neq j}$ with arbitrary consequents, provided their antecedents are compact intervals and their fitnesses satisfy $F_k \in (0,1]$.

  Fix any $z \in \mathcal{I}$. Since $z \in R_j$ for some $j$, the reference rule $cl^*_j$ is guaranteed to match. If $z$ lies on a breakpoint $z_j$, continuity of $h$ gives $h_j(z_j) = h_{j+1}(z_j)$, so either reference rule yields the same value and the argument is unaffected. Applying the triangle inequality:
  \begin{equation}
    |h(z) - \hat{g}(z)|
    \leq
    \underbrace{|h_j(z) - P^*_j(z)|}_{\text{(A)}}
    +\;
    \underbrace{|P^*_j(z) - \hat{g}(z)|}_{\text{(B)}}.
    \label{eq:err_decomp}
  \end{equation}
  Here, (A) denotes the \textit{reference rule error}, which measures the local approximation error of the selected reference rule $cl_j^*$ on the interval $R_j$; (B) denotes the \textit{aggregated error}, which captures the additional error induced by the fitness-weighted aggregation with the other matching rules $\{cl_k\}$. We show that both (A) and (B) can be made less than $\varepsilon/4$, so their sum is less than $\varepsilon/2$.

  \emph{(A) Reference Rule Error.} By construction $P^*_j(z) = h_j(z)$, so
  \begin{equation}
    \text{(A)}=|h_j(z) - P^*_j(z)| = 0 < \frac{\varepsilon}{4}.
    \label{eq:ref_err}
  \end{equation}

  \emph{(B) Aggregated Error.} Substituting \eqref{eq:1d_output} and separating the reference rule $cl^*_j$:
  \begin{align}
    |P^*_j(z) - \hat{g}(z)|
    &=\left| \frac{\sum_{cl_k \in \mathcal{M}(z)} (P_j^*(z) - P_k(z)) \cdot F_k}{\sum_{cl_k \in \mathcal{M}(z)} F_k} \right| \notag \\
    &\leq
    \frac{
      \sum_{cl_k \in \mathcal{M}(z) \setminus \{cl^*_j\}}
    |P^*_j(z) - P_k(z)|\cdot F_k}
    {F^*_j +
    \sum_{cl_k \in \mathcal{M}(z) \setminus \{cl^*_j\}} F_k}.
    \label{eq:agg_err}
  \end{align}
  Since all consequents are linear on the compact set $\mathcal{I}$, the quantity $\Lambda = \max_{1 \le j \le M}\max_{1 \le k \le |\mathcal{P}|} \sup_{z \in \mathcal{I}} |P^*_j(z) - P_k(z)|$ is finite. Set $F^*_j = F_{\mathrm{ref}} \in (0,1]$ and $F_k = \delta \in (0,1]$ for all $k \neq j$. Since $|\mathcal{M}(z) \setminus \{cl^*_j\}| \leq |\mathcal{P}| - 1$, \eqref{eq:agg_err} gives
  \begin{equation}
    |P^*_j(z) - \hat{g}(z)|
    \leq
    \Lambda \cdot
    \frac{(|\mathcal{P}|-1)\,\delta}
    {F_{\mathrm{ref}} + (|\mathcal{P}|-1)\,\delta}
    = \sigma(\gamma),
    \label{eq:sigma}
  \end{equation}
  where $\gamma = \delta / F_{\mathrm{ref}}$ and $\sigma(\gamma) = \Lambda \cdot \frac{(|\mathcal{P}|-1)\gamma}{1+(|\mathcal{P}|-1)\gamma}$. Since $\lim_{\gamma \to 0^+} \sigma(\gamma) = 0$, the $\varepsilon$-$\delta$ definition of the limit guarantees that, for any $E>0$, there exists $\Delta>0$ such that $0 < \gamma < \Delta \Rightarrow \sigma(\gamma) < E$. In particular, setting $E = \varepsilon/4$ yields $0 < \gamma < \Delta \Rightarrow \sigma(\gamma) < \varepsilon/4$. Choosing $\delta > 0$ sufficiently small and $F_{\mathrm{ref}}$ sufficiently close to $1$ so that $\gamma = \delta/F_{\mathrm{ref}} < \Delta$ yields
  \begin{equation}
    \text{(B)}=|P^*_j(z) - \hat{g}(z)| \leq \sigma(\gamma) < \frac{\varepsilon}{4}.
    \label{eq:agg_err_bound}
  \end{equation}

  \emph{Combining (A) and (B).} Substituting \eqref{eq:ref_err} and \eqref{eq:agg_err_bound} into \eqref{eq:err_decomp} gives
  \begin{equation}
    \sup_{z \in \mathcal{I}} |h(z) - \hat{g}(z)|
    < \frac{\varepsilon}{4} + \frac{\varepsilon}{4}
    = \frac{\varepsilon}{2}.
  \end{equation}
  This completes the proof of Lemma~\ref{lem:cpl}.
\end{proof}

\begin{proof}[Proof of Theorem~\ref{thm:uat_1d}]
  Let $g \in C(\mathcal{I})$ and $\varepsilon > 0$. By Theorem~\ref{thm:pla} with $\varepsilon/2$, there exists $h \in \mathrm{CPL}(\mathcal{I})$ with $\sup_{z \in \mathcal{I}} |g(z) - h(z)| < \varepsilon/2$. By Lemma~\ref{lem:cpl} with $\varepsilon$, there exists $\hat{g} \in \mathcal{G}^\mathrm{1D}_\mathrm{KACS}$ with $\sup_{z \in \mathcal{I}} |h(z) - \hat{g}(z)| < \varepsilon/2$. The triangle inequality gives
  \begin{align}
    \sup_{z \in \mathcal{I}} |g(z) - \hat{g}(z)|
    &\leq \sup_{z \in \mathcal{I}} |g(z) - h(z)|
    + \sup_{z \in \mathcal{I}} |h(z) - \hat{g}(z)| \notag\\
    &< \frac{\varepsilon}{2} + \frac{\varepsilon}{2}
    = \varepsilon.
  \end{align}
  This completes the proof of Theorem~\ref{thm:uat_1d}.
\end{proof}

\subsection{Step 2: Extension to $n$ Dimensions}
\label{sec:uat_nd}

\begin{proof}[Proof of Theorem~\ref{thm:uat}]
  Let $f \in C(\mathcal{K})$ and $\varepsilon > 0$. By Theorem~\ref{thm:ka}, there exist continuous functions $\psi_{q,p}: [0,1] \to \mathbb{R}$ and $\Phi_q: \mathcal{K}^{(q)}_z \to \mathbb{R}$ such that $f(\mathbf{x}) = \sum_{q=1}^{2n+1} \Phi_q(z_q)$, where $z_q = \sum_{p=1}^{n} \psi_{q,p}(x_p)$ and $\mathcal{K}^{(q)}_z = \bigl\{\sum_{p=1}^n \psi_{q,p}(x_p) \mid \mathbf{x} \in \mathcal{K}\bigr\} \subset \mathbb{R}$ is a compact interval.

  Define $\hat{f}(\mathbf{x}) = \sum_{q=1}^{2n+1} \hat{\Phi}_q(\hat{z}_q)$, where $\hat{z}_q = \sum_p \hat{\psi}_{q,p}(x_p)$. For all $\mathbf{x} \in \mathcal{K}$,
  \begin{align}
    |f(\mathbf{x}) - \hat{f}(\mathbf{x})|
    &= \left|\sum_{q=1}^{2n+1} \Phi_q(z_q)
    - \sum_{q=1}^{2n+1} \hat{\Phi}_q(\hat{z}_q)\right| \notag \\
    &\leq \sum_{q=1}^{2n+1}
    \Bigl[
      \underbrace{|\Phi_q(z_q) - \Phi_q(\hat{z}_q)|}_{\text{(C)}}
      +
      \underbrace{|\Phi_q(\hat{z}_q) - \hat{\Phi}_q(\hat{z}_q)|}_{\text{(D)}}
    \Bigr].
    \label{eq:total_err}
  \end{align}
  Here, (C) is the \textit{inner approximation error}: the error in $\Phi_q$'s input caused by replacing the true intermediate value $z_q$ with its approximation $\hat{z}_q$; (D) is the \textit{outer approximation error}: the error from approximating $\Phi_q$ itself by the 1D KACS submodel $\hat{\Phi}_q$. We show that both (C) and (D) are bounded by $\varepsilon/(2(2n+1))$ for each $q$, so the total is at most $\varepsilon$.

  \emph{(C) Inner Approximation Error.} KACS computes intermediate values $z_q = \sum_p \psi_{q,p}(x_p)$ and feeds them into the outer function $\Phi_q$. Because each $\psi_{q,p}$ is replaced by an approximation $\hat{\psi}_{q,p}$, the resulting intermediate value $\hat{z}_q = \sum_p \hat{\psi}_{q,p}(x_p)$ may deviate from $z_q$ and could fall outside the range $\mathcal{K}^{(q)}_z$ on which $\Phi_q$ was originally analyzed. Error~(C) quantifies the impact of this deviation: $|\Phi_q(z_q) - \Phi_q(\hat{z}_q)|$. We now bound this error.

  {Fix any distance $r > 0$ as the enlargement width around $\mathcal{K}^{(q)}_z$, and define the \emph{enlarged} compact interval}\label{r3-16}
  \begin{equation}
    \tilde{\mathcal{K}}^{(q)}_z
    = \bigl\{t \in \mathbb{R}
    \mid \mathrm{dist}(t,\,\mathcal{K}^{(q)}_z) \leq r\bigr\},
    \label{eq:K_tilde}
  \end{equation}
  where $\mathrm{dist}(t, S) = \inf_{s \in S}|t - s|$ denotes the distance from the point $t$ to the set $S$~\cite{rudin1976principles,folland1999real}. Concretely, if $\mathcal{K}^{(q)}_z = [a, b]$, then $\tilde{\mathcal{K}}^{(q)}_z = [a-r,\, b+r]$, which is again a compact interval~\cite{rudin1976principles}. This enlarged set acts as a \emph{safety buffer}: any $\hat{z}_q$ that stays within distance $r$ of the true value $z_q$ is guaranteed to remain inside $\tilde{\mathcal{K}}^{(q)}_z$, where $\Phi_q$ is still well-defined.

  Since $\Phi_q$ is continuous on the compact set $\tilde{\mathcal{K}}^{(q)}_z$, it is \emph{uniformly continuous} there by the Heine-Cantor theorem~\cite{rudin1976principles}. Precisely, for the target accuracy $\delta_\Phi = \varepsilon/(2(2n+1))$, there exists $\delta^*_z>0$ such that
  \begin{equation}
    t_1, t_2 \in \tilde{\mathcal{K}}^{(q)}_z,\quad
    |t_1 - t_2| < \delta^*_z
    \;\Longrightarrow\;
    |\Phi_q(t_1) - \Phi_q(t_2)| < \delta_\Phi.
    \label{eq:phi_uc}
  \end{equation}

  Define $\delta_z = \min(\delta^*_z,\, r)$. Because $\delta_z \leq \delta^*_z$, condition~\eqref{eq:phi_uc} remains valid for any pair with $|t_1-t_2| < \delta_z$. Because $\delta_z \leq r$, any deviation of $\hat{z}_q$ from $z_q$ smaller than $\delta_z$ is also smaller than $r$, keeping $\hat{z}_q$ inside $\tilde{\mathcal{K}}^{(q)}_z$ (verified in the next paragraph). Distributing the budget equally among $n$ inner functions, set $\delta_\psi = \delta_z/n$. By Theorem~\ref{thm:uat_1d}, for each pair $(q,p)$ there exists a 1D submodel $\hat{\psi}_{q,p}$ with
  \begin{equation}
    \sup_{x_p \in [0,1]}
    |\psi_{q,p}(x_p)-\hat{\psi}_{q,p}(x_p)| < \delta_\psi.
  \end{equation}
  Applying the triangle inequality gives
  \begin{equation}
    |z_q - \hat{z}_q|
    \leq \sum_{p=1}^n
    |\psi_{q,p}(x_p) - \hat{\psi}_{q,p}(x_p)|
    < n \cdot \delta_\psi
    = \delta_z
    \leq r.
    \label{eq:zq_err}
  \end{equation}
  Since $z_q \in \mathcal{K}^{(q)}_z \subseteq \tilde{\mathcal{K}}^{(q)}_z$ and the deviation is less than $r$, it follows that $\hat{z}_q \in \tilde{\mathcal{K}}^{(q)}_z$.

  Both $z_q$ and $\hat{z}_q$ lie in $\tilde{\mathcal{K}}^{(q)}_z$, and $|z_q-\hat{z}_q| < \delta_z \leq \delta^*_z$. Applying~\eqref{eq:phi_uc} with $t_1 = z_q$ and $t_2 = \hat{z}_q$ yields
  \begin{equation}
    \text{(C)}
    = |\Phi_q(z_q)-\Phi_q(\hat{z}_q)|
    < \delta_\Phi
    = \frac{\varepsilon}{2(2n+1)}.
  \end{equation}

  \emph{(D) Outer Approximation Error.} Since $\Phi_q \in C(\tilde{\mathcal{K}}^{(q)}_z)$, Theorem~\ref{thm:uat_1d} applied on the compact interval $\tilde{\mathcal{K}}^{(q)}_z$ gives a 1D submodel $\hat{\Phi}_q$ such that
  \begin{equation}
    \sup_{t \in \tilde{\mathcal{K}}^{(q)}_z}
    |\Phi_q(t) - \hat{\Phi}_q(t)|
    < \delta_\Phi = \frac{\varepsilon}{2(2n+1)}.
  \end{equation}
  Since $\hat{z}_q(\mathbf{x}) \in \tilde{\mathcal{K}}^{(q)}_z$ for all $\mathbf{x} \in \mathcal{K}$ (established above), evaluating at $t = \hat{z}_q(\mathbf{x})$ yields
  \begin{equation}
    \text{(D)}=|\Phi_q(\hat{z}_q) - \hat{\Phi}_q(\hat{z}_q)|
    < \delta_\Phi = \frac{\varepsilon}{2(2n+1)}.
  \end{equation}

  \emph{Total Error.} Substituting into \eqref{eq:total_err},
  \begin{align}
    \sup_{\mathbf{x} \in \mathcal{K}}
    |f(\mathbf{x}) - \hat{f}(\mathbf{x})|
    &< \sum_{q=1}^{2n+1}
    \left[\frac{\varepsilon}{2(2n+1)}
    + \frac{\varepsilon}{2(2n+1)}\right] \notag \\
    &= \sum_{q=1}^{2n+1} \frac{\varepsilon}{2n+1}
    = \varepsilon.
  \end{align}
  This completes the proof of Theorem~\ref{thm:uat}.\footnote{{Note that Theorem~\ref{thm:uat} does not imply that a KACS model with fixed capacity can achieve arbitrary accuracy. Although the construction uses finitely many rules for any $\varepsilon>0$, the required rule count may increase as $\varepsilon$ decreases. Thus, in practice, the population limit $N$ imposes a trade-off among approximation accuracy, model compactness, and transparency.}\label{r3-11}}
\end{proof}

Theorem~\ref{thm:uat} also holds for consequent models that represent linear functions exactly or approximate them arbitrarily well; see Section~\ref{sec:uat_discussion_sup} of the supplementary material.

\section{Experiments}
\label{sec:experiments}

\subsection{Experimental Setup}
\label{subsec:exp_setup}

\subsubsection{Methods, Metrics, and Objective}
We compare two LCSs: (i) XCSF, which uses $n$-dimensional rules with linear consequents, and (ii) KACS, which uses one-dimensional rules with linear consequents via KA-based decomposition.

We evaluate each method using three metrics: model accuracy, model complexity, and the accuracy-complexity trade-off.
\begin{itemize}
  \item \emph{Model Accuracy.} We track MAE (mean absolute error) on training and testing data.\label{others-5}
  \item \emph{Model Complexity.} We count the number of {macro-}rules $|\mathcal{P}|$ and the total number of parameters $k$ in $\mathcal{P}$.
  \item \emph{Accuracy-Complexity Trade-off.} We use AIC (Akaike Information Criterion) to capture the balance between accuracy and complexity. Under the Gaussian error assumption, AIC is computed as \cite{burnham2002model}:
    \begin{equation}
      \mathrm{AIC} = N_\text{tr} \ln\!\left(
        \sum_{i=1}^{N_\text{tr}} (y_i-\hat{y}_i)^2 \big/ N_\text{tr}
      \right) + 2(k + 1),
      \label{eq:aic}
    \end{equation}
    where $N_\text{tr}$ is the number of training samples, $k$ is the number of model parameters, and the extra $+1$ accounts for the estimated noise variance $\hat{\sigma}^{2}$.
\end{itemize}
The parameter count $k$, also used in AIC, is defined as the total number of consequent weight parameters across all rules, i.e., $k = |\mathcal{P}| \times (d+1)$, where $d$ is the input dimension of each rule ($d = n$ for XCSF; $d = 1$ for KACS). {Hence, for fixed $n$, $k$ scales linearly with $|\mathcal{P}|$ for both methods.}\label{r2-14}

The objective of this comparison is to {evaluate KA-based rule organization, which replaces $n$-dimensional XCSF rules with one-dimensional KACS rulesets} while keeping the surrounding LCS framework as comparable as possible.\label{r1-2-2}

For this reason, XCSF is used as the primary baseline: it is the most established LCS for function approximation and enables a controlled comparison in which the key difference is rule dimensionality rather than unrelated architectural choices. Accordingly, we do not include KAN~\cite{liu2025kan} in the main comparison, because it is a neural network with B-spline activations and is structurally distinct from LCSs. We also do not include X-KAN~\cite{shiraishi2025xkan} in the main comparison, because its KAN-based consequents substantially change the consequent model class and tuning space. We therefore interpret the experimental results as evidence for the effect of dimensional reorganization within the LCS framework, not as a universal ranking against all KA-inspired models. A systematic comparison with KAN and X-KAN is left for future work. For a comparison with SupRB~\cite{heider2023suprb}, see Section~\ref{sec:suprb_comparison_sup} of the supplementary material.
\label{r2-8-1}\label{r2-10-1}\label{r2-15-1}\label{r2-19-1}

\subsubsection{Benchmark Problems}
Table~\ref{tb:dataset} summarizes four synthetic test functions and four real-world regression datasets used.

\begin{table}[t]
  \centering
  \caption{Benchmark Problems Used}
  \label{tb:dataset}
  \footnotesize
  \begin{tabular}{c l c c}
    \bhline{1pt}
    Abbr. & Name & \#Samples & $n$ \\
    \bhline{1pt}
    $f_1$ & Rastrigin Function & 1000 & 10 \\
    $f_2$ & Rosenbrock Function & 1000 & 10 \\
    $f_3$ & Cross Function & 1000 & 10 \\
    $f_4$ & Styblinski-Tang Function & 1000 & 10 \\
    ASN  & Airfoil Self-Noise & 1503 & 5 \\
    CCPP & Combined Cycle Power Plant & 9568 & 4 \\
    CS   & Concrete Strength & 1030 & 8 \\
    EEC  & Energy Efficiency Cooling & 768 & 8 \\
    \bhline{1pt}
  \end{tabular}
\end{table}

The synthetic functions are the multimodal Rastrigin function~\cite{muhlenbein1991parallel}, the valley-shaped Rosenbrock function~\cite{jamil2013literature}, the ridge-like Cross function~\cite{stein2018interpolation}, and the nonconvex Styblinski-Tang function~\cite{stein2018interpolation}, all evaluated in $n=10$ dimensions with 1000 uniformly sampled points over $[0,1]^n$. Fig.~\ref{fig:all_functions} illustrates each function for $n=2$, and the corresponding mathematical formulations are provided in Section~\ref{sec:Formulations of Synthetic Functions sup} of the supplementary material. The real-world datasets are taken from \cite{heider2023suprb}, and include two highly nonlinear problems (ASN and CS) and two nearly linear problems (CCPP and EEC).

\preparesyntheticsurfaceplots
\begin{figure*}[!t]
  \centering
  \subfloat[\shortstack{$f_1$: Rastrigin Function}]{
    \makebox[0.22\textwidth][c]{\syntheticrastriginplot}
    \label{fig:rastrigin}
  }\hfill
  \subfloat[\shortstack{$f_2$: Rosenbrock Function}]{
    \makebox[0.22\textwidth][c]{\syntheticrosenbrockplot}
    \label{fig:rosenbrock}
  }\hfill
  \subfloat[\shortstack{$f_3$: Cross Function}]{
    \makebox[0.22\textwidth][c]{\syntheticcrossplot}
    \label{fig:cross_function}
  }\hfill
  \subfloat[\shortstack{$f_4$: Styblinski-Tang Function}]{
    \makebox[0.22\textwidth][c]{\syntheticstyblinskitangplot}
    \label{fig:styblinski_tang}
  }
  \caption{Surface plots of the four synthetic test functions ($n=2$) shown for visualization; throughout the experiments we fix $n=10$.}
  \label{fig:all_functions}
  \vspace{0.5\baselineskip}

  \begingroup
  \makeatletter
  \def\@captype{table}
  \makeatother
  \centering
  \caption{Summary of Main Results. Green Indicates the Best Value. Rank Is the Average Rank Across Problems. {For Each Problem, Symbols $+/-/\sim$ Denote Significantly Better/Worse/Similar Performance Compared With KACS, Based on a Paired Wilcoxon Signed-Rank Test Over 30 Runs Using Identical Data Splits. The $p$-Values in the Final Row Are From Across-Problem Paired Wilcoxon Signed-Rank Tests Applied to the 30-Run Mean Values for the Eight Benchmark Problems.} Arrows $\uparrow/\downarrow$ Indicate Rank Improvement/Decline Relative to KACS. Statistical Significance {Is} at $\alpha=0.05$ ($\dag$)}\label{r2-11-1}
  \label{tb:result}
  \renewcommand{\arraystretch}{1.2}
  \resizebox{\width}{!}{
    \footnotesize
    \begin{tabular}{c|cc|cc|cc|cc|cc}
      \bhline{1pt}
      &\multicolumn{4}{c|}{\textsc{Model Accuracy}}
      &\multicolumn{4}{c|}{\textsc{Model Complexity}}
      &\multicolumn{2}{c}{\textsc{Trade-Off}}\\[-0.2ex]\rule{0pt}{2.6ex}
      &\multicolumn{2}{c|}{\text{Training MAE}}
      &\multicolumn{2}{c|}{\text{Testing MAE}}
      &\multicolumn{2}{c|}{\text{\#Rules} $(|\mathcal{P}|)$}
      &\multicolumn{2}{c|}{\text{\#Parameters} $(k)$}
      &\multicolumn{2}{c}{\text{AIC}}\\
      & XCSF & KACS & XCSF & KACS & XCSF & KACS & XCSF & KACS & XCSF & KACS\\
      \bhline{1pt}
      $f_1$ & \cellcolor{g}0.2033 $+$ & 0.2194 & 0.2642 $-$ & \cellcolor{g}0.2351 & 4162 $-$ & \cellcolor{g}1341 & 45780 $-$ & \cellcolor{g}2681 & 89210 $-$ & \cellcolor{g}3024 \\
      $f_2$ & \cellcolor{g}0.1124 $\sim$ & 0.1157 & 0.1564 $-$ & \cellcolor{g}0.1253 & 4087 $-$ & \cellcolor{g}1462 & 44960 $-$ & \cellcolor{g}2924 & 86580 $-$ & \cellcolor{g}2342 \\
      $f_3$ & 0.2510 $-$ & \cellcolor{g}0.2097 & 0.3246 $-$ & \cellcolor{g}0.2329 & 4130 $-$ & \cellcolor{g}1429 & 45430 $-$ & \cellcolor{g}2857 & 88890 $-$ & \cellcolor{g}3324 \\
      $f_4$ & 0.1680 $-$ & \cellcolor{g}0.1334 & 0.2303 $-$ & \cellcolor{g}0.1430 & 4170 $-$ & \cellcolor{g}1414 & 45870 $-$ & \cellcolor{g}2829 & 89080 $-$ & \cellcolor{g}2412 \\
      ASN & 0.1434 $-$ & \cellcolor{g}0.1094 & 0.1478 $-$ & \cellcolor{g}0.1144 & 2224 $-$ & \cellcolor{g}1417 & 13350 $-$ & \cellcolor{g}2834 & 22270 $-$ & \cellcolor{g}371.1 \\
      CCPP & 0.0930 $\sim$ & \cellcolor{g}0.0918 & 0.0929 $\sim$ & \cellcolor{g}0.0926 & 1953 $\sim$ & \cellcolor{g}1924 & 9764 $-$ & \cellcolor{g}3848 & -17360 $-$ & \cellcolor{g}-29070 \\
      CS & 0.1299 $\sim$ & \cellcolor{g}0.1213 & 0.1350 $\sim$ & \cellcolor{g}0.1288 & 2985 $-$ & \cellcolor{g}1425 & 26870 $-$ & \cellcolor{g}2851 & 50450 $-$ & \cellcolor{g}2237 \\
      EEC & \cellcolor{g}0.0820 $\sim$ & 0.0967 & \cellcolor{g}0.0847 $\sim$ & 0.1020 & 2925 $-$ & \cellcolor{g}1012 & 26330 $-$ & \cellcolor{g}2023 & 49850 $-$ & \cellcolor{g}1125 \\
      \bhline{1pt}
      Rank & \textit{1.62}$\downarrow$ & \cellcolor{g}\textit{1.38} & \textit{1.88}$\downarrow^{\dag}$ & \cellcolor{g}\textit{1.12} & \textit{2.00}$\downarrow^{\dag}$ & \cellcolor{g}\textit{1.00} & \textit{2.00}$\downarrow^{\dag}$ & \cellcolor{g}\textit{1.00} & \textit{2.00}$\downarrow^{\dag}$ & \cellcolor{g}\textit{1.00} \\
      $+/-/\sim$ & 1/3/4 & - & 0/5/3 & - & 0/7/1 & - & 0/8/0 & - & 0/8/0 & - \\
      $p$-value & 0.383 & - & 0.0391 & - & 0.00781 & - & 0.00781 & - & 0.00781 & - \\
      \bhline{1pt}
  \end{tabular}}
  \endgroup

  \centering
  \subfloat[$f_1$: Rastrigin Function]{
    \includegraphics[width=0.24\textwidth]{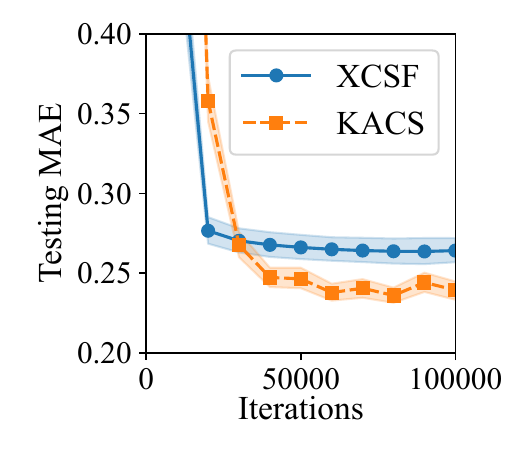}
    \label{fig:f1_mae}
  }
  \subfloat[$f_2$: Rosenbrock Function]{\includegraphics[width=0.24\textwidth]{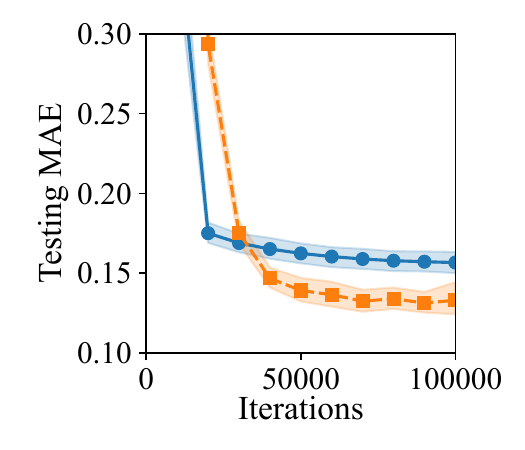} \label{fig:f2_mae}}
  \subfloat[$f_3$: Cross Function]{\includegraphics[width=0.24\textwidth]{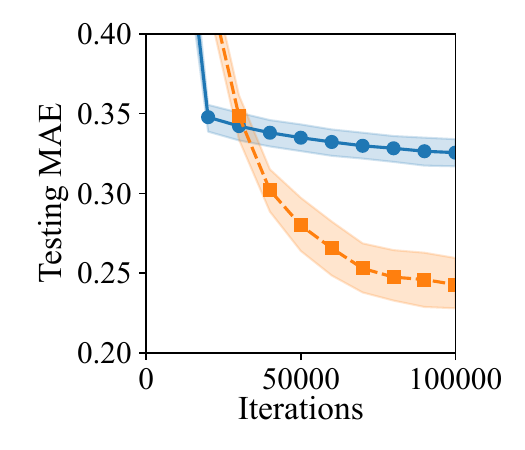} \label{fig:f3_mae}}
  \subfloat[$f_4$: Styblinski-Tang Function]{\includegraphics[width=0.24\textwidth]{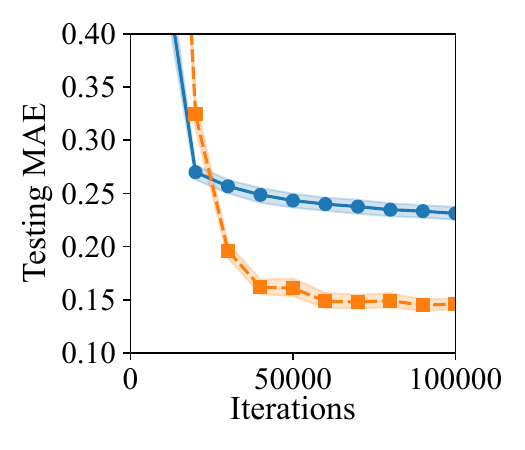} \label{fig:f4_mae}}
  \caption{Testing MAE learning curves for all {four synthetic functions}. Curves show the mean over 30 runs, and shaded regions denote 95\% confidence intervals.}
  \label{fig:all_mae}
  \label{others-8}
\end{figure*}

\subsubsection{Hyperparameter Settings and Protocol}
\label{sss: Hyperparameter Settings and Protocol}
We use the same hyperparameter settings for both XCSF and KACS. A detailed description of the hyperparameters is provided in Section~\ref{sec:Descriptions of Hyperparameters sup} of the supplementary material. Unless otherwise stated, the main hyperparameters are set as follows: $N=6400$, $\epsilon_0=0.01$, $\beta=0.2$, $\alpha=1$, $\nu=1$, $\delta=0.1$, $m_0=0.1$, $r_0=1.0$, $\{\theta_{\rm del},\theta_{\rm sub},\theta_{\rm GA}\}=50$, $\chi=0.8$, $\mu=0.04$, $\tau=0.4$, where most values are typical defaults (e.g.,~\cite{stein2018interpolation,preen2021autoencoding,shiraishi2025xkan}). For synthetic test functions, we set $P_\# = 0$ \cite{stein2018interpolation} so that covering always generates local rules, which quickly tile the uniformly sampled input space. For real-world datasets, we set $P_\# = 0.8$ \cite{nakata2020learning} so that covering starts from globally-covering rules. This setting is more stable under highly non-uniform input distributions, which are frequently observed in real-world situations.

We run each method for 100,000 learning iterations and report results averaged over 30 independent runs. For each run, we use Monte Carlo cross-validation with 90\% training data and 10\% testing data, following \cite{preen2021autoencoding}. {Unlike online function-approximation protocols that continue learning during testing after a warm-up, we evaluate unseen test samples with learning disabled.}\label{r3-18} {For each split, min--max scaling parameters are estimated from the training data only and then applied to the test data, mapping inputs to $[0,1]^n$ and targets to $[-1,1]$. Thus, the reported MAEs are on the same normalized target scale across all problems.}\label{r2-9} We use the Wilcoxon signed-rank test with significance level $\alpha=0.05$. {No post-training rule-compaction \cite{shiraishi2025xkan} or condensation \cite{butz2008function} was applied.}\label{r2-16} All experiments were conducted using our Julia implementation, which is publicly available at \url{https://github.com/YNU-NakataLab/KACS}.\label{r3-17}

\subsection{Main Results and Discussion}
\label{subsec:exp_results}
Table~\ref{tb:result} reports the averages over 30 runs of model accuracy (training MAE and testing MAE), model complexity (number of rules and parameters), and their trade-off (AIC) at the end of learning for XCSF and KACS. {Section~\ref{sec:supplementary-statistics} of the supplementary material provides the corresponding standard deviations, paired Wilcoxon $p$-values, and effect sizes for every benchmark and metric.}\label{r2-11-2}\label{r1-3-1}

From Table~\ref{tb:result}, we can see that there is no statistically significant difference in training MAE between XCSF and KACS ($p=0.383$). In contrast, KACS is significantly better than XCSF in testing MAE, number of rules, number of parameters, and AIC ($p<0.05$).

{To assess the sensitivity of our conclusions to the evaluation protocol, we additionally compared XCSF and KACS using a protocol based on Heider's evaluation design~\cite{heider2023suprb}, with target standardization, MSE, a 75/25 train--test split, and eight Monte Carlo splits combined with eight random seeds per split. Under this alternative protocol, KACS achieved significantly lower training and testing MSE than XCSF on all eight benchmarks, while using fewer parameters and yielding lower AIC in every case. The details are provided in Section~\ref{sec:alternative_protocol_sup} of the supplementary material.}
\label{r2-9-2}

\subsubsection{Model Accuracy}
\label{sss:model_accuracy}
Fig.~\ref{fig:all_mae} shows the testing MAE learning curves across all {four synthetic functions. The training and testing MAE learning curves across all eight problems are provided in Sections~\ref{ss:Training MAE Learning Curves sup} and \ref{ss:Testing MAE Learning Curves sup} of the supplementary material, respectively.}\label{others-9} From Table~\ref{tb:result} and Fig.~\ref{fig:all_mae}, KACS tends to achieve lower testing MAE than XCSF as training progresses, except on near-linear problems where both methods perform comparably. This indicates that reducing the problem to a set of one-dimensional subproblems allows KACS to improve prediction accuracy more efficiently than XCSF, which optimizes rules directly in the full $n$-dimensional space.\label{others-10}

The magnitude of this performance gap depends on the problem characteristics, which are summarized below:
\begin{itemize}
  \item All four artificial functions $f_1$--$f_4$ are strongly nonlinear 10-dimensional problems. $f_1$ has strong local complexity in the form of multimodality. $f_2$ shares this local complexity in the form of a steep curved valley and also exhibits interactions between adjacent dimensions. $f_3$ has interactions across all dimensions. $f_4$ is challenging because function values rise steeply near the edges as dimensionality increases \cite{stein2018interpolation}.
  \item Among the real-world datasets, ASN is the hardest to predict due to its strong nonlinearity \cite{heider2023suprb}. CS is also nonlinear, but somewhat easier to predict \cite{heider2023suprb}. CCPP and EEC are relatively straightforward problems with stronger linearity, where even simple linear models can achieve adequate prediction \cite{heider2023suprb}.
\end{itemize}

On all four artificial functions, KACS achieved significantly lower testing MAE than XCSF. $f_1$ is a notable case: XCSF recorded significantly lower training MAE, yet worse testing MAE than KACS. Since XCSF used roughly 17 times more parameters on this problem (45780 for XCSF vs.\ 2681 for KACS), this pattern points to overfitting driven by multi-dimensional rules on a multimodal function. A similar pattern appears on $f_2$: the gap between training and testing MAE is much wider for XCSF ($0.1124 \to 0.1564$, $\Delta = 0.044$) than for KACS ($0.1157 \to 0.1253$, $\Delta = 0.010$), suggesting that steep curved valleys cause mild overfitting for $n$-dimensional rules. On $f_3$, where interactions span all dimensions simultaneously, KACS outperformed XCSF in both training and testing MAE. XCSF must use $n$-dimensional rules to capture such interactions implicitly, whereas KACS {represents them through nonlinear outer functions that aggregate all dimensions.}\label{r3-6-2} On $f_4$, the difficulty lies in approximating steep edge regions in 10-dimensional space, and KACS again showed a clear advantage.

Among the real-world datasets, KACS achieved significantly lower testing MAE than XCSF only on ASN. For CS and CCPP, no statistically significant differences were found. For EEC, KACS showed higher testing MAE than XCSF (0.1020 vs. 0.0847), but the difference was not statistically significant. For CCPP and EEC, $n$-dimensional linear rules are already sufficient (testing MAE below 0.1). In such near-linear problems, the KA decomposition introduces unnecessary structural overhead: KACS must coordinate $n(2n+1)$ inner submodels and compose their outputs through $2n+1$ outer submodels to recover a function that XCSF can represent directly with multiple linear rules. This additional indirection accumulates approximation error in the outer stage, which outweighs the benefits of dimensional decomposition.

In Fig.~\ref{fig:all_mae}, KACS started with higher testing MAE than XCSF in the early phase of training (0--20000 iterations) across all problems. This is because KACS needs to initialize and tune many one-dimensional submodels at once, which produces inaccurate predictions early on and temporarily inflates the overall error. The GA-driven search is also distributed across all {$2n^2+3n+1$} submodels at the start, which slows convergence of the final output compared to XCSF.\label{others-13}

The above findings can be summarized as follows:
\begin{itemize}
  \item The accuracy advantage of KACS grows with problem nonlinearity, particularly when cross-dimensional interactions or sharp local structures are present, and fades when the problem is well-captured by linear approximations.
  \item In all cases, a warm-up period is needed at the start of KACS training, as coordinating many one-dimensional submodels from scratch inevitably raises early-stage error.
\end{itemize}

Finally, {we evaluate rotated, translated, and scaled variants of $f_3$ to test sensitivity to ridge orientation, location, and scale. As reported in Section~\ref{sec:rotation_sup} of the supplementary material, KACS achieves significantly lower testing MAE than XCSF for all three variants, indicating that its advantage is not limited to the original axis-aligned structure of $f_3$.}\label{r2-13-1}

\subsubsection{Model Complexity and Accuracy-Complexity Trade-off}

One of the most significant results in this article is the large difference in model complexity. As shown in Table~\ref{tb:result}, KACS achieved comparable or higher prediction accuracy using only roughly 6--40\% of the parameters required by XCSF across all eight problems. The parameter-count learning curves are provided in Section~\ref{ss:Parameter-Count Learning Curves sup} of the supplementary material.

This gap reflects a fundamental difference in how each method constructs its rules. XCSF covers the input space with local linear models, each defined over the full $n$-dimensional space. As $n$ grows, defining meaningful local regions and fitting reliable linear models within them both become harder, which tends to produce redundant or poorly fitted rules, driving up model size and the risk of overfitting (cf. Section~\ref{sss:model_accuracy}). KACS avoids this by decomposing the problem into one-dimensional subproblems, allowing it to represent multivariate functions compactly with far fewer parameters.

KACS achieved significantly lower AIC than XCSF on all eight problems, indicating that its models are statistically more efficient. On CS, where testing MAE is similar for both methods, the AIC advantage of KACS comes entirely from its smaller parameter count. On EEC, where KACS records slightly higher testing MAE than XCSF, KACS still achieves far better AIC, showing that its reduction in model complexity more than compensates for the small loss in accuracy.

\subsection{Scalability Analysis}
\label{subsec:scalability}

This subsection analyzes the scalability of XCSF and KACS. We use $f_3$ (Cross function) as the representative benchmark because it is the only synthetic function in our set that involves interactions across all dimensions, making it directly sensitive to changes in $n$.\label{others-6}

\subsubsection{Experimental Design}
\label{sss:scalability_design}
Unless otherwise noted, hyperparameter settings and training iterations follow the setup in Section~\ref{subsec:exp_setup}. We report testing MAE and two complexity measures, $|\mathcal{P}|$ and $k$, as means with 95\% confidence intervals.

\subsubsection{Results on Sample-Size Scaling}
\label{sss:scalability_ns}

We vary $N_S\in\{125,250,500,1000,2000,4000,8000\}$ while fixing $n = 10$. Fig.~\ref{fig:scaling_ns} summarizes how both methods respond to more data. The detailed tabular results are provided in Section~\ref{ss:Sample-Size Scaling sup} of the supplementary material.

Increasing $N_S$ improves accuracy for both methods, but the gain is larger for KACS. XCSF testing MAE decreases from 0.416 ($N_S=125$) to 0.312 ($N_S=8000$), while KACS decreases from 0.380 to 0.216 and stabilizes after $N_S \approx 2000$. Complexity grows much more slowly for KACS: XCSF rules and parameters increase from 3164 to 4724 and from 34800 to 51960, respectively, whereas KACS stays around 1242--1430 rules and 2484--2860 parameters. Thus, more data mainly improves KACS accuracy without inducing comparable complexity growth.

\subsubsection{Results on Dimensional Scaling}
\label{sss:scalability_n}
\begin{figure*}[!t]
  \centering
  \pgfplotslegendfromname{sharedlegend}
  \\[-10pt]
  \subfloat[Testing MAE vs. $N_S$]{
    \begin{tikzpicture}
      \begin{axis}[
          width=5.8cm, height=3.0cm,
          xlabel={\#Samples $N_S$},
          ylabel={Testing MAE},
          ytick={0.2,0.3,0.4,0.5},
          xmode=log,
          xmin=100, xmax=10000,
          ymin=0.2, ymax=0.5,
          xticklabel style={font=\small},
          yticklabel style={font=\small},
          xlabel style={font=\small},
          ylabel style={font=\small},
          legend style={
            font=\small,
            at={(0.98,0.98)},
            anchor=north east,
            draw=black,
            fill=white,
          },
        ]

        \addplot[name path=s1_upper, draw=none, forget plot]
        table[x=trials, y=upper, col sep=comma]{fig/xcsf_sam_test.csv};

        \addplot[name path=s1_lower, draw=none, forget plot]
        table[x=trials, y=lower, col sep=comma]{fig/xcsf_sam_test.csv};

        \addplot[myblue!30, forget plot]
        fill between[of=s1_upper and s1_lower];

        \addplot[
          myblue, thick,
          mark=*, mark size=2pt,
          mark options={fill=myblue},
        ]
        table[x=trials, y=mean, col sep=comma]{fig/xcsf_sam_test.csv};

        \addplot[name path=s2_upper, draw=none, forget plot]
        table[x=trials, y=upper, col sep=comma]{fig/kacs_sam_test.csv};

        \addplot[name path=s2_lower, draw=none, forget plot]
        table[x=trials, y=lower, col sep=comma]{fig/kacs_sam_test.csv};

        \addplot[myorange!30, forget plot]
        fill between[of=s2_upper and s2_lower];

        \addplot[
          myorange, thick, dashed,
          mark=square*, mark size=2.6pt,
          mark options={solid, draw=myorange, fill=myorange, line width=0.5pt},
        ]
        table[x=trials, y=mean, col sep=comma]{fig/kacs_sam_test.csv};

      \end{axis}
    \end{tikzpicture}
  }
  \subfloat[\#Rules vs. $N_S$]{
    \begin{tikzpicture}
      \begin{axis}[
          width=5.8cm, height=3.0cm,
          xlabel={\#Samples $N_S$},
          ylabel={$\#$Rules $|\mathcal{P}|$},
          xmode=log,
          xmin=100, xmax=10000,
          ymin=1000, ymax=7000,
          ytick={1000,3000,5000,7000},
          xticklabel style={font=\small},
          yticklabel style={font=\small},
          xlabel style={font=\small},
          ylabel style={font=\small},
        ]

        \addplot[name path=s1_upper, draw=none, forget plot]
        table[x=trials, y=upper, col sep=comma]{fig/xcsf_sam_pop.csv};

        \addplot[name path=s1_lower, draw=none, forget plot]
        table[x=trials, y=lower, col sep=comma]{fig/xcsf_sam_pop.csv};

        \addplot[myblue!30, forget plot]
        fill between[of=s1_upper and s1_lower];

        \addplot[
          myblue, thick,
          mark=*, mark size=2pt,
          mark options={fill=myblue},
          forget plot,
        ]
        table[x=trials, y=mean, col sep=comma]{fig/xcsf_sam_pop.csv};

        \addplot[name path=s2_upper, draw=none, forget plot]
        table[x=trials, y=upper, col sep=comma]{fig/kacs_sam_pop.csv};

        \addplot[name path=s2_lower, draw=none, forget plot]
        table[x=trials, y=lower, col sep=comma]{fig/kacs_sam_pop.csv};

        \addplot[myorange!30, forget plot]
        fill between[of=s2_upper and s2_lower];

        \addplot[
          myorange, thick, dashed,
          mark=square*, mark size=2.6pt,
          mark options={solid, draw=myorange, fill=myorange, line width=0.5pt},
          forget plot,
        ]
        table[x=trials, y=mean, col sep=comma]{fig/kacs_sam_pop.csv};

        \addplot[black, dashed, thick, forget plot]
        coordinates {(100,6400)(10000,6400)};

        \node[anchor=north west, font=\small]
        at (axis cs:100, 6400) {$N=6,400$};

      \end{axis}
    \end{tikzpicture}
  }
  \subfloat[\#Parameters vs. $N_S$]{
    \begin{tikzpicture}
      \begin{axis}[
          width=5.8cm, height=3.0cm,
          xlabel={\#Samples $N_S$},
          ylabel={$\#$Parameters $k$},
          xmode=log,
          xmin=100, xmax=10000,
          ymode=log,
          ymin=1000, ymax=100000,
          xticklabel style={font=\small},
          yticklabel style={font=\small},
          xlabel style={font=\small},
          ylabel style={font=\small},
          legend entries={},
        ]

        \addplot[name path=s1_upper, draw=none, forget plot]
        table[x=trials, y=upper, col sep=comma]{fig/xcsf_sam_nop.csv};

        \addplot[name path=s1_lower, draw=none, forget plot]
        table[x=trials, y=lower, col sep=comma]{fig/xcsf_sam_nop.csv};

        \addplot[myblue!30, forget plot]
        fill between[of=s1_upper and s1_lower];

        \addplot[
          myblue, thick,
          mark=*, mark size=2pt,
          mark options={fill=myblue},
        ]
        table[x=trials, y=mean, col sep=comma]{fig/xcsf_sam_nop.csv};

        \addplot[name path=s2_upper, draw=none, forget plot]
        table[x=trials, y=upper, col sep=comma]{fig/kacs_sam_nop.csv};

        \addplot[name path=s2_lower, draw=none, forget plot]
        table[x=trials, y=lower, col sep=comma]{fig/kacs_sam_nop.csv};

        \addplot[myorange!30, forget plot]
        fill between[of=s2_upper and s2_lower];

        \addplot[
          myorange, thick, dashed,
          mark=square*, mark size=2.6pt,
          mark options={solid, draw=myorange, fill=myorange, line width=0.5pt},
        ]
        table[x=trials, y=mean, col sep=comma]{fig/kacs_sam_nop.csv};

      \end{axis}
    \end{tikzpicture}
  }
  \caption{Sample-size scaling results for XCSF and KACS on $f_3$, where $N_S\in\{125,250,500,1000,2000,4000,8000\}$ and $n=10$.}
  \label{fig:scaling_ns}
\end{figure*}
\begin{figure*}[!t]
  \centering
  \pgfplotslegendfromname{sharedlegend}
  \\[-10pt]
  \subfloat[Testing MAE vs. $n$]{
    \begin{tikzpicture}
      \begin{axis}[
          width=5.8cm, height=3.0cm,
          xlabel={Input Dimension $n$},
          ylabel={Testing MAE},
          ytick={0,0.1,0.2,0.3,0.4,0.5},
          xmin=0, xmax=20,
          ymin=0.0, ymax=0.5,
          xticklabel style={font=\small},
          yticklabel style={font=\small},
          xlabel style={font=\small},
          ylabel style={font=\small},
          legend style={
            font=\small,
            at={(0.98,0.02)},
            anchor=south east,
            draw=black,
            fill=white,
          },
        ]

        \addplot[name path=s1_upper, draw=none, forget plot]
        table[x=trials, y=upper, col sep=comma]{fig/xcsf_dim_test.csv};

        \addplot[name path=s1_lower, draw=none, forget plot]
        table[x=trials, y=lower, col sep=comma]{fig/xcsf_dim_test.csv};

        \addplot[myblue!30, forget plot]
        fill between[of=s1_upper and s1_lower];

        \addplot[
          myblue, thick,
          mark=*, mark size=2pt,
          mark options={fill=myblue},
        ]
        table[x=trials, y=mean, col sep=comma]{fig/xcsf_dim_test.csv};

        \addplot[name path=s2_upper, draw=none, forget plot]
        table[x=trials, y=upper, col sep=comma]{fig/kacs_dim_test.csv};

        \addplot[name path=s2_lower, draw=none, forget plot]
        table[x=trials, y=lower, col sep=comma]{fig/kacs_dim_test.csv};

        \addplot[myorange!30, forget plot]
        fill between[of=s2_upper and s2_lower];

        \addplot[
          myorange, thick,dashed,
          mark=square*, mark size=2.6pt,
          mark options={solid, draw=myorange, fill=myorange, line width=0.5pt},
        ]
        table[x=trials, y=mean, col sep=comma]{fig/kacs_dim_test.csv};

      \end{axis}
  \end{tikzpicture}}
  \subfloat[\#Rules vs. $n$]{
    \begin{tikzpicture}
      \begin{axis}[
          width=5.8cm, height=3.0cm,
          xlabel={Input Dimension $n$},
          ylabel={$\#$Rules $|\mathcal{P}|$},
          xmin=0, xmax=20,
          ymin=1000, ymax=7000,
          xtick={0,5,10,15,20},
          ytick={1000,3000,5000,7000},
          xticklabel style={font=\small},
          yticklabel style={font=\small},
          xlabel style={font=\small},
          ylabel style={font=\small},
        ]

        \addplot[name path=s1_upper, draw=none, forget plot]
        table[x=trials, y=upper, col sep=comma]{fig/xcsf_dim_pop.csv};

        \addplot[name path=s1_lower, draw=none, forget plot]
        table[x=trials, y=lower, col sep=comma]{fig/xcsf_dim_pop.csv};

        \addplot[myblue!30, forget plot]
        fill between[of=s1_upper and s1_lower];

        \addplot[
          myblue, thick,
          mark=*, mark size=2pt,
          mark options={fill=myblue},
          forget plot,
        ]
        table[x=trials, y=mean, col sep=comma]{fig/xcsf_dim_pop.csv};

        \addplot[name path=s2_upper, draw=none, forget plot]
        table[x=trials, y=upper, col sep=comma]{fig/kacs_dim_pop.csv};

        \addplot[name path=s2_lower, draw=none, forget plot]
        table[x=trials, y=lower, col sep=comma]{fig/kacs_dim_pop.csv};

        \addplot[myorange!30, forget plot]
        fill between[of=s2_upper and s2_lower];

        \addplot[
          myorange, thick, dashed,
          mark=square*, mark size=2.6pt,
          mark options={solid, draw=myorange, fill=myorange, line width=0.5pt},
          forget plot,
        ]
        table[x=trials, y=mean, col sep=comma]{fig/kacs_dim_pop.csv};

        \addplot[black, dashed, thick, forget plot]
        coordinates {(0,6400)(20,6400)};

        \node[anchor=north west, font=\small]
        at (axis cs:0, 6400) {$N=6,400$};

      \end{axis}
    \end{tikzpicture}
  }
  \subfloat[\#Parameters vs. $n$]{
    \begin{tikzpicture}
      \begin{axis}[
          width=5.8cm, height=3.0cm,
          xlabel={Input Dimension $n$},
          ylabel={$\#$Parameters $k$},
          xmin=0, xmax=20,
          ymode=log,
          ymin=1000, ymax=110000,
          xticklabel style={font=\small},
          yticklabel style={font=\small},
          xlabel style={font=\small},
          ylabel style={font=\small},
          legend entries={},
        ]

        \addplot[name path=s1_upper, draw=none, forget plot]
        table[x=trials, y=upper, col sep=comma]{fig/xcsf_dim_nop.csv};

        \addplot[name path=s1_lower, draw=none, forget plot]
        table[x=trials, y=lower, col sep=comma]{fig/xcsf_dim_nop.csv};

        \addplot[myblue!30, forget plot]
        fill between[of=s1_upper and s1_lower];

        \addplot[
          myblue, thick,
          mark=*, mark size=2pt,
          mark options={fill=myblue},
        ]
        table[x=trials, y=mean, col sep=comma]{fig/xcsf_dim_nop.csv};

        \addplot[name path=s2_upper, draw=none, forget plot]
        table[x=trials, y=upper, col sep=comma]{fig/kacs_dim_nop.csv};

        \addplot[name path=s2_lower, draw=none, forget plot]
        table[x=trials, y=lower, col sep=comma]{fig/kacs_dim_nop.csv};

        \addplot[myorange!30, forget plot]
        fill between[of=s2_upper and s2_lower];

        \addplot[
          myorange, thick, dashed,
          mark=square*, mark size=2.6pt,
          mark options={solid, draw=myorange, fill=myorange, line width=0.5pt},
        ]
        table[x=trials, y=mean, col sep=comma]{fig/kacs_dim_nop.csv};

      \end{axis}
  \end{tikzpicture}}

  \caption{Dimensional scaling results for XCSF and KACS on $f_3$, where $n\in\{2,4,6,8,10,12,14,16,18,20\}$ and $N_S=1000$.}
  \label{fig:scaling_n}
\end{figure*}

We vary $n\in\{2,4,6,8,10,12,14,16,18,20\}$ while fixing $N_S = 1000$. Fig.~\ref{fig:scaling_n} summarizes how both methods respond to increasing dimensionality. The detailed tabular results are provided in Section~\ref{ss:Dimensional Scaling sup} of the supplementary material.

As $n$ increases, XCSF testing MAE rises from 0.158 at $n=2$ to about 0.27--0.33 at $n=4$--20; KACS significantly outperforms XCSF from $n=2$ to $n=16$ (e.g., 0.233 vs.\ 0.325 at $n=10$), but degrades sharply at higher dimensions (0.384 at $n=18$, 1.25 at $n=20$). Complexity trends are clearer: XCSF rules grow monotonically (1501 to 5283), and its parameter count grows from 4504 to 110900. KACS remains compact, with 1186--1955 rules and 2372--3909 parameters across most settings, with only a notable increase at $n \ge 18$.

\begin{figure}[t]
  \centering
  \pgfplotslegendfromname{sharedlegend}
  \\[-10pt]

  \subfloat[Testing MAE vs. $n$]{
    \begin{tikzpicture}
      \begin{axis}[
          width=2.9cm, height=3.0cm,
          xlabel={$n$},
          ylabel={Testing MAE},
          ytick={0.1,0.2,0.3},
          ymin=0.1, ymax=0.3,
          xticklabel style={font=\small},
          yticklabel style={font=\small},
          xlabel style={font=\small},
          ylabel style={font=\small},
        ]
        \addplot[name path=s1_upper, draw=none, forget plot]
        table[x=trials, y=upper, col sep=comma]{fig/xcsf_guideline_test.csv};
        \addplot[name path=s1_lower, draw=none, forget plot]
        table[x=trials, y=lower, col sep=comma]{fig/xcsf_guideline_test.csv};
        \addplot[myblue!30, forget plot]
        fill between[of=s1_upper and s1_lower];
        \addplot[myblue, thick, mark=*, mark size=2pt, mark options={fill=myblue}]
        table[x=trials, y=mean, col sep=comma]{fig/xcsf_guideline_test.csv};

        \addplot[name path=s2_upper, draw=none, forget plot]
        table[x=trials, y=upper, col sep=comma]{fig/kacs_guideline_test.csv};
        \addplot[name path=s2_lower, draw=none, forget plot]
        table[x=trials, y=lower, col sep=comma]{fig/kacs_guideline_test.csv};
        \addplot[myorange!30, forget plot]
        fill between[of=s2_upper and s2_lower];
        \addplot[myorange, thick, dashed, mark=square*, mark size=2.6pt, mark options={solid, draw=myorange, fill=myorange, line width=0.5pt}]
        table[x=trials, y=mean, col sep=comma]{fig/kacs_guideline_test.csv};
      \end{axis}
    \end{tikzpicture}
  }
  \hfil
  \subfloat[\#Rules vs. $n$]{
    \begin{tikzpicture}
      \begin{axis}[
          width=2.9cm, height=3.0cm,
          xlabel={$n$},
          ylabel={\#Rules $|\mathcal{P}|$},
          ymin=0, ymax=20000,
          xmin=16, xmax=20,
          xtick={16,18,20},
          xticklabel style={font=\small},
          yticklabel style={font=\small},
          xlabel style={font=\small},
          ylabel style={font=\small},
          legend to name=sharedlegend,
          legend style={
            draw=none,
            font=\footnotesize,
            legend columns=-1,
            /tikz/every even column/.append style={xshift=0.5em},
          },
        ]
        \addplot[name path=s1_upper, draw=none, forget plot]
        table[x=trials, y=upper, col sep=comma]{fig/xcsf_guideline_pop.csv};
        \addplot[name path=s1_lower, draw=none, forget plot]
        table[x=trials, y=lower, col sep=comma]{fig/xcsf_guideline_pop.csv};
        \addplot[myblue!30, forget plot]
        fill between[of=s1_upper and s1_lower];
        \addplot[myblue, thick, mark=*, mark size=2pt, mark options={fill=myblue}]
        table[x=trials, y=mean, col sep=comma]{fig/xcsf_guideline_pop.csv};
        \addlegendentry{XCSF}

        \addplot[name path=s2_upper, draw=none, forget plot]
        table[x=trials, y=upper, col sep=comma]{fig/kacs_guideline_pop.csv};
        \addplot[name path=s2_lower, draw=none, forget plot]
        table[x=trials, y=lower, col sep=comma]{fig/kacs_guideline_pop.csv};
        \addplot[myorange!30, forget plot]
        fill between[of=s2_upper and s2_lower];
        \addplot[myorange, thick, dashed, mark=square*, mark size=2.6pt, mark options={solid, draw=myorange, fill=myorange, line width=0.5pt}]
        table[x=trials, y=mean, col sep=comma]{fig/kacs_guideline_pop.csv};
        \addlegendentry{KACS}
      \end{axis}
    \end{tikzpicture}
  }
  \hfil
  \subfloat[\#Parameters vs. $n$]{
    \begin{tikzpicture}
      \begin{axis}[
          width=2.9cm, height=3.0cm,
          xlabel={$n$},
          ylabel={\#Parameters $k$},
          ymode=log,
          ymin=1000, ymax=1000000,
          xtick={16,18,20},
          xticklabel style={font=\small},
          yticklabel style={font=\small},
          xlabel style={font=\small},
          ylabel style={font=\small},
        ]
        \addplot[name path=s1_upper, draw=none, forget plot]
        table[x=trials, y=upper, col sep=comma]{fig/xcsf_guideline_nop.csv};
        \addplot[name path=s1_lower, draw=none, forget plot]
        table[x=trials, y=lower, col sep=comma]{fig/xcsf_guideline_nop.csv};
        \addplot[myblue!30, forget plot]
        fill between[of=s1_upper and s1_lower];
        \addplot[myblue, thick, mark=*, mark size=2pt, mark options={fill=myblue}]
        table[x=trials, y=mean, col sep=comma]{fig/xcsf_guideline_nop.csv};

        \addplot[name path=s2_upper, draw=none, forget plot]
        table[x=trials, y=upper, col sep=comma]{fig/kacs_guideline_nop.csv};
        \addplot[name path=s2_lower, draw=none, forget plot]
        table[x=trials, y=lower, col sep=comma]{fig/kacs_guideline_nop.csv};
        \addplot[myorange!30, forget plot]
        fill between[of=s2_upper and s2_lower];
        \addplot[myorange, thick, dashed, mark=square*, mark size=2.6pt, mark options={solid, draw=myorange, fill=myorange, line width=0.5pt}]
        table[x=trials, y=mean, col sep=comma]{fig/kacs_guideline_nop.csv};
      \end{axis}
    \end{tikzpicture}
  }

  \caption{Dimensional scaling results for XCSF and KACS on $f_3$, where $n\in\{16,18,20\}$, $N\in\{12672,15984,19680\}$, and $N_S=1000$.}
  \label{fig:scaling_n_guideline}
\end{figure}

\subsubsection{Discussion}
\label{sss:scalability_discussion}

Figs.~\ref{fig:scaling_ns} and~\ref{fig:scaling_n} together support two claims. First, KACS provides a better accuracy-complexity trade-off in the practical regime ($n \le 14$ and all tested sample sizes), consistently using far fewer rules and parameters than XCSF. Second, under the default $N=6400$, KACS shows a clear performance drop in higher dimensions ($n=16$--20): accuracy deteriorates sharply even though the model remains compact.

The failure at higher $n$ stems from two related capacity bottlenecks. First, KACS decomposes the $n$-dimensional problem into $(2n+1)(n+1) = 2n^2+3n+1$ one-dimensional subproblems, so the number of macro-rules per subproblem shrinks as $n$ grows. At $n=20$, for instance, this gives 861 subproblems, leaving at most $\lfloor 6400/861 \rfloor \approx 7$ macro-rules per subproblem on average, and typically fewer because individual rules can carry numerosity greater than one. Second, and more critically, the inner and outer submodels have fundamentally different input domains. The inner submodel input domain is fixed at $[0,1]$ regardless of $n$, whereas the outer submodel input $\hat{z}_q = \sum_{p=1}^{n}\hat{\psi}_{q,p}(x_p)$ is a sum of $n$ terms. Since inner rule weights are initialized from $\mathcal{U}[-1,1]$, the output of each $\hat{\psi}_{q,p}$ can be negative, and the outer input domain can be written as $\hat{\mathcal{K}}_z^{(q)} := \{\hat{z}_q \mid \hat{z}_q = \sum_{p=1}^{n}\hat{\psi}_{q,p}(x_p),\ x_p \in [0,1]\} \approx [-n,n]$, so its domain width scales as $|\hat{\mathcal{K}}_z^{(q)}| \approx 2n$. Consequently, outer submodels require $\mathcal{O}(n)$ times more rules than inner submodels to achieve the same coverage density. When $N$ is set without accounting for this difference, outer submodels are severely under-resourced at large $n$, triggering a \textit{cover-delete cycle}\footnote{{A detailed analysis of covering dynamics and cover-delete behavior is provided in Section \ref{sec:covering_dynamics_sup} of the supplementary material.}\label{r2-12-1}} in which newly generated covering rules are immediately deleted to maintain the population within $N$ \cite{butz2004toward}.

A guideline for $N$ should therefore distinguish between inner and outer coverage requirements. Let $k_{\min}$ denote the minimum number of macro-rules per unit of input domain width. Inner submodels, each with domain $[0,1]$ of width 1, require $k_{\min}$ rules. Outer submodels, with domain $\approx [-n,n]$ of width $2n$, require $k_{\min} \cdot 2n$ rules. With $n(2n+1)$ inner submodels and $(2n+1)$ outer submodels, the total rule budget must satisfy
\begin{equation}
  N \;\ge\; k_{\min} \cdot n(2n+1) + k_{\min} \cdot 2n \cdot (2n+1)
  \;=\; 3k_{\min} \cdot n(2n+1).
  \label{eq:N_guideline}
\end{equation}
We estimate $k_{\min}$ from the last stable dimension $n=14$:
\begin{equation}
  k_{\min} \;\approx\; \frac{N}{3 \cdot n(2n+1)}\bigg|_{n=14}
  = \frac{6400}{3 \times 14 \times 29} \approx 5.3.
  \nonumber
\end{equation}
Rounding up and adding a modest safety margin to account for problem-to-problem variation in the outer function range, we adopt $k_{\min} = 8$, which gives $N \ge 3 \times 8 \times n(2n+1) = 24n(2n+1)$, predicting $N \ge 12672, 15984, 19680$ for $n = 16, 18, 20$, respectively. To verify this, we ran additional experiments with these $N$ values at the corresponding dimensions. As shown in Fig.~\ref{fig:scaling_n_guideline}, under this scaled budget, the severe degradation at $n\in\{16,18,20\}$ no longer appears. KACS attains significantly lower testing MAE than XCSF at all three dimensions (16: $0.2109$ vs. $0.2768$, 18: $0.2115$ vs. $0.2768$, 20: $0.1970$ vs. $0.2833$). KACS also remains much more compact, using significantly fewer macro-rules ($|\mathcal{P}|$: 2633, 3306, 4017 vs. 9520, 12310, 15690) and significantly fewer parameters ($k$: 5266, 6612, 8033 vs. 161800, 233900, 329600). In particular, at $n=20$, KACS achieves lower testing MAE than XCSF using only about $2\%$ as many parameters. This confirms that the higher-dimensional failure is a capacity issue attributable to under-resourced outer submodels, rather than an inherent limitation of KACS.

\label{others-12}
{Note} that $k_{\min}$ is not a universal constant: its value depends on the actual range of the outer submodel inputs, which varies with the problem and the learned inner functions. The estimate $k_{\min} \approx 5.3$ was derived empirically from a single benchmark ($f_3$), so \eqref{eq:N_guideline} should be treated as a practical starting point rather than a tight bound; {its transferability to other functions and real-world data remains to be validated.}\label{r2-18}

\section{Conclusion}
\label{sec:conclusion}

In this article, we proposed KACS, the first LCS that reorganizes rules dimension-wise according to the Kolmogorov-Arnold (KA) representation theorem. Instead of evolving rules directly in the $n$-dimensional input space, KACS decomposes the target function into one-dimensional subproblems and assigns a dedicated ruleset to each, optimized through evolutionary algorithms and gradient-based methods. This reduces the worst-case rule count from $\mathcal{O}(m^n)$ to $\mathcal{O}(mn^2)$, where $m$ is the per-variable resolution. We also proved that KACS is a universal approximator for continuous functions on compact domains, which is the first such proof for any LCS. Experiments showed that { KACS provides a parameter-efficient alternative for function approximation, with competitive accuracy across many tested settings.}\label{r1-1-2}

These results suggest that KA-guided dimension-wise decomposition is a broadly applicable design principle. Any local modeling framework that partitions the input space directly, such as fuzzy systems and other rule-based systems, may benefit from reorganizing its components into one-dimensional subproblems. KACS provides a concrete example of this principle and shows that such reorganization improves both accuracy and model compactness without sacrificing expressive power, as guaranteed by the universal approximation proof.

Several limitations remain. First, initializing many submodels simultaneously requires a warm-up period, during which prediction error temporarily exceeds that of XCSF. Second, reliable coverage requires a population budget of $N = \mathcal{O}(n^2)$; a smaller budget triggers a cover-delete cycle that degrades performance. Third, the universal approximation proof guarantees only the \textit{existence} of an accurate ruleset, not that the learning algorithm will converge to it in practice.

Future work should address these issues in order of practical impact. An adaptive mechanism that monitors submodel coverage and adjusts $N$ dynamically would resolve both the warm-up and budget problems. On the theoretical side, deriving convergence rates for the joint GA and gradient-based optimization would connect the existence proof to algorithmic behavior. At a broader level, extending dimension-wise decomposition to other local modeling frameworks and identifying the conditions under which it outperforms direct partitioning is a direction that reaches beyond LCSs, and KACS provides a concrete foundation for that investigation. Within this direction, relaxing the strict KA structure may further improve approximation efficiency. For example, one can vary the number of inner and outer models, or stack multiple KA layers as in KAN~\cite{liu2025kan}. Whether universal approximation is preserved under these modifications remains an open question. {Future studies will also visualize the learned rules and submodels, evaluate structural fidelity by comparing learned models with known low-dimensional target functions, and compare KACS with KAN-based models~\cite{shiraishi2025xkan}, genetic programming variants~\cite{koza1994genetic}, and neural- and code-fragment-based LCSs~\cite{bull2002accuracy,arif2017solving} across broader problem suites.}\label{r2-5}\label{r3-21}\label{r3-3}\label{r3-1-2}


\begin{IEEEbiography}
  [{\includegraphics[width=1in,height=1.25in,clip,keepaspectratio]{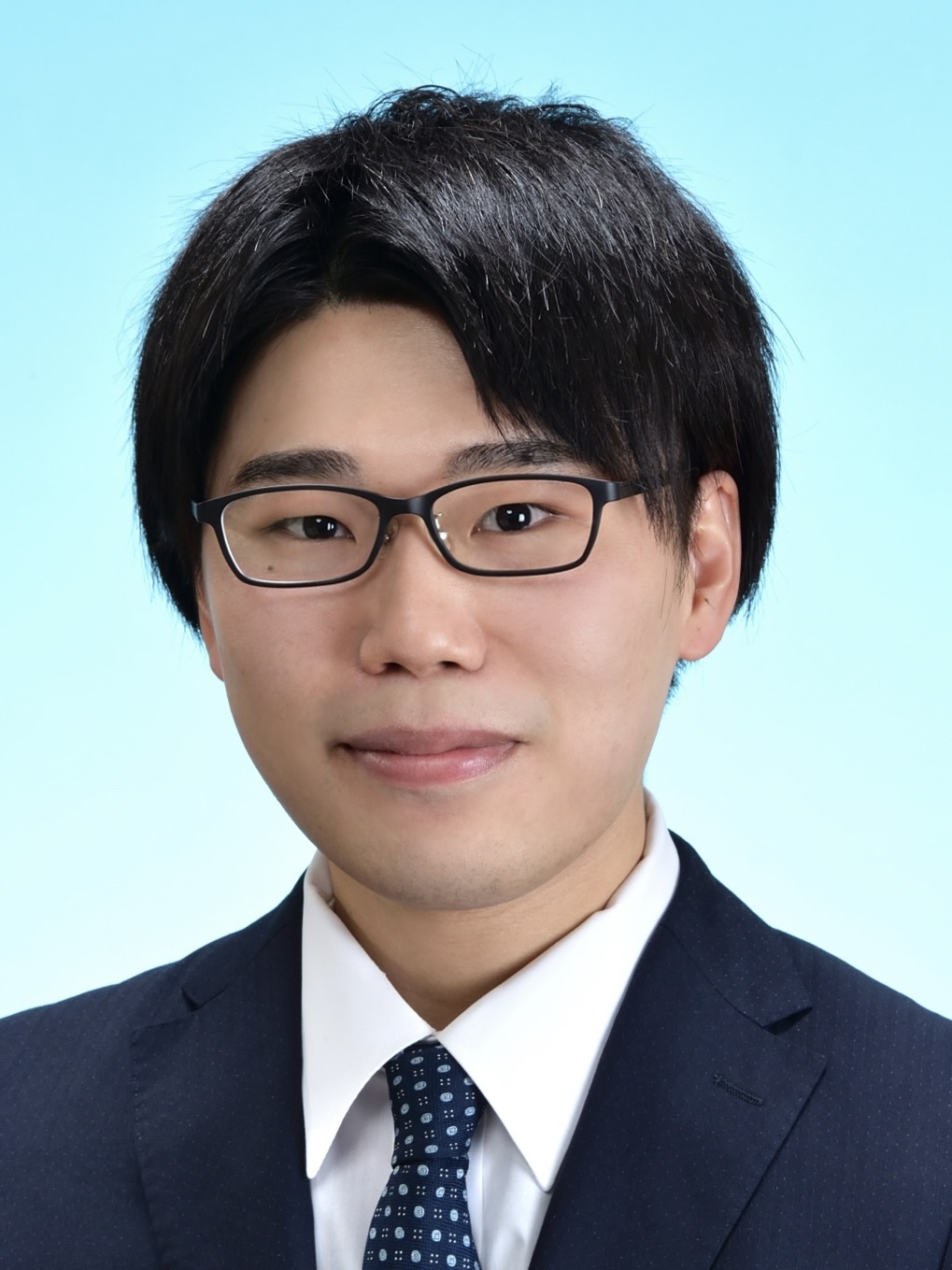}}]
  {Hiroki Shiraishi} received the B.E. and M.E. degrees in informatics from the University of Electro-Communications, Tokyo, Japan, in 2021 and 2023, respectively, and the Ph.D. degree in information systems from Yokohama National University, Yokohama, Japan, in 2026.

  Since 2026, he has been a Researcher with the Digital Healthcare Research Department, Hitachi, Ltd., Tokyo, Japan. His research interests include fuzzy systems, neuroevolution,
  learning classifier systems, and their clinical applications in pathology.

  Dr. Shiraishi received the Best Paper Award at GECCO 2022 and was nominated for the Best Paper Award at GECCO 2023. He served as Chair of the International Workshop on Evolutionary Rule-Based Machine Learning at GECCO from 2024 to 2026.
\end{IEEEbiography}
\begin{IEEEbiography}
  [{\includegraphics[width=1in,height=1.25in,clip,keepaspectratio]{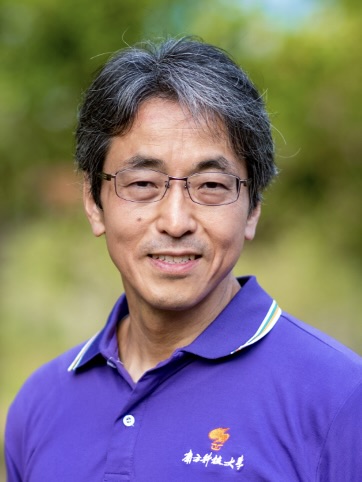}}]
  {Hisao Ishibuchi} (Fellow, IEEE) received the B.S. and M.S. degrees from Kyoto University in 1985 and 1987, respectively, and the Ph.D. degree from Osaka Prefecture University in 1992.

  He was with Osaka Prefecture University from 1987 to 2017.
Since 2017, he has been a Chair Professor at Southern University
of Science and Technology, China.
  His research interests include the design, analysis, comparison and application of evolutionary multi-objective optimization algorithms.

  Dr. Ishibuchi was the IEEE CIS Vice-President for Technical Activities in 2010-2013, the Editor-in-Chief of the IEEE \textsc{Computational Intelligence Magazine} in 2014-2019, and an AdCom member of the IEEE CIS in 2014-2019 and 2021-2026. Currently, he is General Chair of EMO 2027.

\end{IEEEbiography}
\begin{IEEEbiography}
  [{\includegraphics[width=1in,height=1.25in,clip,keepaspectratio]{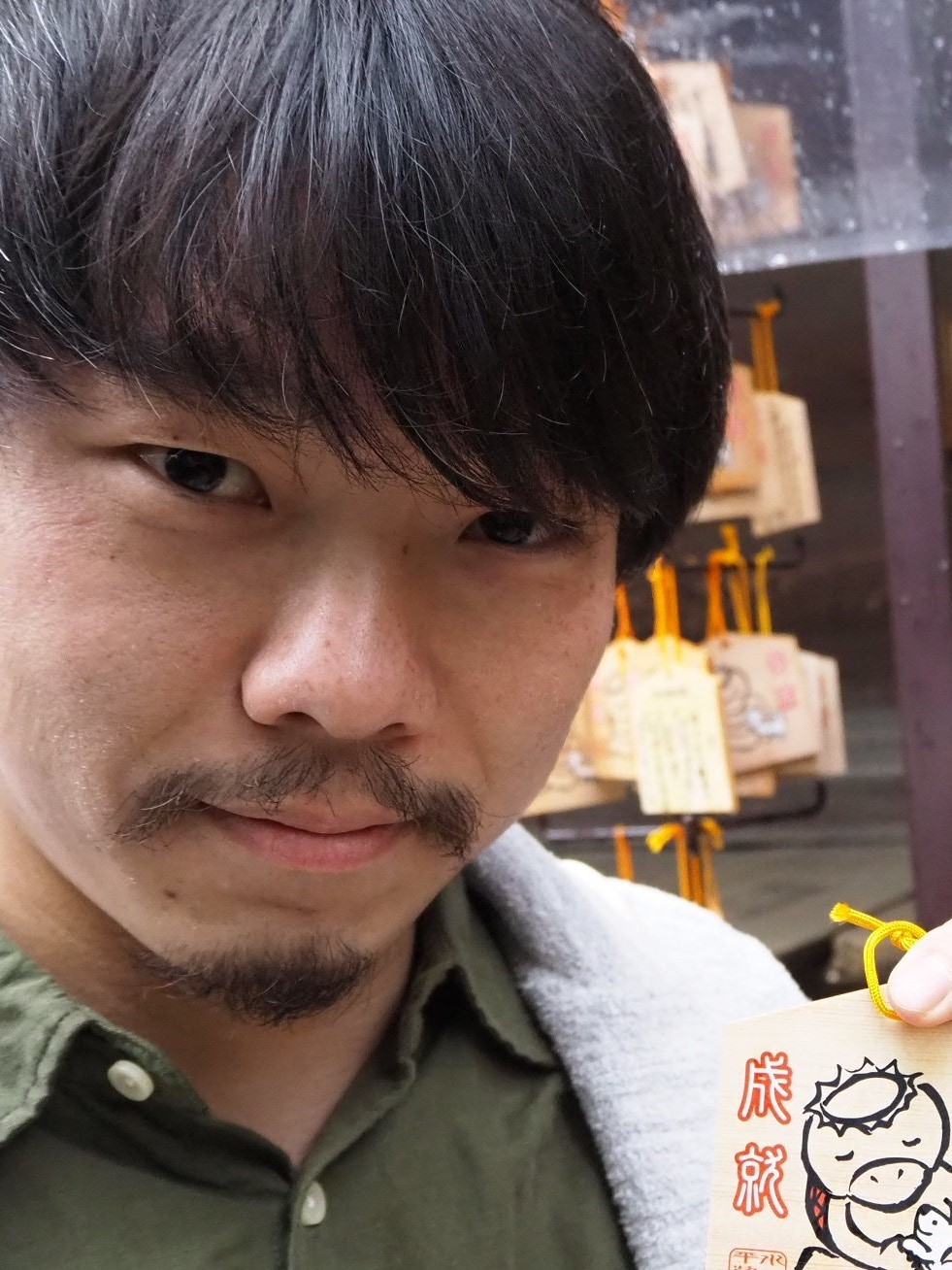}}]
  {Masaya Nakata} (Member, IEEE) received the Ph.D. degree in informatics from the University of Electro-Communications, Tokyo, Japan, in 2016.

  He is an Associate Professor with the Faculty of Engineering, Yokohama National University, Yokohama, Japan.
  Since 2019, he has focused his research on surrogate-assisted
evolutionary algorithms. His research has also included
evolutionary machine learning, data mining, and the theoretical
analysis of learning classifier systems.

Dr. Nakata chaired the International Workshop on Learning Classifier Systems in 2015, 2016, and 2018–2020 in GECCO.
\end{IEEEbiography}

\clearpage
\onecolumn

\setcounter{section}{0}
\setcounter{subsection}{0}
\setcounter{subsubsection}{0}
\setcounter{equation}{0}
\setcounter{figure}{0}
\setcounter{table}{0}
\renewcommand{\thesection}{S\arabic{section}}
\renewcommand{\thetable}{S\arabic{table}}
\renewcommand{\thefigure}{S\arabic{figure}}
\renewcommand{\theHsection}{supplement.\arabic{section}}
\renewcommand{\theHequation}{supplement.\arabic{equation}}
\renewcommand{\theHfigure}{supplement.\arabic{figure}}
\renewcommand{\theHtable}{supplement.\arabic{table}}

\makeatletter
\let\maketitle\arxiv@maketitlecmd
\let\@maketitle\arxiv@maketitleinternal
\let\thanks\arxiv@thankscmd
\let\arxiv@mainaddcontentsline\addcontentsline
\renewcommand{\cite}[1]{%
  \def\arxiv@supkeys{}%
  \@for\arxiv@key:=#1\do{%
    \ifx\arxiv@supkeys\@empty
      \edef\arxiv@supkeys{sup@\arxiv@key}%
    \else
      \edef\arxiv@supkeys{\arxiv@supkeys,sup@\arxiv@key}%
    \fi
  }%
  \expandafter\arxiv@maincite\expandafter{\arxiv@supkeys}%
}
\renewcommand{\addcontentsline}[3]{%
  \def\arxiv@target{#1}%
  \def\arxiv@toc{toc}%
  \ifx\arxiv@target\arxiv@toc
    \arxiv@mainaddcontentsline{stoc}{#2}{#3}%
  \else
    \arxiv@mainaddcontentsline{#1}{#2}{#3}%
  \fi
}
\makeatother

\title{Supplementary Material of ``Kolmogorov-Arnold Classifier Systems as Universal Approximators''}
\author{Hiroki Shiraishi$^{\orcidlink{0000-0001-8730-1276}}$,
  Hisao Ishibuchi$^{\orcidlink{0000-0001-9186-6472}}$,~\IEEEmembership{Fellow,~IEEE},
  and Masaya Nakata$^{\orcidlink{0000-0003-3428-7890}}$,~\IEEEmembership{Member,~IEEE}
  \thanks{
    Manuscript received 31 March 2026; revised 6 August 2026; accepted 11 September 2026.
  This work was supported by JSPS KAKENHI (Grant No. JP23KJ0993, JP25K03195), National Natural Science Foundation of China (Grant No. 62376115), and Guangdong Provincial Key Laboratory (Grant No. 2020B121201001). (\textit{Corresponding authors: Hisao Ishibuchi; Masaya Nakata.})}
  \thanks{Hiroki Shiraishi was with the Graduate School of Engineering Science, Yokohama National University, Yokohama 240-8501, Japan, at the time of manuscript submission on 31 March 2026. He is now with the Digital Healthcare Research Department, Hitachi, Ltd., Tokyo 185-8601, Japan (e-mail: hiroki.shiraishi.we@hitachi.com).}
  \thanks{Hisao Ishibuchi is with the Department of Computer Science and Engineering, Southern University of Science and Technology, Shenzhen 518055, China (e-mail: hisao@sustech.edu.cn).}
  \thanks{Masaya Nakata is with the Faculty of Engineering, Yokohama National University, Yokohama 240-8501, Japan (e-mail: nakata-masaya-tb@ynu.ac.jp).}
}

\markboth{IEEE TRANSACTIONS ON EVOLUTIONARY COMPUTATION, DOI: 10.1109/TEVC.2026.3736664, SEPTEMBER 2026}
{SHIRAISHI \MakeLowercase{{\em et al.}}:~Supplementary Material of ``Kolmogorov-Arnold Classifier Systems as Universal Approximators''}
\maketitle
\supplementtableofcontents
\clearpage

\section{{Inference Complexity Analysis}}
\label{sec:supp_inference_complexity}
\label{r2-20-2}

{This section compares the worst-case inference complexity of XCSF and KACS. The analysis concerns prediction only. It excludes all training operations, including covering, parameter updates, genetic search, subsumption, and deletion.

  Let $R_{\mathrm{X}}$ denote the number of macro-rules in the final XCSF population, i.e.,
  \begin{equation}
    R_{\mathrm{X}}
    =
    \left|\mathcal{P}\right|.
    \label{eq:rx}
  \end{equation}
  Let $R_{\mathrm{K}}$ denote the total number of macro-rules in all KACS inner and outer submodels:
  \begin{equation}
    R_{\mathrm{K}}
    =\left|\mathcal{P}\right|=
    \sum_{q=1}^{2n+1}\sum_{p=1}^{n}
    \left|\mathcal{P}_{q,p}^{\psi}\right|
    +
    \sum_{q=1}^{2n+1}
    \left|\mathcal{P}_{q}^{\Phi}\right|.
    \label{eq:rk}
  \end{equation}
  Here, a macro-rule is counted once regardless of its numerosity.

  \subsection{XCSF}

  For an input $\mathbf{x}\in\mathbb{R}^{n}$, XCSF forms a match set by testing whether $\mathbf{x}$ satisfies the interval condition of each rule. Since an XCSF rule has an $n$-dimensional condition, matching one rule requires $\mathcal{O}(n)$ comparisons in the worst case. Consequently, constructing the match set requires
  \begin{equation}
    \mathcal{O}\!\left(nR_{\mathrm{X}}\right)
    \label{eq:xcsf_matching}
  \end{equation}
  operations.

  Each matched rule has an $(n+1)$-parameter linear consequent. Its prediction therefore requires $\mathcal{O}(n)$ arithmetic operations. Because the number of matched rules is at most $R_{\mathrm{X}}$, computing and aggregating the rule predictions also requires
  \begin{equation}
    \mathcal{O}\!\left(nR_{\mathrm{X}}\right)
    \label{eq:xcsf_prediction}
  \end{equation}
  operations in the worst case. Therefore, the total worst-case inference complexity of XCSF is
  \begin{equation}
    T_{\mathrm{XCSF}}
    =
    \mathcal{O}\!\left(nR_{\mathrm{X}}\right).
    \label{eq:xcsf_complexity}
  \end{equation}

  \subsection{KACS}

  KACS evaluates $n(2n+1)$ inner submodels and $2n+1$ outer submodels. Thus, it contains
  \begin{equation}
    n(2n+1)+(2n+1)
    =
    (n+1)(2n+1)
    =
    2n^2+3n+1
    \label{eq:number_submodels}
  \end{equation}
  submodels.

  Each KACS rule has a one-dimensional interval condition and a two-parameter linear consequent. Matching and evaluating an individual rule therefore require $\mathcal{O}(1)$ operations. Scanning and evaluating all inner and outer rule populations require
  \begin{equation}
    \mathcal{O}\!\left(R_{\mathrm{K}}\right).
    \label{eq:kacs_rules}
  \end{equation}

  KACS additionally computes the intermediate sums
  \begin{equation}
    \hat{z}_q
    =
    \sum_{p=1}^{n}
    \hat{\psi}_{q,p}(x_p),
    \qquad q=1,\ldots,2n+1,
  \end{equation}
  which requires $n(2n+1)$ additions. It subsequently aggregates the $2n+1$ outer outputs to obtain $\hat{y}$. The total channel-wise aggregation cost is therefore
  \begin{equation}
    \mathcal{O}\!\left(n(2n+1)\right)
    =
    \mathcal{O}\!\left(n^2\right).
    \label{eq:kacs_aggregation}
  \end{equation}

  Combining rule evaluation and channel-wise aggregation yields
  \begin{equation}
    T_{\mathrm{KACS}}
    =
    \mathcal{O}\!\left(
      R_{\mathrm{K}}+n(2n+1)
    \right)
    =
    \mathcal{O}\!\left(R_{\mathrm{K}}+n^2\right).
    \label{eq:kacs_complexity}
  \end{equation}

  \subsection{Comparison of Inference Complexity}

  For the final learned populations, the worst-case inference complexities are
  \begin{equation}
    T_{\mathrm{XCSF}}
    =
    \mathcal{O}\!\left(nR_{\mathrm{X}}\right),
    \qquad
    T_{\mathrm{KACS}}
    =
    \mathcal{O}\!\left(R_{\mathrm{K}}+n^2\right).
  \end{equation}

  Thus, KACS removes the multiplicative factor $n$ from the cost of matching and evaluating an individual rule. However, it evaluates inner and outer submodels across multiple channels and consequently incurs a channel-wise aggregation cost of $\mathcal{O}(n^2)$. Hence, this analysis does not imply that KACS is always faster in wall-clock inference time. The practical runtime also depends on $R_{\mathrm{X}}$, $R_{\mathrm{K}}$, the allocation of KACS rules among submodels, and implementation-specific factors.

  \subsection{Asymptotic Rule-Count Interpretation}

  The preceding analysis is expressed in terms of the final numbers of macro-rules, $R_{\mathrm{X}}$ and $R_{\mathrm{K}}$. To relate these quantities to dimensionality, consider an idealized setting in which $m$ effective intervals are required per input dimension for XCSF and, on average, $m$ rules are required per one-dimensional KACS submodel.

  Under this assumption, XCSF requires
  \begin{equation}
    R_{\mathrm{X}}=\mathcal{O}(m^n)
  \end{equation}
  rules to cover an $n$-dimensional input space. KACS has $n(2n+1)$ inner submodels and $2n+1$ outer submodels. Hence, its total number of rules is
  \begin{equation}
    R_{\mathrm{K}}
    =
    \mathcal{O}\!\left(
      m[n(2n+1)+(2n+1)]
    \right)
    =
    \mathcal{O}\!\left(
      m(n+1)(2n+1)
    \right)
    =
    \mathcal{O}(mn^2).
  \end{equation}

  Substituting these estimates into Eqs.~\eqref{eq:xcsf_complexity} and~\eqref{eq:kacs_complexity} gives
  \begin{equation}
    T_{\mathrm{XCSF}}=\mathcal{O}(nm^n),
    \qquad
    T_{\mathrm{KACS}}=\mathcal{O}(mn^2).
  \end{equation}
  These expressions describe asymptotic scaling under the stated idealized rule-allocation assumption. They do not represent the numbers of rules learned in a finite experiment, which depend on the population budget $N$, the data distribution, and the learning dynamics.
}

\section{Generalization of the KACS Universal Approximation Theorem to Other Consequent Models}
\label{sec:uat_discussion_sup}

Although the proof in Section \ref{sec:uat} of the main article is presented with a linear consequent $P_k(z) = w_{k,0} + w_{k,1}z$ as in \eqref{eq:kacs_rule}, the same argument extends to a broader class of consequent models. In particular, KACS retains the universal approximation property even when the consequent is chosen as (i) polynomial or B-spline models that can represent linear functions exactly~\cite{de1978practical}, or (ii) universal approximators such as MLP, RBFN~\cite{park1991universal}, and KAN.

This follows because the construction in Lemma~\ref{lem:cpl} (see also \eqref{eq:ref_err}) of the main article only requires the consequent of each reference rule $cl^*_j$, denoted by $P^*_j(z)$, to approximate the target linear function $h_j(z)$ on the corresponding region $R_j$ with error at most $\varepsilon/4$. A linear consequent can represent $h_j$ exactly, and polynomial or B-spline consequents can do so as well because they include linear functions as a special case. If a universal approximator (MLP, RBFN, KAN) is used as the consequent, it can approximate $h_j$ within $\varepsilon/4$, which is sufficient for the same error bound to carry through.

Therefore, the universal approximation capability of KACS is not limited to linear consequents; it generalizes to more flexible consequent models that can represent linear functions either exactly or to arbitrary accuracy.
\clearpage

\section{Comparison with SupRB: A State-of-the-Art Pittsburgh-Style LCS}
\label{sec:suprb_comparison_sup}
\label{r2-8-2}
\label{r2-10-2}
\label{r2-15-2}
\label{r2-19-2}

\subsection{Experimental Protocol and Comparability}

{We conducted an additional comparison with SupRB, a Pittsburgh-style LCS that separately optimizes rule discovery and final rule-set composition~\cite{heider2023suprb,heider2025disentangling}. The comparison includes the four real-world benchmarks used in the main article and two additional benchmarks considered by Heider~\cite{heider2025disentangling}: the Physicochemical Properties of Protein Tertiary Structure (PPPTS) dataset with $n=9$ and the Parkinsons Telemonitoring (PT) dataset with $n=18$. The latter two extend the dimensionality of the real-world evaluation beyond the original maximum of $n=8$. They should be interpreted as higher-dimensional relative to the original benchmark set, rather than as extremely high-dimensional data.

  For our XCSF and KACS experiments, we followed Heider's evaluation protocol~\cite{heider2025disentangling}: eight Monte Carlo data splits were combined with eight random seeds per split, yielding 64 runs for each method and dataset; 25\% of the samples were held out for testing; and the target was standardized. XCSF and KACS used the same splits and seeds. The input features were scaled to $[0,1]$ instead of the $[-1,1]$ range used by Heider because the present KACS formulation assumes inputs in $[0,1]$. Our XCSF and KACS populations were evaluated without post-training compaction. For PT ($n=18$) only, we set the population budget to $N=15984$ for both XCSF and KACS, following the practical guideline in Section~\ref{sss:scalability_discussion} of the main article and retaining an equal budget for the controlled comparison. For all other datasets, we used $N=6400$, as in Section~\ref{sss: Hyperparameter Settings and Protocol} of the main article.

  The literature values require several qualifications. Heider's reported XCSF results for ASN, CCPP, CS, and EEC use dataset-specific hyperparameter tuning, recursive-least-squares consequent updates, and post-training population compaction; XCSF results were not reported for PPPTS or PT. For SupRB, we use the evolution-strategy rule-discovery configuration, denoted SupRB-ES, because this is the reported configuration for which results are available on all six datasets. Its rule count is the number of rules in the final selected solution. The reported XCSF and SupRB-ES values were obtained in separate experimental campaigns and are therefore included as literature reference points, not as paired observations from our runs. Consequently, the tables are used for descriptive comparison only; we do not report cross-study statistical tests or average ranks.}

\subsection{Prediction Accuracy}

{Table~\ref{tb:suprb accuracy sup} compares testing MSE in the standardized target space. Values for all methods are reported as the mean and standard deviation over 64 runs.}

{On the four benchmarks used in the main article, KACS obtains lower MSE than our uncompacted XCSF in every case: 0.14 vs. 0.32 on ASN, 0.06 vs. 0.07 on CCPP, 0.14 vs. 0.21 on CS, and 0.06 vs. 0.08 on EEC. Its MSE is also close to the literature reference values on ASN, CCPP, and CS, whereas the reported XCSF and SupRB-ES results are lower on EEC. These observations indicate that the favorable KACS--XCSF comparison is retained under the 64-run, standardized-target protocol, but they do not establish superiority over independently reported methods.

  The two additional datasets give a more mixed result. On PPPTS, XCSF and KACS both obtain an MSE of 0.74, compared with 0.63 for SupRB-ES. On PT, our XCSF obtains the lowest MSE (0.09), followed by KACS (0.21) and SupRB-ES (0.31). Thus, the higher-dimensional real-world experiments do not support a claim that KACS uniformly provides the best predictive accuracy. Instead, they show that its relative performance depends on the dataset: KACS remains competitive on PT but does not improve upon XCSF on PPPTS.

  Descriptively, KACS records lower mean testing MSE than the reported SupRB-ES values on ASN, CCPP, CS, and PT, whereas SupRB-ES records lower values on EEC and PPPTS. These results suggest that KACS can achieve predictive performance competitive with SupRB-ES. However, because the SupRB-ES values were obtained in independent experiments, this comparison should not be interpreted as evidence of statistical superiority.

  \begin{table*}[h]
    \centering
    \caption{{Testing MSE on the Six Real-World Regression
        Benchmarks. Values Are Mean $\pm$ Standard Deviation over 64 Runs.
        Our XCSF and KACS Results Were Obtained without Compaction. The
        Literature Values Are Reported by Heider; SupRB-ES Uses
        Evolution-Strategy Rule Discovery. N/A Indicates That No Corresponding
        XCSF Result Was Reported. The Cross-Study Results Are Presented
    Descriptively, without Statistical Tests or Average Ranks}}
    \label{tb:suprb accuracy sup}
    \renewcommand{\arraystretch}{1.2}
    \resizebox{\width}{!}{
      \small
      \begin{tabular}{ccc|cc|cc}
  \bhline{1pt}
  &&&\multicolumn{2}{c|}{Ours (no compaction)}&\multicolumn{2}{c}{Reported by \cite{heider2025disentangling}}\\
  Benchmark& \#Samples & $n$ & XCSF & KACS & XCSF (after compaction) & SupRB-ES (final solution)\\
  \bhline{1pt}
  ASN & 1503 & 5 & 0.32 $\pm$ 0.08 & 0.14 $\pm$ 0.02 & 0.12 $\pm$ 0.16 & 0.15 $\pm$ 0.02 \\
  CCPP & 9568 & 4 & 0.07 $\pm$ 0.00 & 0.06 $\pm$ 0.01 & 0.06 $\pm$ 0.00 & 0.07 $\pm$ 0.00 \\
  CS & 1030 & 8 & 0.21 $\pm$ 0.05 & 0.14 $\pm$ 0.03 & 0.17 $\pm$ 0.13 & 0.15 $\pm$ 0.04 \\
  EEC & 768 & 8 & 0.08 $\pm$ 0.02 & 0.06 $\pm$ 0.02 & 0.02 $\pm$ 0.02 & 0.04 $\pm$ 0.03  \\
  PPPTS & 45739 & 9 & 0.74 $\pm$ 0.01 & 0.74 $\pm$ 0.04 & N/A & 0.63 $\pm$ 0.02 \\
  PT & 5875 & 18 & 0.09 $\pm$ 0.02 & 0.21 $\pm$ 0.07 & N/A & 0.31 $\pm$ 0.03 \\
  \bhline{1pt}
\end{tabular}

    }
  \end{table*}

  \subsection{Rule Complexity}

  Table~\ref{tb:suprb rules sup} reports the corresponding numbers of rules.}

{KACS uses fewer rules than our uncompacted XCSF on five of the six datasets; CCPP is the sole exception, where the counts are similar (1927 vs. 1898). The reduction is particularly pronounced on CS, EEC, and PPPTS. Nevertheless, the results also qualify the compactness claim made from the controlled KACS--XCSF comparison. The compacted XCSF models reported by Heider use fewer rules than KACS on CS and EEC, and SupRB-ES uses substantially fewer rules on every dataset, with only 5.5--45.7 rules on average. KACS should therefore be described as compact relative to the uncompacted XCSF configuration evaluated in our controlled experiments, not as the most compact LCS among the methods considered here.

  Raw rule counts also represent different model structures. SupRB-ES selects a small final set of multidimensional rules, whereas KACS distributes one-dimensional rules among $(n+1)(2n+1)=2n^2+3n+1$ inner and outer submodels. This fixed decomposition imposes a structural cost because every submodel must maintain input coverage. Moreover, Heider did not report a directly comparable total parameter count for SupRB-ES. We therefore cannot infer a precise parameter-count ordering from Table~\ref{tb:suprb rules sup}, although its much smaller final rule sets clearly demonstrate greater rule-set compactness.}

\subsection{Interpretability of the Learned Models}

{A KACS population containing approximately 1000 or more macro-rules should not be regarded as a directly human-readable rule list. Its structure is nevertheless different from that of a flat XCSF population. Across the six datasets, KACS distributes its rules among 45 submodels for CCPP, 66 for ASN, 153 for CS and EEC, 190 for PPPTS, and 703 for PT. The observed totals therefore correspond to approximately 5.1--42.8 one-dimensional rules per submodel. Each submodel can be plotted and inspected as a one-dimensional local function, which provides structured inspectability that is not apparent from the total rule count alone.

  This decomposition does not, however, guarantee global interpretability. A complete KACS prediction depends on the composition of all inner and outer submodels across $2n+1$ channels. Understanding an individual one-dimensional submodel is considerably easier than reading thousands of multidimensional XCSF rules, but understanding the complete mapping still requires tracing many interacting local functions. In this sense, KACS is better characterized as a structured and visualizable model than as a globally transparent model.

  SupRB-ES addresses interpretability more directly by selecting a small final set of multidimensional rules. Its 5.5--45.7-rule solutions are far closer to a model that can be inspected rule by rule than either the XCSF or KACS populations reported here. Accordingly, we do not claim that KACS is more globally interpretable than SupRB. The present results support parameter efficiency and structured inspectability relative to uncompacted XCSF, while SupRB-ES provides substantially greater final rule-set compactness. A formal comparison of interpretability---including visualization of learned KACS submodels, fidelity of local explanations, human-subject evaluation, and the effects of pruning or compaction---is left for future work.}

\begin{table*}[t]
  \centering
  \caption{{Numbers of Rules on the Six Real-World Regression
  Benchmarks. Notation Follows Table \ref{tb:suprb accuracy sup}}}
  \label{tb:suprb rules sup}
  \renewcommand{\arraystretch}{1.2}
  \resizebox{\width}{!}{
    \small
    \begin{tabular}{ccc|cc|cc}
  \bhline{1pt}
  &&&\multicolumn{2}{c|}{Ours (no compaction)}&\multicolumn{2}{c}{Reported by \cite{heider2025disentangling}}\\
  Benchmark& \#Samples & $n$ & XCSF & KACS & XCSF (after compaction) & SupRB-ES (final solution)\\
  \bhline{1pt}
  ASN & 1503 & 5 & 2000 $\pm$ 410.8 & 1434 $\pm$ 40.38 & 1617.58 $\pm$ 413.12 & 43.3 $\pm$ 2.8 \\
  CCPP & 9568 & 4 & 1898 $\pm$ 297.1 & 1927 $\pm$ 45.26 & 1922.71 $\pm$ 390.7 & 5.5 $\pm$ 0.8 \\
  CS & 1030 & 8 & 2837 $\pm$ 434.5 & 1416 $\pm$ 42.30 & 481.68 $\pm$ 336.33 & 42.1 $\pm$ 2.9 \\
  EEC & 768 & 8 & 2985 $\pm$ 391.4 & 1014 $\pm$ 35.34 & 707.94 $\pm$ 282.16 & 20.1 $\pm$ 2.2 \\
  PPPTS & 45739 & 9 & 3001 $\pm$ 578.4 & 1405 $\pm$ 35.96 & N/A & 45.7 $\pm$ 3.6 \\
  PT & 5875 & 18 & 4422 $\pm$ 999.9 & 3564 $\pm$ 89.64 & N/A & 38.8 $\pm$ 3.3 \\
  \bhline{1pt}
\end{tabular}

  }
\end{table*}
\clearpage

\section{Formulations of Synthetic Functions}
\label{sec:Formulations of Synthetic Functions sup}
This section summarizes the synthetic benchmark functions used in the main article. Fig.~\ref{fig:all_functions sup} illustrates each function for $n=2$. The four synthetic benchmark functions are:
\begin{itemize}
  \item \textbf{Rastrigin function} ($f_1$, cf. Fig. \ref{fig:rastrigin sup}): a highly multimodal landscape with many local minima, defined as:
    \begin{equation}
      f_1(\mathbf{x}) = 10n + \sum_{i=1}^{n} \left({x}_i^2 - 10\cos\left(2\pi{x}_i\right)\right).
    \end{equation}
  \item \textbf{Rosenbrock function} ($f_2$, cf. Fig. \ref{fig:rosenbrock sup}): a narrow curved valley with strong interactions between adjacent variables, defined as:
    \begin{equation}
      f_2(\mathbf{x}) = \sum_{i=1}^{n-1} \left[100\left({x}_{i+1}-{x}_i^2\right)^2 + \left({x}_i-1\right)^2\right].
    \end{equation}
  \item \textbf{Cross function} ($f_3$, cf. Fig. \ref{fig:cross_function sup}): axis-aligned ridge-like components along each coordinate axis and a central Gaussian peak, with interactions across all dimensions, defined as:
    \begin{align}
      a &= \frac{1}{\lfloor n/2 \rfloor}\sum_{i=1}^{\lfloor n/2 \rfloor}{x}_i,\,\,\,
      b = \frac{1}{\lceil n/2 \rceil}\sum_{i=\lfloor n/2 \rfloor+1}^{n}{x}_i, \\
      f_3(\mathbf{x}) &= \max\!\left(
        \exp(-10a^2),\,
        \exp(-50b^2),\,
        1.25\,\exp\!\left(-5(a^2+b^2)\right)
      \right).
    \end{align}
  \item \textbf{Styblinski-Tang function} ($f_4$, cf. Fig. \ref{fig:styblinski_tang sup}): a separable nonconvex polynomial surface with multiple local minima per dimension, defined as:
    \begin{equation}
      f_4(\mathbf{x})=
      \frac{1}{2}\sum_{i=1}^{n}
      \left(
        {x}_i^4 - 16{x}_i^2 + 5{x}_i
      \right).
    \end{equation}
\end{itemize}
These functions collectively cover multimodality ($f_1$), inter-variable curvature ($f_2$), anisotropic structure ($f_3$), and separable nonconvexity ($f_4$).

\preparesyntheticsurfaceplots
\begin{figure*}[h]
  \centering
  \subfloat[\shortstack{$f_1$: Rastrigin Function}]{
    \makebox[0.22\textwidth][c]{\syntheticrastriginplot}
    \label{fig:rastrigin sup}
  }\hfill
  \subfloat[\shortstack{$f_2$: Rosenbrock Function}]{
    \makebox[0.22\textwidth][c]{\syntheticrosenbrockplot}
    \label{fig:rosenbrock sup}
  }\hfill
  \subfloat[\shortstack{$f_3$: Cross Function}]{
    \makebox[0.22\textwidth][c]{\syntheticcrossplot}
    \label{fig:cross_function sup}
  }\hfill
  \subfloat[\shortstack{$f_4$: Styblinski-Tang Function}]{
    \makebox[0.22\textwidth][c]{\syntheticstyblinskitangplot}
    \label{fig:styblinski_tang sup}
  }
  \caption{Surface plots of the four synthetic test functions ($n=2$) shown for visualization (same as Fig. \ref{fig:all_functions} of the main article).}
  \label{fig:all_functions sup}
\end{figure*}

\clearpage

\section{A Description of Hyperparameters}
\label{sec:Descriptions of Hyperparameters sup}
The following hyperparameters are commonly used in both XCSF and KACS.

\begin{itemize}
  \item $N$: The maximum population size $\mathcal{P}$ (in micro-rules).
  \item $\epsilon_0$: The error threshold; rules with prediction error below $\epsilon_0$ are considered accurate. {A suitable tolerance can improve robustness to noise-induced error fluctuations and avoid unnecessary specialization, whereas an excessively large value can reduce approximation accuracy \cite{urbanowicz2017introduction}.}\label{r3-9}
  \item $\beta$: The learning rate for updating prediction error, fitness, and match set size estimates.
  \item $\alpha$: The fall-off coefficient in the accuracy calculation; controls the accuracy assigned when the prediction error is at or above $\epsilon_0$.
  \item $\nu$: The power parameter for the accuracy calculation; controls the steepness of the accuracy fall-off.
  \item $\delta$: The fraction of the mean fitness used as the threshold for rule deletion.
  \item $m_0$: The maximum mutation range for interval parameters.
  \item $r_0$: The maximum spread for the covering operator.
  \item $\chi$: The crossover probability in the GA.
  \item $\mu$: The mutation probability per allele in the GA.
  \item $\tau$: The tournament size ratio for parent selection.
  \item $\theta_{\rm GA}$: The GA application threshold.
  \item $\theta_{\rm del}$: The experience threshold for rule deletion.
  \item $\theta_{\rm sub}$: The experience threshold for subsumption.
  \item $P_\#$: The probability of generating a $[0,1]$ interval during covering.
  \item $\mathit{doSubsumption}$: A Boolean flag to enable GA subsumption.
\end{itemize}

\section{Detailed Statistical Results}
\label{sec:supplementary-statistics}
\label{r2-11-3}
\label{r1-3-2}
{This section presents run-level statistics that complement Table~\ref{tb:result} in the main article. For each benchmark and metric, Table~\ref{tb:supplementary-statistics sup} reports the mean and standard deviation over the same 30 train--test splits, together with the two-sided Wilcoxon signed-rank $p$-value and matched-pairs rank-biserial correlation. The effect size is computed from the paired differences XCSF$-$KACS; positive values favor KACS and negative values favor XCSF because lower values are preferable for every metric.

  KACS shows its most consistent advantage in model complexity. Its parameter count and AIC are lower on all eight benchmarks, with an effect size of $1$ in every comparison. Its rule count is also lower with an effect size of $1$ on seven benchmarks; on CCPP, the rule counts are similar and the effect is small ($0.0796$). This pattern reflects the KA decomposition, which replaces full $n$-dimensional local models with one-dimensional submodels.

  Accuracy depends more strongly on problem characteristics. For testing MAE, KACS shows positive effects on the four nonlinear synthetic functions ($0.7462$--$1$) and ASN ($0.9484$), supporting its advantage on problems with pronounced nonlinearity or cross-dimensional structure. On $f_1$, the effect favors XCSF for training MAE ($-0.5226$) but KACS for testing MAE ($0.7677$), which, together with the large parameter-count difference, is consistent with greater overfitting by XCSF. The testing-MAE effects are smaller on CCPP ($0.2430$) and CS ($0.2172$), while EEC favors XCSF ($-0.3462$). For problems that are adequately represented by simpler linear models, the coordination of KACS's inner and outer submodels can introduce additional approximation overhead, although KACS remains substantially more compact.

  The standard deviations show that stability is metric dependent. KACS has lower standard deviations in rule count, parameter count, and AIC on all benchmarks. For testing MAE, however, KACS is less variable on ASN ($0.0149$ vs. $0.0222$) and CS ($0.0169$ vs. $0.0231$), but more variable on the synthetic functions, CCPP, and EEC. In the small-MAE cases, the standard deviations are $0.0029$ for XCSF and $0.0073$ for KACS on CCPP, and $0.0223$ and $0.0314$, respectively, on EEC. Thus, KACS provides consistently stable model complexity but does not uniformly reduce variability in prediction accuracy. All MAE statistics use the normalized target scale of the main experiments.}

\begin{table*}[h]
  \centering
  \caption{{Detailed Paired Comparison of XCSF and KACS over 30 Matched
      Runs for Each Benchmark and Metric. Values Are Reported as the Mean
      $\pm$ Standard Deviation. Green Cells Indicate the Lower Mean for
      Each Benchmark and Metric. The Wilcoxon Column Reports the Two-Sided
      Wilcoxon Signed-Rank $p$-Value Computed from the 30 Paired Runs. The
      Effect Size Is the Matched-Pairs Rank-Biserial Correlation Computed
      from the Paired Differences (XCSF Minus KACS); Positive Values
      Indicate That KACS Tended to Produce Lower Values, Whereas Negative
  Values Indicate That XCSF Tended to Produce Lower Values}}
  \label{tb:supplementary-statistics sup}
  \renewcommand{\arraystretch}{1.2}
  \resizebox{\width}{!}{
    \small
    \begin{tabular}{cccccc}
  \bhline{1pt}
  Benchmark & Metric & XCSF mean $\pm$ SD & KACS mean $\pm$ SD & Wilcoxon $p$ & Effect size \\
  \bhline{1pt}
  $f_1$ & Training MAE & \cellcolor{g}0.2033 $\pm$ 0.0122 & 0.2194 $\pm$ 0.0291 & 0.0113 & -0.5226 \\
  $f_1$ & Testing MAE & 0.2642 $\pm$ 0.0220 & \cellcolor{g}0.2351 $\pm$ 0.0250 & 8.86E-05 & 0.7677 \\
  $f_1$ & \#Rules & 4162 $\pm$ 57.59 & \cellcolor{g}1341 $\pm$ 35.07 & 1.82E-06 & 1 \\
  $f_1$ & \#Parameters & 45781 $\pm$ 633.5 & \cellcolor{g}2681 $\pm$ 70.13 & 1.86E-09 & 1 \\
  $f_1$ & AIC & 89207 $\pm$ 1266 & \cellcolor{g}3024 $\pm$ 262.6 & 1.86E-09 & 1 \\
  \bhline{1pt}
  $f_2$ & Training MAE & \cellcolor{g}0.1124 $\pm$ 0.0119 & 0.1157 $\pm$ 0.0280 & 0.5838 & 0.1183 \\
  $f_2$ & Testing MAE & 0.1564 $\pm$ 0.0195 & \cellcolor{g}0.1253 $\pm$ 0.0274 & 0.0001529 & 0.7462 \\
  $f_2$ & \#Rules & 4087 $\pm$ 56.88 & \cellcolor{g}1462 $\pm$ 37.63 & 1.82E-06 & 1 \\
  $f_2$ & \#Parameters & 44960 $\pm$ 625.6 & \cellcolor{g}2924 $\pm$ 75.27 & 1.82E-06 & 1 \\
  $f_2$ & AIC & 86579 $\pm$ 1193 & \cellcolor{g}2342 $\pm$ 314.1 & 1.86E-09 & 1 \\
  \bhline{1pt}
  $f_3$ & Training MAE & 0.2510 $\pm$ 0.0167 & \cellcolor{g}0.2097 $\pm$ 0.0395 & 4.42E-06 & 0.8667 \\
  $f_3$ & Testing MAE & 0.3246 $\pm$ 0.0250 & \cellcolor{g}0.2329 $\pm$ 0.0508 & 3.73E-09 & 0.9957 \\
  $f_3$ & \#Rules & 4130 $\pm$ 74.08 & \cellcolor{g}1429 $\pm$ 36.94 & 1.82E-06 & 1 \\
  $f_3$ & \#Parameters & 45426 $\pm$ 814.9 & \cellcolor{g}2857 $\pm$ 73.87 & 1.86E-09 & 1 \\
  $f_3$ & AIC & 88892 $\pm$ 1580 & \cellcolor{g}3324 $\pm$ 349.7 & 1.86E-09 & 1 \\
  \bhline{1pt}
  $f_4$ & Training MAE & 0.1680 $\pm$ 0.0150 & \cellcolor{g}0.1334 $\pm$ 0.0301 & 2.08E-05 & 0.8194 \\
  $f_4$ & Testing MAE & 0.2303 $\pm$ 0.0183 & \cellcolor{g}0.1430 $\pm$ 0.0299 & 1.86E-09 & 1 \\
  $f_4$ & \#Rules & 4170 $\pm$ 61.11 & \cellcolor{g}1414 $\pm$ 35.27 & 1.82E-06 & 1 \\
  $f_4$ & \#Parameters & 45873 $\pm$ 672.2 & \cellcolor{g}2829 $\pm$ 70.54 & 1.86E-09 & 1 \\
  $f_4$ & AIC & 89078 $\pm$ 1300 & \cellcolor{g}2412 $\pm$ 425.1 & 1.86E-09 & 1 \\
  \bhline{1pt}
  \text{ASN} & Training MAE & 0.1434 $\pm$ 0.0203 & \cellcolor{g}0.1094 $\pm$ 0.0139 & 2.05E-07 & 0.9398 \\
  \text{ASN} & Testing MAE & 0.1478 $\pm$ 0.0222 & \cellcolor{g}0.1144 $\pm$ 0.0149 & 1.30E-07 & 0.9484 \\
  \text{ASN} & \#Rules & 2224 $\pm$ 356.7 & \cellcolor{g}1417 $\pm$ 46.26 & 1.86E-09 & 1 \\
  \text{ASN} & \#Parameters & 13345 $\pm$ 2140 & \cellcolor{g}2834 $\pm$ 92.51 & 1.86E-09 & 1 \\
  \text{ASN} & AIC & 22275 $\pm$ 3953 & \cellcolor{g}371.1 $\pm$ 344.2 & 1.86E-09 & 1 \\
  \bhline{1pt}
  \text{CCPP} & Training MAE & 0.0930 $\pm$ 0.0016 & \cellcolor{g}0.0918 $\pm$ 0.0062 & 0.1294 & 0.3204 \\
  \text{CCPP} & Testing MAE & 0.0929 $\pm$ 0.0029 & \cellcolor{g}0.0926 $\pm$ 0.0073 & 0.2534 & 0.243 \\
  \text{CCPP} & \#Rules & 1953 $\pm$ 292.8 & \cellcolor{g}1924 $\pm$ 48.14 & 0.7112 & 0.0796 \\
  \text{CCPP} & \#Parameters & 9764 $\pm$ 1464 & \cellcolor{g}3848 $\pm$ 96.29 & 1.86E-09 & 1 \\
  \text{CCPP} & AIC & -17358 $\pm$ 2823 & \cellcolor{g}-29065 $\pm$ 1194 & 1.86E-09 & 1 \\
  \bhline{1pt}
  \text{CS} & Training MAE & 0.1299 $\pm$ 0.0198 & \cellcolor{g}0.1213 $\pm$ 0.0137 & 0.1191 & 0.329 \\
  \text{CS} & Testing MAE & 0.1350 $\pm$ 0.0231 & \cellcolor{g}0.1288 $\pm$ 0.0169 & 0.3085 & 0.2172 \\
  \text{CS} & \#Rules & 2985 $\pm$ 393.1 & \cellcolor{g}1425 $\pm$ 34.58 & 1.86E-09 & 1 \\
  \text{CS} & \#Parameters & 26868 $\pm$ 3538 & \cellcolor{g}2851 $\pm$ 69.16 & 1.86E-09 & 1 \\
  \text{CS} & AIC & 50453 $\pm$ 6860 & \cellcolor{g}2237 $\pm$ 229.1 & 1.86E-09 & 1 \\
  \bhline{1pt}
  \text{EEC} & Training MAE & \cellcolor{g}0.0820 $\pm$ 0.0202 & 0.0967 $\pm$ 0.0320 & 0.1048 & -0.3419 \\
  \text{EEC} & Testing MAE & \cellcolor{g}0.0847 $\pm$ 0.0223 & 0.1020 $\pm$ 0.0314 & 0.1004 & -0.3462 \\
  \text{EEC} & \#Rules & 2925 $\pm$ 373.6 & \cellcolor{g}1012 $\pm$ 31.51 & 1.82E-06 & 1 \\
  \text{EEC} & Parameters & 26326 $\pm$ 3363 & \cellcolor{g}2023 $\pm$ 63.02 & 1.86E-09 & 1 \\
  \text{EEC} & AIC & 49849 $\pm$ 6567 & \cellcolor{g}1125 $\pm$ 302.7 & 1.86E-09 & 1 \\
  \bhline{1pt}
\end{tabular}

  }
\end{table*}
\clearpage

\section{Robustness under an Alternative Evaluation Protocol}
\label{sec:alternative_protocol_sup}
\label{r2-9-3}

\subsection{Experimental Protocol}

{To examine whether the controlled XCSF--KACS comparison depends on the evaluation protocol used in the main article, we repeated the experiments on all eight benchmark problems using a protocol based on Heider's evaluation design~\cite{heider2023suprb,heider2025disentangling}. The main experiments use 30 Monte Carlo splits with 90\% of the samples for training, min--max-normalized targets, and MAE. In this alternative protocol, we instead used eight Monte Carlo splits and eight random seeds per split, yielding 64 matched runs for each method and problem. For each split, 75\% of the samples were used for training and 25\% for testing, the target was standardized, and prediction accuracy was evaluated using MSE.

  The target standardization parameters were estimated from the training split and then applied to the corresponding test split. The synthetic inputs retained their prescribed $[0,1]^n$ domains. For the real-world datasets, input-scaling parameters were estimated from the training split only and used to map the inputs to $[0,1]^n$; the same transformations were then applied to the test split. This differs from Heider's input range of $[-1,1]^n$ because the present KACS formulation assumes inputs in $[0,1]^n$. Thus, this experiment follows the principal elements of Heider's evaluation design but is not an exact reproduction of all preprocessing choices.

  XCSF and KACS used identical data splits and random seeds. Their hyperparameters, population budget ($N=6400$), learning duration, and absence of post-training compaction were otherwise unchanged from the main experiments. Learning was disabled when evaluating the held-out test samples.}

\subsection{Main Results}

{Table~\ref{tb:heider_protocol_result_sup} summarizes prediction accuracy, model complexity, and AIC under the alternative protocol. KACS obtains lower mean training and testing MSE than XCSF on all eight benchmark problems. The advantage is especially pronounced on the four synthetic functions, ASN, and CS, while the difference is smaller but remains statistically significant on CCPP and EEC. The across-problem paired Wilcoxon signed-rank test applied to the 64-run mean for each benchmark also favors KACS for both training and testing MSE ($p=0.007813$).

  KACS also uses fewer macro-rules on seven of the eight problems. CCPP is the only exception: XCSF uses 1898 rules on average and KACS uses 1927, with no significant difference between them. Despite these similar rule counts, KACS uses substantially fewer parameters on CCPP because every KACS rule has only two consequent parameters. Across all eight problems, KACS uses only approximately 5.9\%--40.6\% as many parameters as XCSF and obtains a lower AIC. The across-problem tests favor KACS for rule count ($p=0.01563$), parameter count ($p=0.007813$), and AIC ($p=0.007813$).}

\FloatBarrier
\begin{table*}[h]
  \centering
  \caption{{Summary of the XCSF--KACS Comparison under the Alternative
      Evaluation Protocol. Values Are Means over 64 Matched Runs Consisting
      of Eight Monte Carlo Splits and Eight Random Seeds per Split. Training
      and Testing Errors Are MSE in the Standardized Target Space. Green
      Cells Indicate the Lower Mean. For Each Problem, Symbols
      $+/-/\sim$ Denote That XCSF Is Significantly Better/Worse/Similar to
      KACS Based on a Two-Sided Paired Wilcoxon Signed-Rank Test. Rank Is
      the Average Rank across the Eight Problems. The $p$-Values in the
      Final Row Are from Across-Problem Paired Wilcoxon Signed-Rank Tests
      Applied to the 64-Run Mean Values. Statistical Significance Is at
  $\alpha=0.05$ ($\dag$)}}
  \label{tb:heider_protocol_result_sup}
  \renewcommand{\arraystretch}{1.2}
  \resizebox{\width}{!}{
    \small
     \begin{tabular}{c|cc|cc|cc|cc|cc}
      \bhline{1pt}
&\multicolumn{4}{c|}{\textsc{Model Accuracy}}
      &\multicolumn{4}{c|}{\textsc{Model Complexity}}
      &\multicolumn{2}{c}{\textsc{Trade-Off}}\\[-0.2ex]\rule{0pt}{2.6ex}
      &\multicolumn{2}{c|}{\text{Training MSE}}
      &\multicolumn{2}{c|}{\text{Testing MSE}}
      &\multicolumn{2}{c|}{\text{\#Rules} $(|\mathcal{P}|)$}
      &\multicolumn{2}{c|}{\text{\#Parameters} $(k)$}
      &\multicolumn{2}{c}{\text{AIC}}\\
 & XCSF & KACS & XCSF & KACS & XCSF & KACS & XCSF & KACS & XCSF & KACS \\
 \bhline{1pt}
$f_1$ & 0.6028 $-$ & \cellcolor{g}0.5683 & 0.8825 $-$ & \cellcolor{g}0.6655 & 4198 $-$ & \cellcolor{g}1356 & 46175 $-$ & \cellcolor{g}2712 & 91970 $-$ & \cellcolor{g}4999 \\
$f_2$ & 0.4405 $-$ & \cellcolor{g}0.2622 & 0.7426 $-$ & \cellcolor{g}0.3226 & 4178 $-$ & \cellcolor{g}1485 & 45954 $-$ & \cellcolor{g}2971 & 91292 $-$ & \cellcolor{g}4919 \\
$f_3$ & 0.5968 $-$ & \cellcolor{g}0.2870 & 0.8707 $-$ & \cellcolor{g}0.3654 & 4174 $-$ & \cellcolor{g}1444 & 45919 $-$ & \cellcolor{g}2888 & 91450 $-$ & \cellcolor{g}4797 \\
$f_4$ & 0.4797 $-$ & \cellcolor{g}0.2223 & 0.7784 $-$ & \cellcolor{g}0.2795 & 4188 $-$ & \cellcolor{g}1420 & 46069 $-$ & \cellcolor{g}2839 & 91583 $-$ & \cellcolor{g}4529 \\
\text{ASN} & 0.3203 $-$ & \cellcolor{g}0.1285 & 0.3232 $-$ & \cellcolor{g}0.1456 & 2000 $-$ & \cellcolor{g}1434 & 12000 $-$ & \cellcolor{g}2868 & 22688 $-$ & \cellcolor{g}3416 \\
\text{CCPP} & 0.0680 $-$ & \cellcolor{g}0.0631 & 0.0699 $-$ & \cellcolor{g}0.0653 & \cellcolor{g}1898 $\sim$ & 1927 & 9490 $-$ & \cellcolor{g}3854 & -310.4 $-$ & \cellcolor{g}-12171 \\
\text{CS} & 0.1955 $-$ & \cellcolor{g}0.1223 & 0.2098 $-$ & \cellcolor{g}0.1491 & 2837 $-$ & \cellcolor{g}1416 & 25535 $-$ & \cellcolor{g}2831 & 49788 $-$ & \cellcolor{g}4034 \\
\text{EEC} & 0.0770 $-$ & \cellcolor{g}0.0510 & 0.0842 $-$ & \cellcolor{g}0.0593 & 2985 $-$ & \cellcolor{g}1014 & 26867 $-$ & \cellcolor{g}2028 & 52228 $-$ & \cellcolor{g}2319 \\
\bhline{1pt}
Rank & \textit{2.00}$\downarrow$$^\dag$ & \cellcolor{g}\textit{1.00} & \textit{2.00}$\downarrow$$^\dag$ & \cellcolor{g}\textit{1.00} & \textit{1.88}$\downarrow$$^\dag$ & \cellcolor{g}\textit{1.12} & \textit{2.00}$\downarrow$$^\dag$ & \cellcolor{g}\textit{1.00} & \textit{2.00}$\downarrow$$^\dag$ & \cellcolor{g}\textit{1.00} \\
$+/-/\sim$ & 0/8/0 & - & 0/8/0 & - & 0/7/1 & - & 0/8/0 & - & 0/8/0 & - \\
$p$-value & 0.007813 & - & 0.007813 & - & 0.01563 & - & 0.007813 & - & 0.007813 & - \\
\bhline{1pt}
\end{tabular}

  }
\end{table*}
\clearpage

\subsection{Statistical Results and Interpretation}

{Table~\ref{tb:heider_protocol_statistics_sup} provides the means, standard deviations, paired Wilcoxon $p$-values, and matched-pairs rank-biserial correlations for all benchmark--metric combinations. All 16 training- and testing-MSE comparisons favor KACS significantly ($p<0.001$). The corresponding effect sizes are positive and range from 0.5452 to 1 for training MSE and from 0.7635 to 1 for testing MSE. The parameter-count and AIC comparisons have an effect size of 1 on every benchmark. The rule-count comparisons also have an effect size of 1 except on CCPP, where the difference is nonsignificant ($p=0.4458$; effect size $=-0.1101$).

  Compared with the MAE-based results in the main article, the alternative protocol produces a more uniform accuracy advantage for KACS. Because KACS also obtains lower training MSE on every benchmark, this pattern cannot be explained solely by improved generalization from a comparable training fit; under this protocol, KACS fits both the training and test data more accurately according to squared error. These results strengthen the evidence that the favorable KACS--XCSF comparison is robust to a different target transformation, error metric, train--test ratio, and repetition design.

  Nevertheless, the larger performance difference cannot be attributed to any single protocol component because target standardization, MSE, and the train--test ratio were changed simultaneously. In particular, MSE places greater weight on large residuals than MAE, and the smaller training fraction changes the amount of data available for fitting local models. Moreover, absolute MSE and AIC values should not be compared directly with those in the main experiments because the target scale and training-set size differ. We therefore treat this experiment as a robustness analysis rather than a replacement for the controlled main evaluation.}

\begin{table*}[t]
  \centering
  \caption{{Detailed Paired Comparison of XCSF and KACS under the
      Alternative Evaluation Protocol. Values Are Reported as the Mean
  $\pm$ Standard Deviation over 64 Matched Runs. Other Notation Follows Table \ref{tb:supplementary-statistics sup}}}
  \label{tb:heider_protocol_statistics_sup}
  \renewcommand{\arraystretch}{1.2}
  \resizebox{\width}{!}{
    \small
    \begin{tabular}{cccccc}
  \bhline{1pt}
Benchmark & Metric & XCSF mean $\pm$ SD & KACS mean $\pm$ SD & Wilcoxon $p$ & Effect size \\
 \bhline{1pt}
$f_1$ & Training MSE & 0.6028 $\pm$ 0.0533 & \cellcolor{g}0.5683 $\pm$ 0.0640 & 0.0001516 & 0.5452 \\
$f_1$ & Testing MSE & 0.8825 $\pm$ 0.0866 & \cellcolor{g}0.6655 $\pm$ 0.0866 & 5.52E-12 & 0.9913 \\
$f_1$ & \#Rules & 4198 $\pm$ 62.85 & \cellcolor{g}1356 $\pm$ 41.14 & 3.60E-12 & 1 \\
$f_1$ & \#Parameters & 46175 $\pm$ 691.4 & \cellcolor{g}2712 $\pm$ 82.28 & 3.61E-12 & 1 \\
$f_1$ & AIC & 91970 $\pm$ 1370 & \cellcolor{g}4999 $\pm$ 179.8 & 3.61E-12 & 1 \\
\bhline{1pt}
$f_2$ & Training MSE & 0.4405 $\pm$ 0.0440 & \cellcolor{g}0.2622 $\pm$ 0.0714 & 7.17E-11 & 0.9375 \\
$f_2$ & Testing MSE & 0.7426 $\pm$ 0.1222 & \cellcolor{g}0.3226 $\pm$ 0.0864 & 3.61E-12 & 1 \\
$f_2$ & \#Rules & 4178 $\pm$ 67.62 & \cellcolor{g}1485 $\pm$ 31.24 & 3.61E-12 & 1 \\
$f_2$ & \#Parameters & 45954 $\pm$ 743.9 & \cellcolor{g}2971 $\pm$ 62.49 & 3.61E-12 & 1 \\
$f_2$ & AIC & 91292 $\pm$ 1463 & \cellcolor{g}4919 $\pm$ 207.2 & 3.61E-12 & 1 \\
\bhline{1pt}
$f_3$ & Training MSE & 0.5968 $\pm$ 0.0550 & \cellcolor{g}0.2870 $\pm$ 0.0984 & 3.79E-12 & 0.999 \\
$f_3$ & Testing MSE & 0.8707 $\pm$ 0.0765 & \cellcolor{g}0.3654 $\pm$ 0.1201 & 3.61E-12 & 1 \\
$f_3$ & \#Rules & 4174 $\pm$ 51.89 & \cellcolor{g}1444 $\pm$ 35.87 & 3.61E-12 & 1 \\
$f_3$ & \#Parameters & 45919 $\pm$ 570.8 & \cellcolor{g}2888 $\pm$ 71.74 & 3.61E-12 & 1 \\
$f_3$ & AIC & 91450 $\pm$ 1121 & \cellcolor{g}4797 $\pm$ 307.1 & 3.61E-12 & 1 \\
\bhline{1pt}
$f_4$ & Training MSE & 0.4797 $\pm$ 0.0611 & \cellcolor{g}0.2223 $\pm$ 0.0649 & 3.61E-12 & 1 \\
$f_4$ & Testing MSE & 0.7784 $\pm$ 0.0702 & \cellcolor{g}0.2795 $\pm$ 0.0783 & 3.61E-12 & 1 \\
$f_4$ & \#Rules & 4188 $\pm$ 61.59 & \cellcolor{g}1420 $\pm$ 32.77 & 3.60E-12 & 1 \\
$f_4$ & \#Parameters & 46069 $\pm$ 677.5 & \cellcolor{g}2839 $\pm$ 65.54 & 3.61E-12 & 1 \\
$f_4$ & AIC & 91583 $\pm$ 1336 & \cellcolor{g}4529 $\pm$ 217.0 & 3.61E-12 & 1 \\
\bhline{1pt}
\text{ASN} & Training MSE & 0.3203 $\pm$ 0.0720 & \cellcolor{g}0.1285 $\pm$ 0.0182 & 3.61E-12 & 1 \\
\text{ASN} & Testing MSE & 0.3232 $\pm$ 0.0792 & \cellcolor{g}0.1456 $\pm$ 0.0243 & 3.97E-12 & 0.9981 \\
\text{ASN} & \#Rules & 2000 $\pm$ 410.8 & \cellcolor{g}1434 $\pm$ 40.38 & 3.61E-12 & 1 \\
\text{ASN} & \#Parameters & 12000 $\pm$ 2465 & \cellcolor{g}2868 $\pm$ 80.77 & 3.61E-12 & 1 \\
\text{ASN} & AIC & 22688 $\pm$ 4671 & \cellcolor{g}3416 $\pm$ 191.6 & 3.61E-12 & 1 \\
\bhline{1pt}
\text{CCPP} & Training MSE & 0.0680 $\pm$ 0.0016 & \cellcolor{g}0.0631 $\pm$ 0.0090 & 1.24E-08 & 0.8192 \\
\text{CCPP} & Testing MSE & 0.0699 $\pm$ 0.0042 & \cellcolor{g}0.0653 $\pm$ 0.0086 & 3.13E-08 & 0.7962 \\
\text{CCPP} & \#Rules & \cellcolor{g}1898 $\pm$ 297.1 & 1927 $\pm$ 45.26 & 0.4458 & -0.1101 \\
\text{CCPP} & \#Parameters & 9490 $\pm$ 1485 & \cellcolor{g}3854 $\pm$ 90.51 & 3.61E-12 & 1 \\
\text{CCPP} & AIC & -310.4 $\pm$ 2911 & \cellcolor{g}-12171 $\pm$ 816.7 & 3.61E-12 & 1 \\
\bhline{1pt}
\text{CS} & Training MSE & 0.1955 $\pm$ 0.0471 & \cellcolor{g}0.1223 $\pm$ 0.0186 & 7.50E-11 & 0.9365 \\
\text{CS} & Testing MSE & 0.2098 $\pm$ 0.0461 & \cellcolor{g}0.1491 $\pm$ 0.0271 & 5.10E-10 & 0.8942 \\
\text{CS} & \#Rules & 2837 $\pm$ 434.5 & \cellcolor{g}1416 $\pm$ 42.30 & 3.61E-12 & 1 \\
\text{CS} & \#Parameters & 25535 $\pm$ 3910 & \cellcolor{g}2831 $\pm$ 84.60 & 3.61E-12 & 1 \\
\text{CS} & AIC & 49788 $\pm$ 7663 & \cellcolor{g}4034 $\pm$ 197.8 & 3.61E-12 & 1 \\
\bhline{1pt}
\text{EEC} & Training MSE & 0.0770 $\pm$ 0.0214 & \cellcolor{g}0.0510 $\pm$ 0.0155 & 2.59E-08 & 0.801 \\
\text{EEC} & Testing MSE & 0.0842 $\pm$ 0.0236 & \cellcolor{g}0.0593 $\pm$ 0.0196 & 1.12E-07 & 0.7635 \\
\text{EEC} & \#Rules & 2985 $\pm$ 391.4 & \cellcolor{g}1014 $\pm$ 35.34 & 3.61E-12 & 1 \\
\text{EEC} & \#Parameters & 26867 $\pm$ 3523 & \cellcolor{g}2028 $\pm$ 70.68 & 3.61E-12 & 1 \\
\text{EEC} & AIC & 52228 $\pm$ 6897 & \cellcolor{g}2319 $\pm$ 231.5 & 3.61E-12 & 1 \\
 \bhline{1pt}
 \end{tabular}

  }
\end{table*}
\clearpage

\section{Additional Learning Curves}
\label{sec:Additional Learning Curves sup}

\subsection{Training MAE Learning Curves}
\label{ss:Training MAE Learning Curves sup}
Fig.~\ref{fig:all_mae sup} presents the training MAE curves for all eight benchmark problems.

\begin{figure*}[h]
  \centering
  \subfloat[$f_1$: Rastrigin Function]{
    \includegraphics[width=0.24\textwidth]{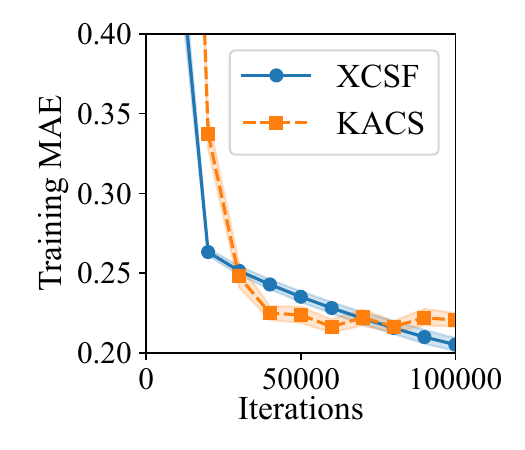}
    \label{fig:f1_mae_train sup}
  }
  \subfloat[$f_2$: Rosenbrock Function]{\includegraphics[width=0.24\textwidth]{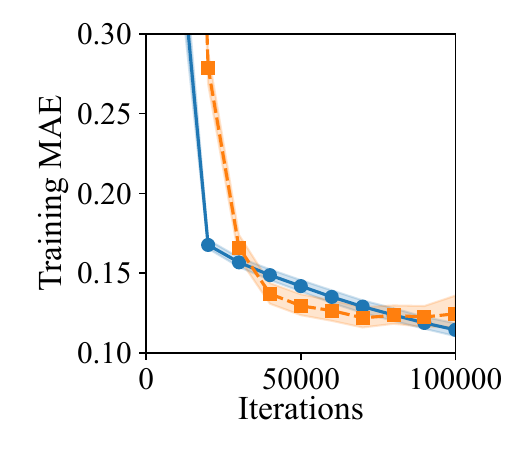} \label{fig:f2_mae_train sup}}
  \subfloat[$f_3$: Cross Function]{\includegraphics[width=0.24\textwidth]{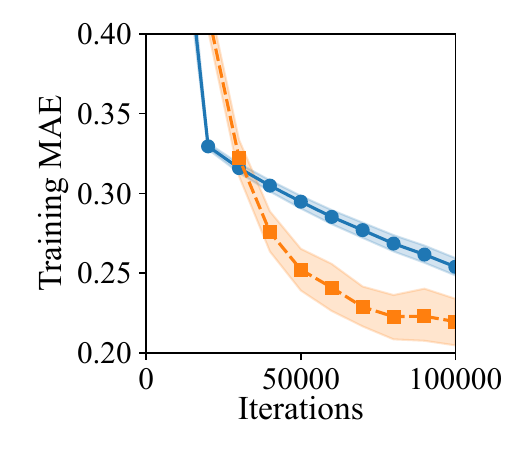} \label{fig:f3_mae_train sup}}
  \subfloat[$f_4$: Styblinski-Tang Function]{\includegraphics[width=0.24\textwidth]{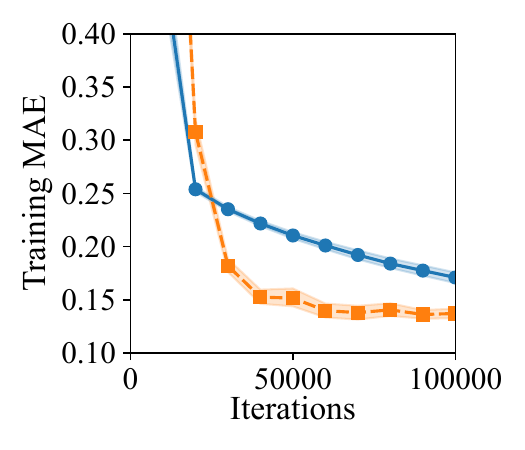} \label{fig:f4_mae_train sup}}\\
  \subfloat[ASN]{
    \includegraphics[width=0.24\textwidth]{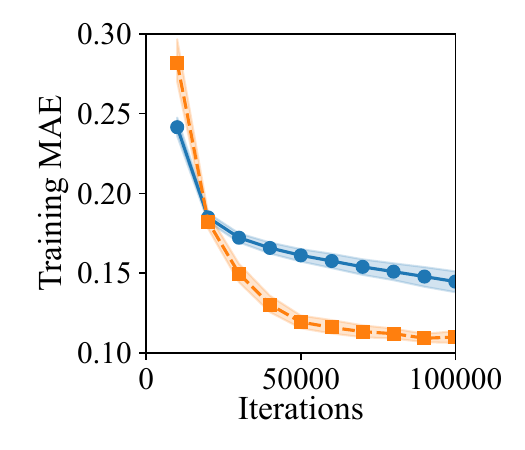}
    \label{fig:asn_mae_train sup}
  }
  \subfloat[CCPP]{\includegraphics[width=0.24\textwidth]{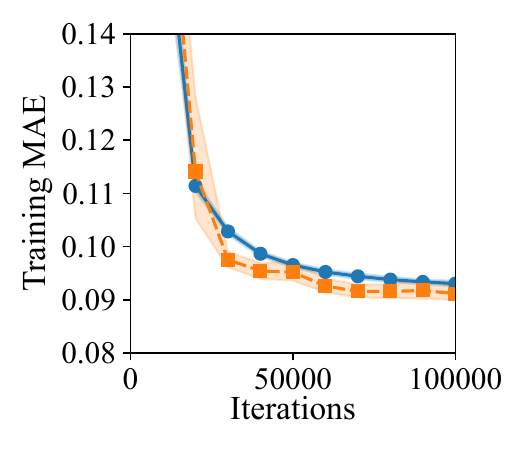} \label{fig:ccpp_mae_train sup}}
  \subfloat[CS]{\includegraphics[width=0.24\textwidth]{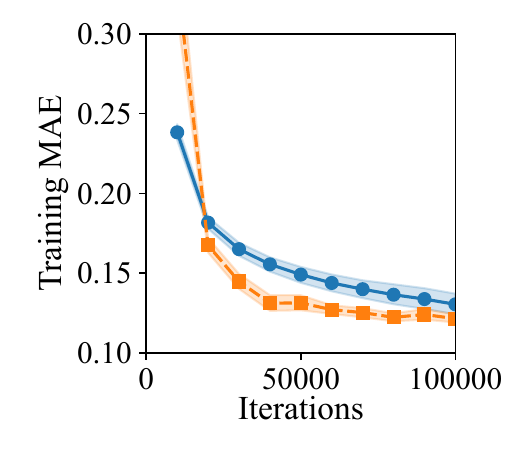} \label{fig:cs_mae_train sup}}
  \subfloat[EEC]{\includegraphics[width=0.24\textwidth]{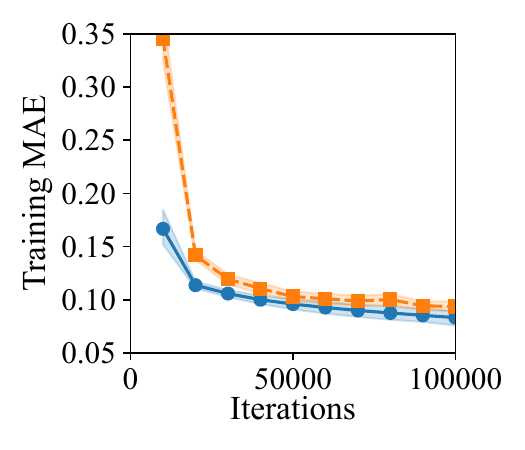} \label{fig:eec_mae_train sup}}
  \caption{Training MAE learning curves for all eight benchmark problems. Curves show the mean over 30 runs, and shaded regions denote 95\% confidence intervals.}
  \label{fig:all_mae sup}
\end{figure*}

\subsection{Testing MAE Learning Curves}
\label{ss:Testing MAE Learning Curves sup}
\label{others-7}
{Fig.~\ref{fig:all_mae test sup} presents the testing MAE curves for all eight benchmark problems. Note that Figs.~\ref{fig:f1_mae sup}--\ref{fig:f4_mae sup} reproduce Figs.~\ref{fig:f1_mae}--\ref{fig:f4_mae} from the main article.}

\begin{figure*}[h]
  \centering
  \subfloat[$f_1$: Rastrigin Function]{
    \includegraphics[width=0.24\textwidth]{fig/mae/rastrigin_10dim_1000_Phash0.pdf}
    \label{fig:f1_mae sup}
  }
  \subfloat[$f_2$: Rosenbrock Function]{\includegraphics[width=0.24\textwidth]{fig/mae/rosenbrock_10dim_1000_Phash0.pdf} \label{fig:f2_mae sup}}
  \subfloat[$f_3$: Cross Function]{\includegraphics[width=0.24\textwidth]{fig/mae/f3_10dim_1000_Phash0.pdf} \label{fig:f3_mae sup}}
  \subfloat[$f_4$: Styblinski-Tang Function]{\includegraphics[width=0.24\textwidth]{fig/mae/f4_10dim_1000_Phash0.pdf} \label{fig:f4_mae sup}}
  \\
  \subfloat[ASN]{
    \includegraphics[width=0.24\textwidth]{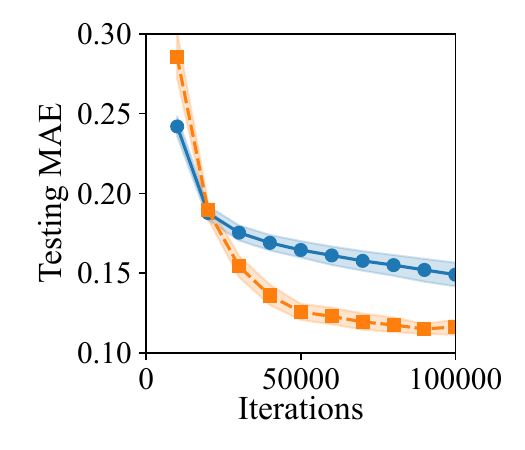}
    \label{fig:asn_mae sup}
  }
  \subfloat[CCPP]{\includegraphics[width=0.24\textwidth]{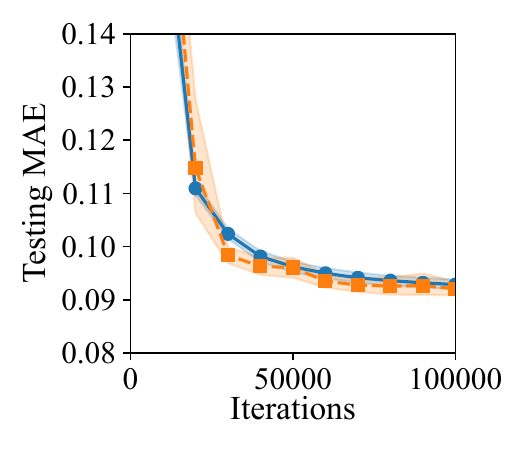} \label{fig:ccpp_mae sup}}
  \subfloat[CS]{\includegraphics[width=0.24\textwidth]{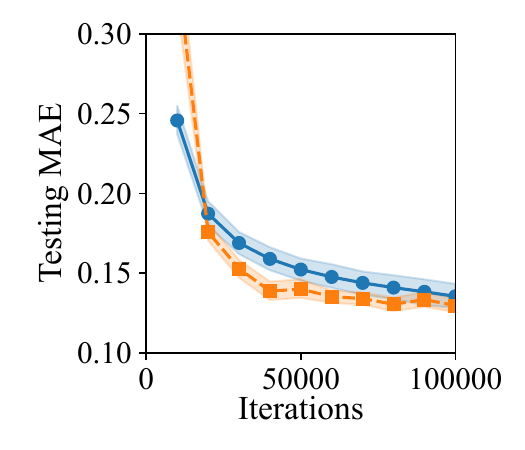} \label{fig:cs_mae sup}}
  \subfloat[EEC]{\includegraphics[width=0.24\textwidth]{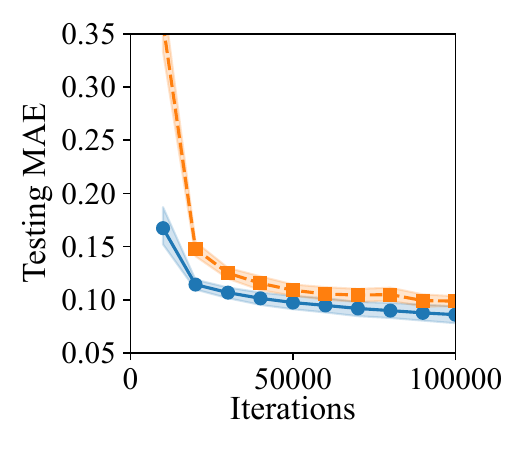} \label{fig:eec_mae sup}}
  \caption{Testing MAE learning curves for all eight benchmark problems. Curves show the mean over 30 runs, and shaded regions denote 95\% confidence intervals.}
  \label{fig:all_mae test sup}
\end{figure*}

\clearpage
\subsection{Parameter-Count Learning Curves}
\label{ss:Parameter-Count Learning Curves sup}
Fig.~\ref{fig:all_nop sup} summarizes the parameter-count trajectories during training. Compared with MAE reduction, parameter growth remains controlled, suggesting that KACS improves predictive accuracy without relying on uncontrolled model-size expansion. This compactness trend is observed consistently over all benchmark problems.

\begin{figure*}[h]
  \centering
  \subfloat[$f_1$: Rastrigin Function]{
    \includegraphics[width=0.24\textwidth]{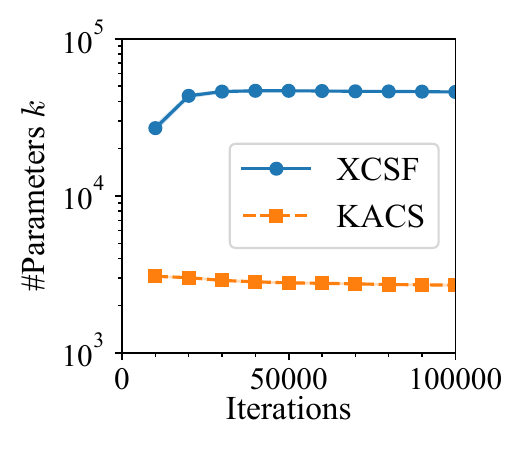}
    \label{fig:f1_nop sup}
  }
  \subfloat[$f_2$: Rosenbrock Function]{\includegraphics[width=0.24\textwidth]{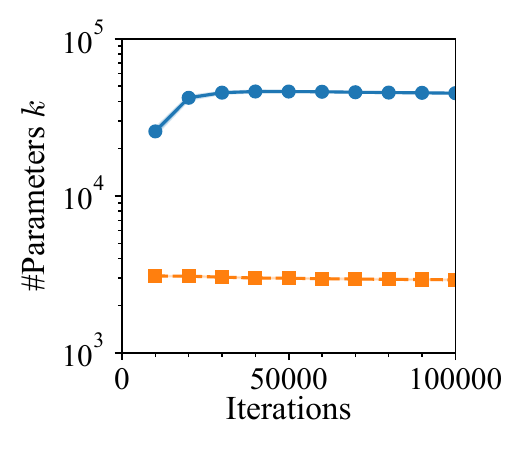} \label{fig:f2_nop sup}}
  \subfloat[$f_3$: Cross Function]{\includegraphics[width=0.24\textwidth]{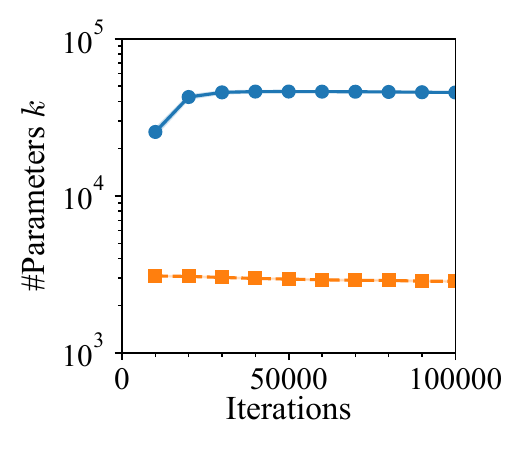} \label{fig:f3_nop sup}}
  \subfloat[$f_4$: Styblinski-Tang Function]{\includegraphics[width=0.24\textwidth]{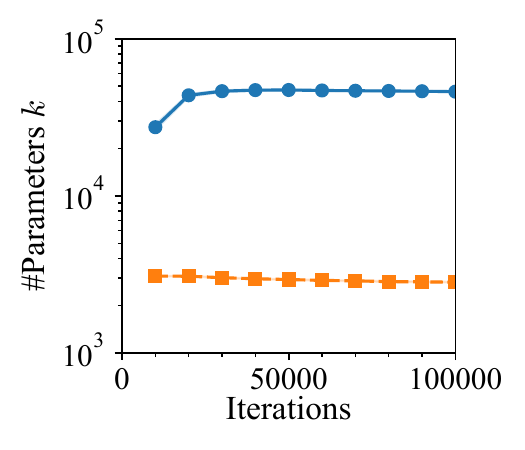} \label{fig:f4_nop sup}}\\
  \subfloat[ASN]{
    \includegraphics[width=0.24\textwidth]{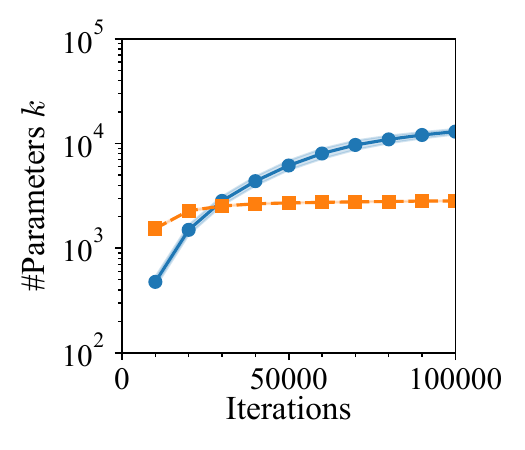}
    \label{fig:asn_nop sup}
  }
  \subfloat[CCPP]{\includegraphics[width=0.24\textwidth]{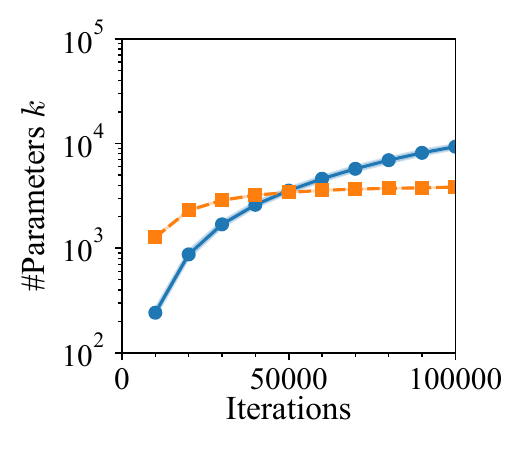} \label{fig:ccpp_nop sup}}
  \subfloat[CS]{\includegraphics[width=0.24\textwidth]{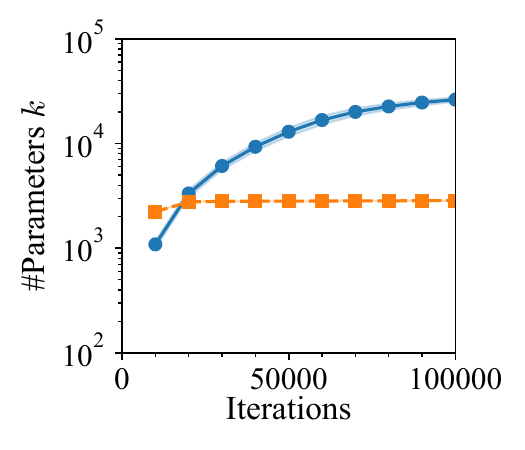} \label{fig:cs_nop sup}}
  \subfloat[EEC]{\includegraphics[width=0.24\textwidth]{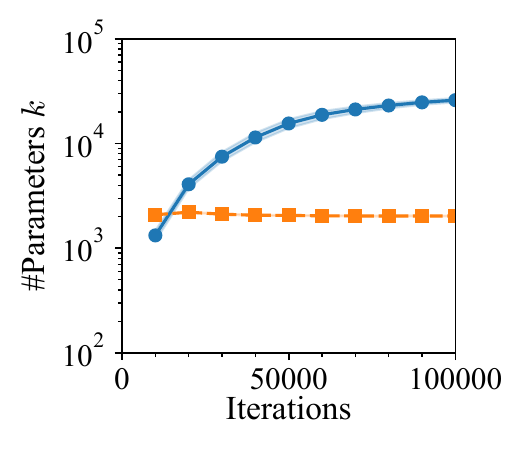} \label{fig:eec_nop sup}}
  \caption{Parameter-count learning curves for all eight benchmark problems. Curves show the mean over 30 runs, and shaded regions denote 95\% confidence intervals.}
  \label{fig:all_nop sup}
\end{figure*}

\clearpage

\clearpage

\section{Effects of Rotation, Translation, and Scaling on $f_3$}
\label{sec:rotation_sup}
\label{r2-13-2}

The original $f_3$ (cf. Fig. \ref{fig:f3 sup}) has ridges aligned with the coordinate axes, which would be a favorable structure for both KACS and XCSF. To test whether the advantage of KACS depends on this alignment, we evaluate both methods on a rotated variant of $f_3$ in which the ridges are tilted by $45^\circ$ relative to the coordinate axes. {To complement this primary rotation analysis, we also define translated and scaled variants of $f_3$ below.}

\subsection{Problem Definition}
Define two aggregate variables
\begin{equation}
  a = \frac{1}{\lfloor n/2 \rfloor} \sum_{i=1}^{\lfloor n/2 \rfloor} {x}_i, \qquad
  b = \frac{1}{\lceil n/2 \rceil} \sum_{i=\lfloor n/2 \rfloor+1}^{n} {x}_i.
  \label{eq:ab}
\end{equation}
The $45^\circ$-rotated coordinates are
\begin{equation}
  a_\mathrm{rot} = \frac{a + b}{\sqrt{2}}, \qquad
  b_\mathrm{rot} = \frac{-a + b}{\sqrt{2}}.
  \label{eq:rot}
\end{equation}
The rotated $f_3$ is then defined as
\begin{equation}
  f_3^\mathrm{rot}(\mathbf{x}) = \max\!\left(
    \exp(-10\, a_\mathrm{rot}^2),\;
    \exp(-50\, b_\mathrm{rot}^2),\;
    1.25\, \exp\left(-5(a^2 + b^2)\right)
  \right).
  \label{eq:f3rot}
\end{equation}
The third term $1.25\,\exp\left(-5(a^2+b^2)\right)$ does not change under rotation. Fig.~\ref{fig:f3_rotated sup} illustrates the rotated $f_3$ for $n=2$.

{
  The translated variant shifts every input coordinate by $0.20$, i.e.,
  \begin{equation}
    z_i^\mathrm{tr}=x_i-0.20,
    \qquad i=1,\ldots,n,
    \label{eq:f3_translation}
  \end{equation}
  and
  \begin{equation}
    a_\mathrm{tr}
    =\frac{1}{\lfloor n/2\rfloor}
    \sum_{i=1}^{\lfloor n/2\rfloor}z_i^\mathrm{tr},
    \qquad
    b_\mathrm{tr}
    =\frac{1}{\lceil n/2\rceil}
    \sum_{i=\lfloor n/2\rfloor+1}^{n}z_i^\mathrm{tr}.
    \label{eq:f3_translation_ab}
  \end{equation}
  The translated $f_3$ is
  \begin{equation}
    f_3^\mathrm{tr}(\mathbf{x})
    =\max\!\left(
      \exp(-10a_\mathrm{tr}^2),\;
      \exp(-50b_\mathrm{tr}^2),\;
      1.25\exp\!\left[-5\left(a_\mathrm{tr}^2+b_\mathrm{tr}^2\right)\right]
    \right).
    \label{eq:f3_translated}
  \end{equation}
  Fig.~\ref{fig:f3_translated sup} illustrates the translated $f_3$ for $n=2$.

  The scaled variant scales every input coordinate by a factor of $1.25$, i.e.,
  \begin{equation}
    z_i^\mathrm{sc}=1.25x_i,
    \qquad i=1,\ldots,n,
    \label{eq:f3_scaling}
  \end{equation}
  and
  \begin{equation}
    a_\mathrm{sc}
    =\frac{1}{\lfloor n/2\rfloor}
    \sum_{i=1}^{\lfloor n/2\rfloor}z_i^\mathrm{sc},
    \qquad
    b_\mathrm{sc}
    =\frac{1}{\lceil n/2\rceil}
    \sum_{i=\lfloor n/2\rfloor+1}^{n}z_i^\mathrm{sc}.
    \label{eq:f3_scaling_ab}
  \end{equation}
  The scaled $f_3$ is
  \begin{equation}
    f_3^\mathrm{sc}(\mathbf{x})
    =\max\!\left(
      \exp(-10a_\mathrm{sc}^2),\;
      \exp(-50b_\mathrm{sc}^2),\;
      1.25\exp\!\left[-5\left(a_\mathrm{sc}^2+b_\mathrm{sc}^2\right)\right]
    \right).
    \label{eq:f3_scaled}
  \end{equation}
Fig.~\ref{fig:f3_scaled sup} illustrates the scaled $f_3$ for $n=2$.}

All other experimental conditions follow the main article: $n=10$, $N_S=1000$, and 30 independent runs.

\subsection{Experimental Results}
Figs.~\ref{fig:f3 test sup} and \ref{fig:f3_rotated test sup} show the testing MAE learning curves for the original and rotated $f_3$, respectively. On both problems, KACS achieves significantly lower testing MAE than XCSF at the end of training. After rotation, KACS accuracy improves slightly ($0.2329 \to 0.2184$) while XCSF accuracy degrades ($0.3246 \to 0.3430$), widening the gap between the two methods from $0.0917$ to $0.1246$.

{
  Figs.~\ref{fig:f3_shifted test sup} and \ref{fig:f3_scaled test sup} show the corresponding results for the translated and scaled variants. On the translated $f_3$, the final testing MAE is $0.2901$ for XCSF and $0.2478$ for KACS, whereas on the scaled $f_3$, it is $0.3911$ and $0.2488$, respectively. KACS therefore retains significantly lower testing MAE under both transformations. Relative to the original $f_3$, the accuracy gap narrows from $0.0917$ to $0.0423$ after translation but widens to $0.1423$ after scaling.
}

\subsection{Discussion}
XCSF uses axis-aligned rectangles to cover the input space. These rectangles fit well when the ridges of $f_3$ run along the coordinate axes, but become less efficient when the ridges are tilted, requiring more rules to cover the diagonal structure. This is why XCSF accuracy degrades after rotation (testing MAE: $0.3246 \to 0.3430$).

KACS decomposes the target function into sums of one-dimensional components following the KA theorem (cf.~\eqref{eq:ka_theorem} in the main article). This additive structure is compatible with both the original and rotated ridges of $f_3$, so the rotation does not disadvantage KACS. As a result, the accuracy gap between the two methods widens after rotation ($0.0917 \to 0.1246$), showing that the advantage of KACS does not depend on axis-aligned function structure.

{
  Translation preserves the orientation and width of the ridges while moving their location by $0.20$ and therefore introduces no orientation mismatch. Although the accuracy gap becomes smaller, KACS remains significantly more accurate, indicating that its advantage is not restricted to a function centered at the origin.

  Scaling the inputs by $1.25$ contracts the ridge widths to $80\%$ of their original values, requiring finer spatial resolution. The larger degradation of XCSF is consistent with the need for finer multidimensional partitions, whereas KACS remains comparatively stable through its one-dimensional submodels.

  Together with the rotation results, these findings show that the advantage of KACS persists across changes in the orientation, location, and spatial scale of $f_3$.
}

\begin{figure}[h]
  \centering
  \subfloat[$f_3$]{
    \syntheticcrossplot
    \label{fig:f3 sup}
  }\hfill
  \subfloat[Rotated $f_3$]{
    \begin{tikzpicture}[
        declare function={
          crossfunc(\x,\y) = max(
            exp(-10*((\x+\y)/sqrt(2))*((\x+\y)/sqrt(2))),
            exp(-50*((-\x+\y)/sqrt(2))*((-\x+\y)/sqrt(2))),
            1.25*exp(-5*(\x*\x + \y*\y))
          );
        }
      ]
      \begin{axis}[
          width=0.22\textwidth,
          height=0.165\textwidth,
          xlabel={$x_1$}, ylabel={$x_2$}, zlabel={$f_3^\mathrm{rot}(x_1,x_2)$},
          view={60}{30},
          colormap/viridis,
          xmin=-1.414, xmax=1.414,
          ymin=-1.414, ymax=1.414,
          samples=30, samples y=30,
        ]
        \addplot3[
          surf,
          domain=-1.414:1.414,
          domain y=-1.414:1.414,
        ]
        {crossfunc(x,y)};
      \end{axis}
    \end{tikzpicture}
    \label{fig:f3_rotated sup}
  }\hfill
  \subfloat[Translated $f_3$]{
    \begin{tikzpicture}[
        declare function={
          crossfunc(\x,\y) = max(
            exp(-10*(\x-0.20)*(\x-0.20)),
            exp(-50*(\y-0.20)*(\y-0.20)),
            1.25*exp(-5*((\x-0.20)*(\x-0.20) + (\y-0.20)*(\y-0.20)))
          );
        }
      ]
      \begin{axis}[
          width=0.22\textwidth,
          height=0.165\textwidth,
          xlabel={$x_1$}, ylabel={$x_2$}, zlabel={$f_3^\mathrm{tr}(x_1,x_2)$},
          view={60}{30},
          colormap/viridis,
          xmin=-1, xmax=1,
          ymin=-1, ymax=1,
          samples=30, samples y=30,
        ]
        \addplot3[
          surf,
          domain=-1:1,
          domain y=-1:1,
        ]
        {crossfunc(x,y)};
      \end{axis}
    \end{tikzpicture}
    \label{fig:f3_translated sup}
  }\hfill
  \subfloat[Scaled $f_3$]{
    \begin{tikzpicture}[
        declare function={
          crossfunc(\x,\y) = max(
            exp(-10*(1.25*\x)*(1.25*\x)),
            exp(-50*(1.25*\y)*(1.25*\y)),
            1.25*exp(-5*((1.25*\x)*(1.25*\x) + (1.25*\y)*(1.25*\y)))
          );
        }
      ]
      \begin{axis}[
          width=0.22\textwidth,
          height=0.165\textwidth,
          xlabel={$x_1$}, ylabel={$x_2$}, zlabel={$f_3^\mathrm{sc}(x_1,x_2)$},
          view={60}{30},
          colormap/viridis,
          xmin=-1, xmax=1,
          ymin=-1, ymax=1,
          samples=30, samples y=30,
        ]
        \addplot3[
          surf,
          domain=-1:1,
          domain y=-1:1,
        ]
        {crossfunc(x,y)};
      \end{axis}
    \end{tikzpicture}
    \label{fig:f3_scaled sup}
  }\\
  \subfloat[Testing MAE on $f_3$]{\includegraphics[width=0.23\textwidth]{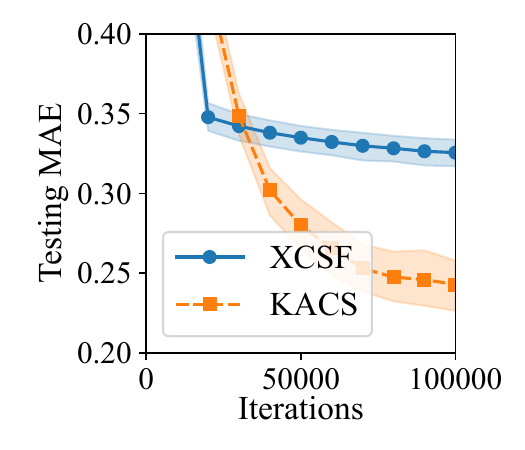} \label{fig:f3 test sup}}\hfill
  \subfloat[Testing MAE on Rotated $f_3$]{\includegraphics[width=0.23\textwidth]{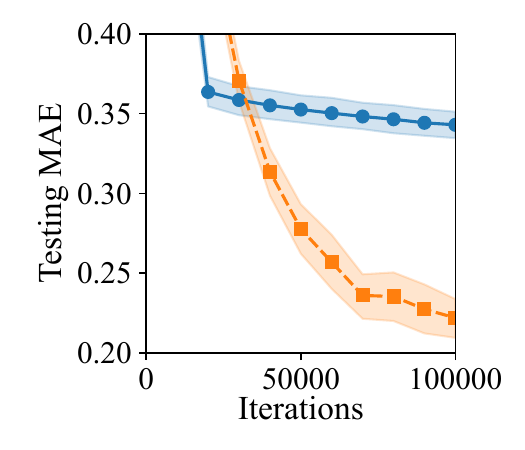} \label{fig:f3_rotated test sup}}\hfill
  \subfloat[Testing MAE on Translated $f_3$]{\includegraphics[width=0.23\textwidth]{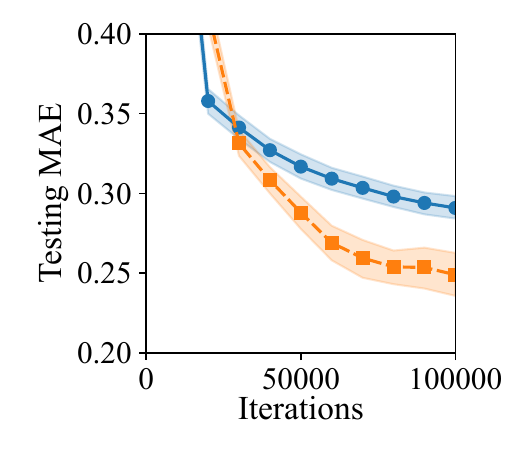} \label{fig:f3_shifted test sup}}\hfill
  \subfloat[Testing MAE on Scaled $f_3$]{\includegraphics[width=0.23\textwidth]{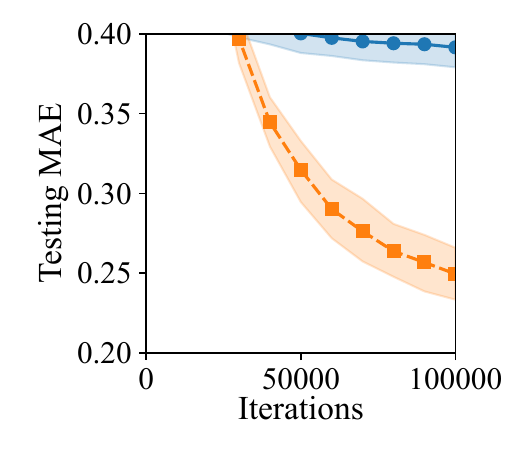} \label{fig:f3_scaled test sup}}\\
  \caption{Original, rotated, translated, and scaled $f_3$ settings. (a)--(d): 3D landscapes of the original function and its rotated, translated by $0.20$, and scaled by $1.25$ variants. (e)--(h): corresponding testing MAE curves.}
  \label{fig:rotated sup}
\end{figure}

\clearpage
\section{Scalability Analysis --- Detailed Results}
\label{sec: Scalability Analysis --- Detailed Results sup}

\subsection{Sample-Size Scaling}
\label{ss:Sample-Size Scaling sup}

Table~\ref{tb:result samscale sup} reports detailed results under varying sample sizes $N_S \in \{125, 250, 500, 1000, 2000, 4000, 8000\}$ with $n=10$ fixed. {Table~\ref{tb:statistics samscale sup} reports the corresponding run-level variability, paired Wilcoxon $p$-values, and effect sizes for each sample size and metric.}\label{r2-11-4}

Several patterns are evident. In training MAE, XCSF achieves significantly lower values than KACS at small sample sizes ($N_S \leq 250$), indicating that $n$-dimensional rules overfit when data are scarce. This gap narrows and reverses as $N_S$ increases: from $N_S=1000$ onward, KACS achieves significantly lower training MAE, suggesting that one-dimensional submodels become more effective as more data are available per subproblem. In testing MAE, KACS consistently achieves significantly lower values across all sample sizes ($p=0.0156$), confirming that the generalization advantage of KA-based decomposition holds regardless of data volume.

{Fig.~\ref{fig:sample scaling mae sup} provides an anytime view of testing MAE for all sample sizes. XCSF improves rapidly during the early iterations but subsequently changes only gradually, reaching an apparent plateau for most settings and showing increasing testing error at $N_S=250$. In contrast, KACS requires a longer warm-up but continues to improve and approaches a lower testing-MAE level. For $N_S\geq500$, the XCSF curves vary little during the latter half of training while a clear accuracy gap remains. At $N_S=125$, both curves remain relatively variable and KACS is still improving at the final iteration; nevertheless, XCSF has already stabilized at a higher error level.}\label{r2-17}

Model complexity shows a clear structural difference. XCSF rules and parameters grow monotonically with $N_S$ (3164 to 4724 rules; 34800 to 51960 parameters), whereas KACS remains compact throughout (1242--1430 rules; 2484--2860 parameters). The AIC advantage of KACS widens substantially as $N_S$ increases: at $N_S = 8000$, KACS achieves an AIC of $-13090$ compared with $90170$ for XCSF, reflecting both higher accuracy and dramatically fewer parameters at large data scales.

\begin{table*}[h]
  \centering
  \caption{Summary of Sample-Size Scalability Analysis. Green Indicates the Best Value. Rank Is the Average Rank Across Problems. Symbols $+/-/\sim$ Denote Significantly Better/Worse/Similar Performance Compared With KACS by the Wilcoxon Signed-Rank Test. Arrows $\uparrow/\downarrow$ Indicate Rank Improvement/Decline Relative to KACS. Statistical Significance Is at $\alpha=0.05$ ($\dag$)}
  \label{tb:result samscale sup}
  \renewcommand{\arraystretch}{1.2}
  \resizebox{\width}{!}{
    \small
    \begin{tabular}{c|cc|cc|cc|cc|cc}
  \bhline{1pt}
  &\multicolumn{4}{c|}{\textsc{Model Accuracy}}
  &\multicolumn{4}{c|}{\textsc{Model Complexity}}
  &\multicolumn{2}{c}{\textsc{Trade-Off}}\\[-0.2ex]\rule{0pt}{2.6ex}
  &\multicolumn{2}{c|}{\text{Training MAE}}
  &\multicolumn{2}{c|}{\text{Testing MAE}}
  &\multicolumn{2}{c|}{\text{\#Rules} $(|\mathcal{P}|)$}
  &\multicolumn{2}{c|}{\text{\#Parameters} $(k)$}
  &\multicolumn{2}{c}{\text{AIC}}\\
  $N_S$& XCSF & KACS & XCSF & KACS & XCSF & KACS & XCSF & KACS & XCSF & KACS \\
  \bhline{1pt}
  125 & \cellcolor{g}0.0124 $+$ & 0.2077 & 0.4156 $\sim$ & \cellcolor{g}0.3798 & 3164 $-$ & \cellcolor{g}1242 & 34802 $-$ & \cellcolor{g}2484 & 68860 $-$ & \cellcolor{g}4652 \\
  250 & \cellcolor{g}0.0893 $+$ & 0.207 & 0.3964 $-$ & \cellcolor{g}0.3308 & 3585 $-$ & \cellcolor{g}1333 & 39436 $-$ & \cellcolor{g}2666 & 78067 $-$ & \cellcolor{g}4722 \\
  500 & \cellcolor{g}0.1805 $\sim$ & 0.1903 & 0.3484 $-$ & \cellcolor{g}0.2246 & 3924 $-$ & \cellcolor{g}1353 & 43166 $-$ & \cellcolor{g}2707 & 85158 $-$ & \cellcolor{g}4137 \\
  1000 & 0.2510 $-$ & \cellcolor{g}0.2097 & 0.3246 $-$ & \cellcolor{g}0.2329 & 4130 $-$ & \cellcolor{g}1429 & 45426 $-$ & \cellcolor{g}2857 & 88892 $-$ & \cellcolor{g}3324 \\
  2000 & 0.2824 $-$ & \cellcolor{g}0.2080 & 0.3095 $-$ & \cellcolor{g}0.2177 & 4327 $-$ & \cellcolor{g}1441 & 47597 $-$ & \cellcolor{g}2882 & 91538 $-$ & \cellcolor{g}886.6 \\
  4000 & 0.3027 $-$ & \cellcolor{g}0.2098 & 0.3129 $-$ & \cellcolor{g}0.2131 & 4521 $-$ & \cellcolor{g}1406 & 49731 $-$ & \cellcolor{g}2812 & 92463 $-$ & \cellcolor{g}-3964 \\
  8000 & 0.3075 $-$ & \cellcolor{g}0.2146 & 0.3117 $-$ & \cellcolor{g}0.2159 & 4724 $-$ & \cellcolor{g}1430 & 51964 $-$ & \cellcolor{g}2860 & 90165 $-$ & \cellcolor{g}-13089 \\

  \bhline{1pt}
  Rank & \textit{1.57}$\downarrow$ & \cellcolor{g}\textit{1.43} & \textit{2.00}$\downarrow^{\dag}$ & \cellcolor{g}\textit{1.00} & \textit{2.00}$\downarrow^{\dag}$ & \cellcolor{g}\textit{1.00} & \textit{2.00}$\downarrow^{\dag}$ & \cellcolor{g}\textit{1.00} & \textit{2.00}$\downarrow^{\dag}$ & \cellcolor{g}\textit{1.00} \\
  $+/-/\sim$ & 2/4/1 & - & 0/6/1 & - & 0/7/0 & - & 0/7/0 & - & 0/7/0 & - \\
  $p$-value & 1 & - & 0.0156 & - & 0.0156 & - & 0.0156 & - & 0.0156 & - \\
  \bhline{1pt}
\end{tabular}

  }
\end{table*}

\begin{table*}[h]
  \centering
  \caption{{Detailed Paired Comparison of XCSF and KACS over 30 Matched
  Runs for Each Sample Size and Metric. Notation Follows Table~\ref{tb:supplementary-statistics sup}}}
  \label{tb:statistics samscale sup}
  \renewcommand{\arraystretch}{1.2}
  \resizebox{\width}{!}{
    \small
    \begin{tabular}{cccccc}
  \bhline{1pt}
  $N_S$ & Metric & XCSF mean $\pm$ SD & KACS mean $\pm$ SD & Wilcoxon $p$ & Effect size \\
  \bhline{1pt}
  125 & Training MAE & \cellcolor{g}0.0124 $\pm$ 0.0080 & 0.2077 $\pm$ 0.0775 & 1.86E-09 & -1 \\
  125 & Testing MAE & 0.4156 $\pm$ 0.0814 & \cellcolor{g}0.3798 $\pm$ 0.0953 & 0.1241 & 0.3247 \\
  125 & \#Rules & 3164 $\pm$ 82.06 & \cellcolor{g}1242 $\pm$ 45.36 & 1.82E-06 & 1 \\
  125 & \#Parameters & 34802 $\pm$ 902.7 & \cellcolor{g}2484 $\pm$ 90.71 & 1.86E-09 & 1 \\
  125 & AIC & 68860 $\pm$ 1857 & \cellcolor{g}4652 $\pm$ 176.2 & 1.86E-09 & 1 \\
  \bhline{1pt}
  250 & Training MAE & \cellcolor{g}0.0893 $\pm$ 0.0297 & 0.2070 $\pm$ 0.0464 & 1.86E-09 & -1 \\
  250 & Testing MAE & 0.3964 $\pm$ 0.0717 & \cellcolor{g}0.3308 $\pm$ 0.0760 & 0.001232 & 0.6516 \\
  250 & \#Rules & 3585 $\pm$ 69.45 & \cellcolor{g}1333 $\pm$ 42.34 & 1.82E-06 & 1 \\
  250 & \#Parameters & 39436 $\pm$ 763.9 & \cellcolor{g}2666 $\pm$ 84.68 & 1.86E-09 & 1 \\
  250 & AIC & 78067 $\pm$ 1593 & \cellcolor{g}4722 $\pm$ 196.1 & 1.86E-09 & 1 \\
  \bhline{1pt}
  500 & Training MAE & \cellcolor{g}0.1805 $\pm$ 0.0265 & 0.1903 $\pm$ 0.0378 & 0.1772 & -0.286 \\
  500 & Testing MAE & 0.3484 $\pm$ 0.0471 & \cellcolor{g}0.2246 $\pm$ 0.0515 & 1.86E-09 & 1 \\
  500 & \#Rules & 3924 $\pm$ 63.14 & \cellcolor{g}1353 $\pm$ 39.37 & 1.82E-06 & 1 \\
  500 & \#Parameters & 43166 $\pm$ 694.5 & \cellcolor{g}2707 $\pm$ 78.74 & 1.86E-09 & 1 \\
  500 & AIC & 85158 $\pm$ 1387 & \cellcolor{g}4137 $\pm$ 213.1 & 1.86E-09 & 1 \\
  \bhline{1pt}
  1000 & Training MAE & 0.2510 $\pm$ 0.0167 & \cellcolor{g}0.2097 $\pm$ 0.0395 & 4.42E-06 & 0.8667 \\
  1000 & Testing MAE & 0.3246 $\pm$ 0.0250 & \cellcolor{g}0.2329 $\pm$ 0.0508 & 3.73E-09 & 0.9957 \\
  1000 & \#Rules & 4130 $\pm$ 74.08 & \cellcolor{g}1429 $\pm$ 36.94 & 1.82E-06 & 1 \\
  1000 & \#Parameters & 45426 $\pm$ 814.9 & \cellcolor{g}2857 $\pm$ 73.87 & 1.86E-09 & 1 \\
  1000 & AIC & 88892 $\pm$ 1580 & \cellcolor{g}3324 $\pm$ 349.7 & 1.86E-09 & 1 \\
  \bhline{1pt}
  2000 & Training MAE & 0.2824 $\pm$ 0.0086 & \cellcolor{g}0.2080 $\pm$ 0.0609 & 1.22E-05 & 0.8366 \\
  2000 & Testing MAE & 0.3095 $\pm$ 0.0160 & \cellcolor{g}0.2177 $\pm$ 0.0663 & 3.79E-06 & 0.871 \\
  2000 & \#Rules & 4327 $\pm$ 69.43 & \cellcolor{g}1441 $\pm$ 35.93 & 1.82E-06 & 1 \\
  2000 & \#Parameters & 47597 $\pm$ 763.8 & \cellcolor{g}2882 $\pm$ 71.85 & 1.86E-09 & 1 \\
  2000 & AIC & 91538 $\pm$ 1521 & \cellcolor{g}886.6 $\pm$ 817.7 & 1.86E-09 & 1 \\
  \bhline{1pt}
  4000 & Training MAE & 0.3027 $\pm$ 0.0070 & \cellcolor{g}0.2098 $\pm$ 0.0483 & 1.86E-08 & 0.9785 \\
  4000 & Testing MAE & 0.3129 $\pm$ 0.0147 & \cellcolor{g}0.2131 $\pm$ 0.0469 & 3.73E-09 & 0.9957 \\
  4000 & \#Rules & 4521 $\pm$ 50.57 & \cellcolor{g}1406 $\pm$ 30.42 & 1.82E-06 & 1 \\
  4000 & \#Parameters & 49731 $\pm$ 556.3 & \cellcolor{g}2812 $\pm$ 60.85 & 1.86E-09 & 1 \\
  4000 & AIC & 92463 $\pm$ 1033 & \cellcolor{g}-3964 $\pm$ 1445 & 1.86E-09 & 1 \\
  \bhline{1pt}
  8000 & Training MAE & 0.3075 $\pm$ 0.0059 & \cellcolor{g}0.2146 $\pm$ 0.0468 & 1.30E-08 & 0.9828 \\
  8000 & Testing MAE & 0.3117 $\pm$ 0.0099 & \cellcolor{g}0.2159 $\pm$ 0.0491 & 1.86E-08 & 0.9785 \\
  8000 & \#Rules & 4724 $\pm$ 57.72 & \cellcolor{g}1430 $\pm$ 32.93 & 1.82E-06 & 1 \\
  8000 & \#Parameters & 51964 $\pm$ 634.9 & \cellcolor{g}2860 $\pm$ 65.86 & 1.86E-09 & 1 \\
  8000 & AIC & 90165 $\pm$ 1255 & \cellcolor{g}-13089 $\pm$ 2853 & 1.86E-09 & 1 \\
  \bhline{1pt}
\end{tabular}

  }
\end{table*}

\begin{figure*}[h]
  \centering
  \subfloat[$N_S=125$]{\includegraphics[width=0.24\textwidth]{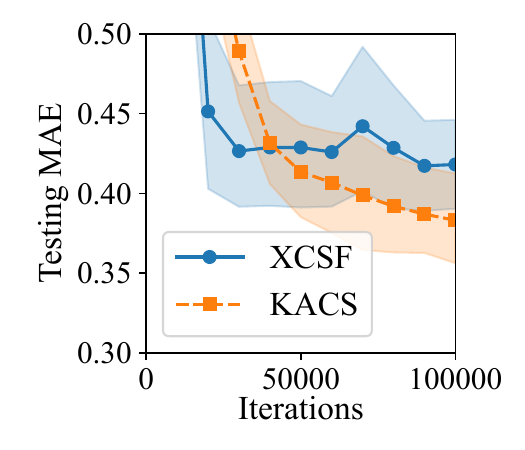} \label{fig:sample size 125 sup}}
  \subfloat[$N_S=250$]{\includegraphics[width=0.24\textwidth]{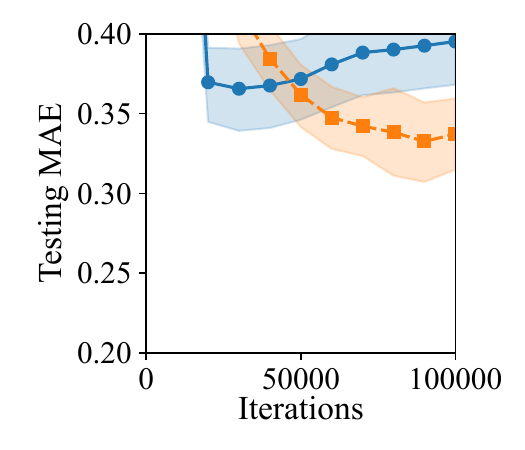} \label{fig:sample size 250 sup}}
  \subfloat[$N_S=500$]{\includegraphics[width=0.24\textwidth]{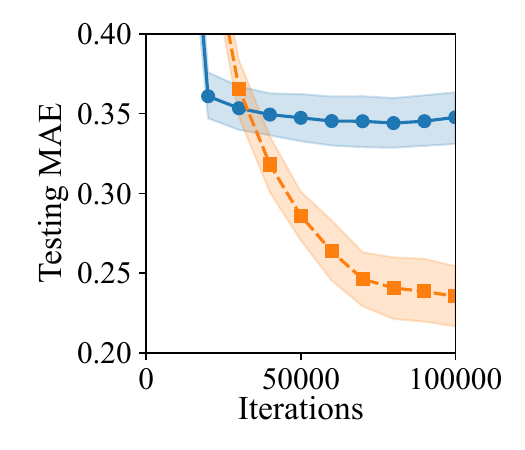} \label{fig:sample size 500 sup}}
  \subfloat[$N_S=1000$]{
    \includegraphics[width=0.24\textwidth]{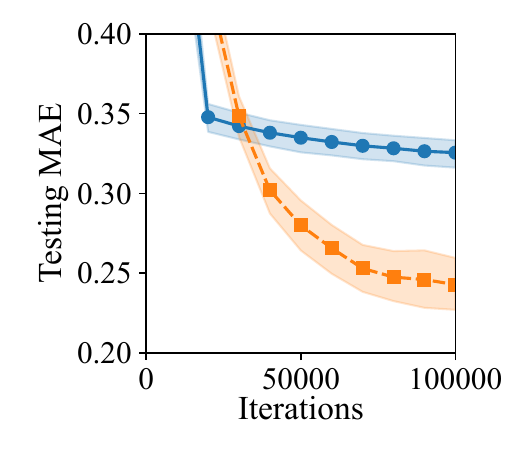}
    \label{fig:sample size 1000 sup}
  }\\
  \subfloat[$N_S=2000$]{\includegraphics[width=0.24\textwidth]{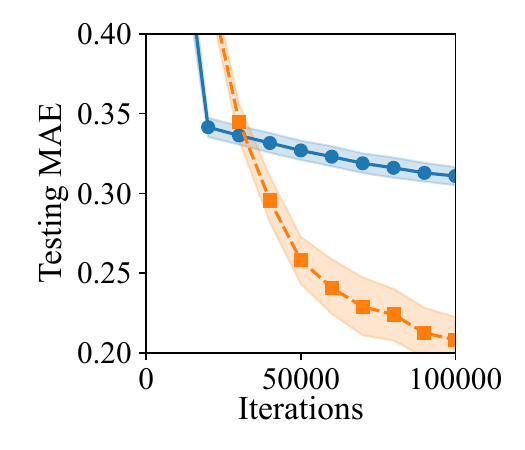} \label{fig:sample size 2000 sup}}
  \subfloat[$N_S=4000$]{\includegraphics[width=0.24\textwidth]{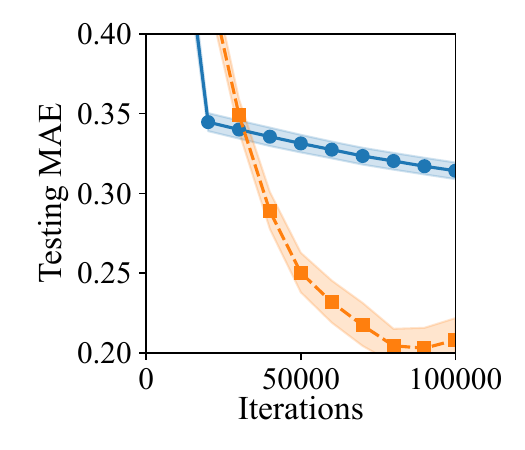} \label{fig:sample size 4000 sup}}
  \subfloat[$N_S=8000$]{\includegraphics[width=0.24\textwidth]{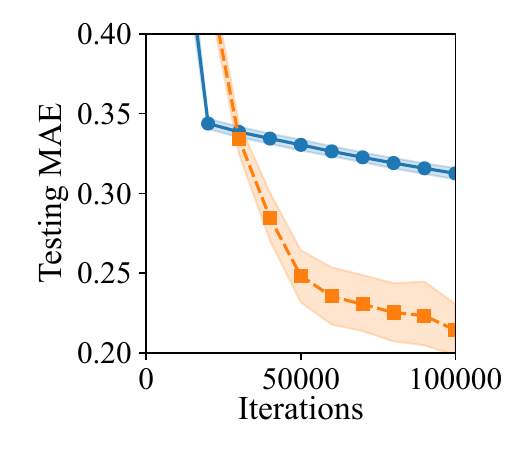} \label{fig:sample size 8000 sup}}
  \caption{Testing MAE learning curves of XCSF and KACS on $f_3$
  under varying sample sizes, with the input dimension fixed at $n=10$.}
  \label{fig:sample scaling mae sup}
\end{figure*}
\clearpage

\subsection{Dimensional Scaling}
\label{ss:Dimensional Scaling sup}
Table~\ref{tb:result dimscale sup} reports detailed results as the input dimension $n$ varies from 2 to 20 with $N_S = 1000$ fixed. {Table~\ref{tb:statistics dimscale sup} reports the corresponding run-level variability, paired Wilcoxon $p$-values, and effect sizes for each dimension and metric.}\label{r2-11-5}

In the practical regime ($n \leq 16$), KACS achieves significantly lower testing MAE than XCSF at all dimensions. The accuracy gap is largest at $n=2$ (0.0883 vs. 0.1582) and remains consistent through $n=14$ (0.2021 vs. 0.3066). Notably, XCSF testing MAE does not grow monotonically with $n$: it peaks around $n=6$ (0.3292) and slightly decreases at higher dimensions, while KACS testing MAE remains stable between 0.20 and 0.25 up to $n=14$.

At $n \geq 18$ under the default budget $N=6400$, KACS degrades sharply: testing MAE rises to 0.3842 at $n=18$ and 1.2500 at $n=20$, at which point XCSF achieves significantly better accuracy. As discussed in Section \ref{sss:scalability_discussion} of the main article, this failure is caused by the cover-delete cycle that occurs when $N$ is too small relative to the $(2n^2+3n+1)$ subproblems. Using the scaled budgets $N=15984$ and $N=19680$ for $n=18$ and $n=20$, respectively, KACS recovers and again achieves significantly lower testing MAE than XCSF (0.2115 vs. 0.2768 at $n=18$; 0.1970 vs. 0.2833 at $n=20$), confirming that the degradation is a capacity issue rather than an inherent limitation of KACS.

Model complexity confirms the structural efficiency of KACS throughout. XCSF parameters grow monotonically from 4504 ($n=2$) to 110,900 ($n=20$), whereas KACS parameters remain between 2372 and 3909 across all dimensions, including the high-dimensional failure cases. The AIC advantage of KACS holds for $n \leq 16$ (e.g., 2120 vs. 138,200 at $n=14$) and reverses only partially at $n \geq 18$ under $N=6400$, where KACS accuracy collapses despite its compact parameter count. At $n=2$, KACS rule count (1955) slightly exceeds that of XCSF (1501), the only exception across all conditions, reflecting the fixed submodel structure of KACS at low dimensions.

\begin{table*}[h]
  \centering
  \caption{Summary of Dimensionality Scalability Analysis. Notation Follows Table \ref{tb:result samscale sup}}
  \label{tb:result dimscale sup}
  \renewcommand{\arraystretch}{1.2}
  \resizebox{\width}{!}{
    \small
    \begin{tabular}{c|cc|cc|cc|cc|cc}
  \bhline{1pt}
  &\multicolumn{4}{c|}{\textsc{Model Accuracy}}
  &\multicolumn{4}{c|}{\textsc{Model Complexity}}
  &\multicolumn{2}{c}{\textsc{Trade-Off}}\\[-0.2ex]\rule{0pt}{2.6ex}
  &\multicolumn{2}{c|}{\text{Training MAE}}
  &\multicolumn{2}{c|}{\text{Testing MAE}}
  &\multicolumn{2}{c|}{\text{\#Rules} $(|\mathcal{P}|)$}
  &\multicolumn{2}{c|}{\text{\#Parameters} $(k)$}
  &\multicolumn{2}{c}{\text{AIC}}\\
  $n$ & XCSF & KACS & XCSF & KACS & XCSF & KACS & XCSF & KACS & XCSF & KACS \\
  \bhline{1pt}
  2 & 0.1459 $-$ & \cellcolor{g}0.0831 & 0.1582 $-$ & \cellcolor{g}0.0883 & \cellcolor{g}1501 $+$ & 1955 & 4504 $-$ & \cellcolor{g}3909 & 6251 $-$ & \cellcolor{g}3792 \\
  4 & 0.2210 $\sim$ & \cellcolor{g}0.2145 & 0.2656 $-$ & \cellcolor{g}0.2345 & 2829 $-$ & \cellcolor{g}1862 & 14145 $-$ & \cellcolor{g}3725 & 26047 $-$ & \cellcolor{g}5125 \\
  6 & 0.2632 $-$ & \cellcolor{g}0.2174 & 0.3292 $-$ & \cellcolor{g}0.2417 & 3423 $-$ & \cellcolor{g}1723 & 23960 $-$ & \cellcolor{g}3447 & 45884 $-$ & \cellcolor{g}4575 \\
  8 & 0.2525 $-$ & \cellcolor{g}0.2076 & 0.3269 $-$ & \cellcolor{g}0.2208 & 3837 $-$ & \cellcolor{g}1540 & 34529 $-$ & \cellcolor{g}3079 & 67024 $-$ & \cellcolor{g}3765 \\
  10 & 0.2510 $-$ & \cellcolor{g}0.2097 & 0.3246 $-$ & \cellcolor{g}0.2329 & 4130 $-$ & \cellcolor{g}1429 & 45426 $-$ & \cellcolor{g}2857 & 88892 $-$ & \cellcolor{g}3324 \\
  12 & 0.2355 $-$ & \cellcolor{g}0.2117 & 0.3224 $-$ & \cellcolor{g}0.2252 & 4439 $-$ & \cellcolor{g}1311 & 57703 $-$ & \cellcolor{g}2623 & 113331 $-$ & \cellcolor{g}2787 \\
  14 & 0.2398 $-$ & \cellcolor{g}0.1870 & 0.3066 $-$ & \cellcolor{g}0.2021 & 4674 $-$ & \cellcolor{g}1186 & 70114 $-$ & \cellcolor{g}2372 & 138194 $-$ & \cellcolor{g}2120 \\
  16 & \cellcolor{g}0.2234 $\sim$ & 0.2359 & 0.2770 $-$ & \cellcolor{g}0.2513 & 4890 $-$ & \cellcolor{g}1209 & 83137 $-$ & \cellcolor{g}2418 & 164127 $-$ & \cellcolor{g}2713 \\
  18 & \cellcolor{g}0.2388 $+$ & 0.3712 & \cellcolor{g}0.2749 $+$ & 0.3842 & 5067 $-$ & \cellcolor{g}1400 & 96268 $-$ & \cellcolor{g}2801 & 190490 $-$ & \cellcolor{g}4335 \\
  20 & \cellcolor{g}0.2395 $+$ & 1.2015 & \cellcolor{g}0.2706 $+$ & 1.2497 & 5283 $-$ & \cellcolor{g}1862 & 110941 $-$ & \cellcolor{g}3725 & 219841 $-$ & \cellcolor{g}8259 \\
  \bhline{1pt}
  Rank & \textit{1.70}$\downarrow$ & \cellcolor{g}\textit{1.30} & \textit{1.80}$\downarrow$ & \cellcolor{g}\textit{1.20} & \textit{1.90}$\downarrow^{\dag}$ & \cellcolor{g}\textit{1.10} & \textit{2.00}$\downarrow^{\dag}$ & \cellcolor{g}\textit{1.00} & \textit{2.00}$\downarrow^{\dag}$ & \cellcolor{g}\textit{1.00} \\
  $+/-/\sim$ & 2/6/2 & - & 2/8/0 & - & 1/9/0 & - & 0/10/0 & - & 0/10/0 & - \\
  $p$-value & 0.557 & - & 0.432 & - & 0.00391 & - & 0.00195 & - & 0.00195 & - \\
  \bhline{1pt}
\end{tabular}%

  }
\end{table*}

\begin{table*}[h]
  \centering
  \caption{{Detailed Paired Comparison of XCSF and KACS over 30 Matched
  Runs for Each Dimension and Metric. Notation Follows Table~\ref{tb:supplementary-statistics sup}}}
  \label{tb:statistics dimscale sup}
  \renewcommand{\arraystretch}{1.2}
  \resizebox{\width}{!}{
    \small
    \begin{tabular}{cccccc}
  \bhline{1pt}
$n$ & Metric & XCSF mean $\pm$ SD & KACS mean $\pm$ SD & Wilcoxon $p$ & Effect size \\
\bhline{1pt}
2 & Training MAE & 0.1459 $\pm$ 0.0528 & \cellcolor{g}0.0831 $\pm$ 0.0203 & 8.33E-07 & 0.9097 \\
2 & Testing MAE & 0.1582 $\pm$ 0.0637 & \cellcolor{g}0.0883 $\pm$ 0.0223 & 6.92E-06 & 0.8538 \\
2 & \#Rules & \cellcolor{g}1501 $\pm$ 220.6 & 1955 $\pm$ 96.36 & 1.82E-06 & -1 \\
2 & \#Parameters & 4504 $\pm$ 661.9 & \cellcolor{g}3909 $\pm$ 192.7 & 4.41E-05 & 0.7935 \\
2 & AIC & 6251 $\pm$ 995.4 & \cellcolor{g}3792 $\pm$ 497.9 & 9.31E-09 & 0.9871 \\
\bhline{1pt}
4 & Training MAE & 0.2210 $\pm$ 0.0209 & \cellcolor{g}0.2145 $\pm$ 0.0226 & 0.4045 & 0.1785 \\
4 & Testing MAE & 0.2656 $\pm$ 0.0310 & \cellcolor{g}0.2345 $\pm$ 0.0301 & 0.0001529 & 0.7462 \\
4 & \#Rules & 2829 $\pm$ 84.79 & \cellcolor{g}1862 $\pm$ 42.24 & 1.82E-06 & 1 \\
4 & \#Parameters & 14145 $\pm$ 423.9 & \cellcolor{g}3725 $\pm$ 84.48 & 1.86E-09 & 1 \\
4 & AIC & 26047 $\pm$ 767.7 & \cellcolor{g}5125 $\pm$ 222.9 & 1.86E-09 & 1 \\
\bhline{1pt}
6 & Training MAE & 0.2632 $\pm$ 0.0194 & \cellcolor{g}0.2174 $\pm$ 0.0288 & 1.64E-07 & 0.9441 \\
6 & Testing MAE & 0.3292 $\pm$ 0.0228 & \cellcolor{g}0.2417 $\pm$ 0.0349 & 1.86E-09 & 1 \\
6 & \#Rules & 3423 $\pm$ 78.47 & \cellcolor{g}1723 $\pm$ 34.24 & 1.82E-06 & 1 \\
6 & \#Parameters & 23960 $\pm$ 549.3 & \cellcolor{g}3447 $\pm$ 68.48 & 1.86E-09 & 1 \\
6 & AIC & 45884 $\pm$ 1045 & \cellcolor{g}4575 $\pm$ 282.3 & 1.86E-09 & 1 \\
\bhline{1pt}
8 & Training MAE & 0.2525 $\pm$ 0.0169 & \cellcolor{g}0.2076 $\pm$ 0.0372 & 9.22E-06 & 0.8452 \\
8 & Testing MAE & 0.3269 $\pm$ 0.0242 & \cellcolor{g}0.2208 $\pm$ 0.0359 & 1.86E-09 & 1 \\
8 & \#Rules & 3837 $\pm$ 73.09 & \cellcolor{g}1540 $\pm$ 41.68 & 1.86E-09 & 1 \\
8 & \#Parameters & 34529 $\pm$ 657.8 & \cellcolor{g}3079 $\pm$ 83.37 & 1.86E-09 & 1 \\
8 & AIC & 67024 $\pm$ 1286 & \cellcolor{g}3765 $\pm$ 300.7 & 1.86E-09 & 1 \\
\bhline{1pt}
10 & Training MAE & 0.2510 $\pm$ 0.0167 & \cellcolor{g}0.2097 $\pm$ 0.0395 & 4.42E-06 & 0.8667 \\
10 & Testing MAE & 0.3246 $\pm$ 0.0250 & \cellcolor{g}0.2329 $\pm$ 0.0508 & 3.73E-09 & 0.9957 \\
10 & \#Rules & 4130 $\pm$ 74.08 & \cellcolor{g}1429 $\pm$ 36.94 & 1.82E-06 & 1 \\
10 & \#Parameters & 45426 $\pm$ 814.9 & \cellcolor{g}2857 $\pm$ 73.87 & 1.86E-09 & 1 \\
10 & AIC & 88892 $\pm$ 1580 & \cellcolor{g}3324 $\pm$ 349.7 & 1.86E-09 & 1 \\
\bhline{1pt}
12 & Training MAE & 0.2355 $\pm$ 0.0185 & \cellcolor{g}0.2117 $\pm$ 0.0928 & 0.01454 & 0.5054 \\
12 & Testing MAE & 0.3224 $\pm$ 0.0222 & \cellcolor{g}0.2252 $\pm$ 0.0927 & 4.42E-06 & 0.8667 \\
12 & \#Rules & 4439 $\pm$ 44.61 & \cellcolor{g}1311 $\pm$ 37.30 & 1.86E-09 & 1 \\
12 & \#Parameters & 57703 $\pm$ 580.0 & \cellcolor{g}2623 $\pm$ 74.60 & 1.86E-09 & 1 \\
12 & AIC & 113331 $\pm$ 1156 & \cellcolor{g}2787 $\pm$ 593.1 & 1.86E-09 & 1 \\
\bhline{1pt}
14 & Training MAE & 0.2398 $\pm$ 0.0175 & \cellcolor{g}0.1870 $\pm$ 0.0530 & 3.24E-06 & 0.8753 \\
14 & Testing MAE & 0.3066 $\pm$ 0.0206 & \cellcolor{g}0.2021 $\pm$ 0.0595 & 1.86E-08 & 0.9785 \\
14 & \#Rules & 4674 $\pm$ 48.48 & \cellcolor{g}1186 $\pm$ 33.29 & 1.86E-09 & 1 \\
14 & \#Parameters & 70114 $\pm$ 727.2 & \cellcolor{g}2372 $\pm$ 66.59 & 1.86E-09 & 1 \\
14 & AIC & 138194 $\pm$ 1437 & \cellcolor{g}2120 $\pm$ 551.8 & 1.86E-09 & 1 \\
\bhline{1pt}
16 & Training MAE & \cellcolor{g}0.2234 $\pm$ 0.0149 & 0.2359 $\pm$ 0.0365 & 0.0961 & -0.3505 \\
16 & Testing MAE & 0.2770 $\pm$ 0.0256 & \cellcolor{g}0.2513 $\pm$ 0.0422 & 0.01454 & 0.5054 \\
16 & \#Rules & 4890 $\pm$ 54.43 & \cellcolor{g}1209 $\pm$ 31.91 & 1.86E-09 & 1 \\
16 & \#Parameters & 83137 $\pm$ 925.3 & \cellcolor{g}2418 $\pm$ 63.83 & 1.86E-09 & 1 \\
16 & AIC & 164127 $\pm$ 1828 & \cellcolor{g}2713 $\pm$ 277.5 & 1.86E-09 & 1 \\
\bhline{1pt}
18 & Training MAE & \cellcolor{g}0.2388 $\pm$ 0.0124 & 0.3712 $\pm$ 0.0537 & 1.86E-09 & -1 \\
18 & Testing MAE & \cellcolor{g}0.2749 $\pm$ 0.0219 & 0.3842 $\pm$ 0.0643 & 3.73E-09 & -0.9957 \\
18 & \#Rules & 5067 $\pm$ 51.70 & \cellcolor{g}1400 $\pm$ 42.87 & 1.82E-06 & 1 \\
18 & \#Parameters & 96268 $\pm$ 982.3 & \cellcolor{g}2801 $\pm$ 85.74 & 1.82E-06 & 1 \\
18 & AIC & 190490 $\pm$ 1946 & \cellcolor{g}4335 $\pm$ 380.9 & 1.86E-09 & 1 \\
\bhline{1pt}
20 & Training MAE & \cellcolor{g}0.2395 $\pm$ 0.0156 & 1.2015 $\pm$ 0.1632 & 1.86E-09 & -1 \\
20 & Testing MAE & \cellcolor{g}0.2706 $\pm$ 0.0322 & 1.2497 $\pm$ 0.2080 & 1.86E-09 & -1 \\
20 & \#Rules & 5283 $\pm$ 78.45 & \cellcolor{g}1862 $\pm$ 37.57 & 1.82E-06 & 1 \\
20 & \#Parameters & 110941 $\pm$ 1647 & \cellcolor{g}3725 $\pm$ 75.14 & 1.86E-09 & 1 \\
20 & AIC & 219841 $\pm$ 3369 & \cellcolor{g}8259 $\pm$ 322.6 & 1.86E-09 & 1 \\
\bhline{1pt}
\end{tabular}

  }
\end{table*}

\clearpage

\section{{Covering Dynamics and Cover-Delete Behavior}}
\label{sec:covering_dynamics_sup}
\label{r2-12-2}

{To examine whether XCSF and KACS repeatedly lose input coverage during training, we recorded the number of match sets that invoked covering. For a recording interval of $L$ iterations, let $C_{\mathrm{X}}$, $C_{\mathrm{K}}$, $C_{\mathrm{in}}$, and $C_{\mathrm{out}}$ denote the numbers of covering events in XCSF, all KACS submodels, the KACS inner submodels, and the KACS outer submodels, respectively. We calculate
  \begin{align}
    r_{\mathrm{XCSF}} &= \frac{C_{\mathrm{X}}}{L}, &
    r_{\mathrm{KACS}} &=
    \frac{C_{\mathrm{K}}}{L(2n^2+3n+1)}, \\
    r_{\mathrm{inner}} &=
    \frac{C_{\mathrm{in}}}{Ln(2n+1)}, &
    r_{\mathrm{outer}} &=
    \frac{C_{\mathrm{out}}}{L(2n+1)}.
    \label{eq:covering_rates_sup}
  \end{align}
  Thus, each rate is the fraction of match-set evaluations that require covering, which accounts for the different numbers of match sets evaluated by XCSF and KACS.}

{Fig.~\ref{fig:all_covering sup} shows that both methods require frequent covering during initialization. Thereafter, the XCSF covering rate rapidly approaches zero and remains negligible on all eight problems. Thus, under the standard experimental settings, XCSF does not regularly exhibit recurring coverage gaps that would indicate a cover-delete cycle. The overall KACS covering rate also decreases sharply, but remains nonzero throughout training. This persistence is most pronounced on the four synthetic problems and CS, while it is smaller on ASN, CCPP, and EEC.

  The near-zero XCSF covering rate is consistent with its high rule density within a single population. As reported in Table~\ref{tb:result} of the main article, XCSF retains 1953--4170 macro-rules, all of which are tested when constructing its single match set. In contrast, KACS distributes 1012--1924 macro-rules among $2n^2+3n+1$ submodels. For example, the $n=10$ synthetic problems have 231 submodels and 1341--1462 KACS macro-rules, corresponding to only about 5.8--6.3 macro-rules per submodel on average. This substantially lower per-submodel rule density makes a KACS match set more likely to become empty. This interpretation concerns antecedent coverage and rule allocation rather than the total number of consequent parameters, because covering is triggered by an empty match set.}

\begin{figure*}[h]
  \centering
  \subfloat[$f_1$: Rastrigin Function]{
    \includegraphics[width=0.24\textwidth]{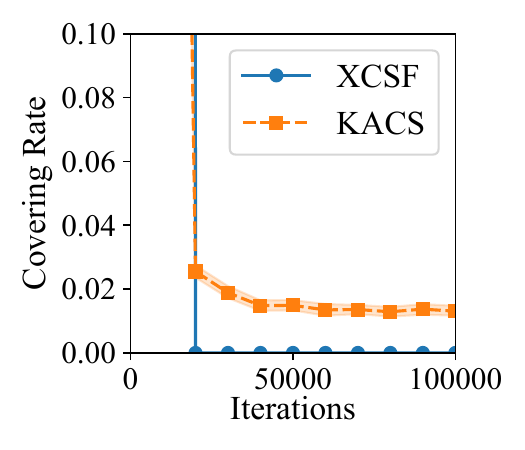}
    \label{fig:f1_covering sup}
  }
  \subfloat[$f_2$: Rosenbrock Function]{\includegraphics[width=0.24\textwidth]{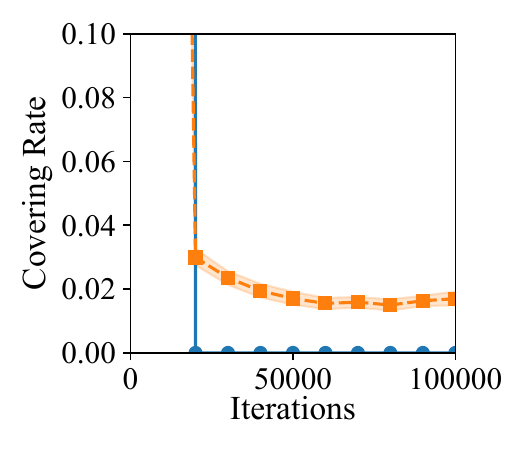} \label{fig:f2_covering sup}}
  \subfloat[$f_3$: Cross Function]{\includegraphics[width=0.24\textwidth]{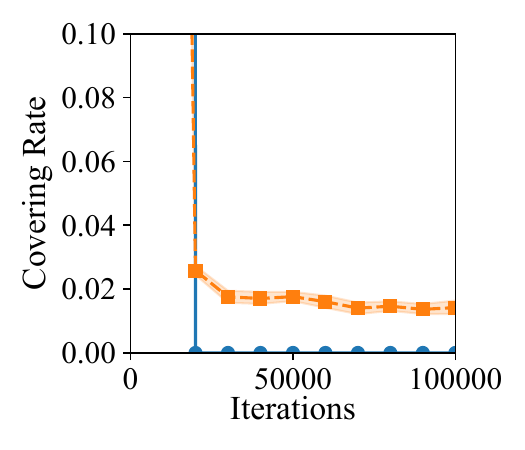} \label{fig:f3_covering sup}}
  \subfloat[$f_4$: Styblinski-Tang Function]{\includegraphics[width=0.24\textwidth]{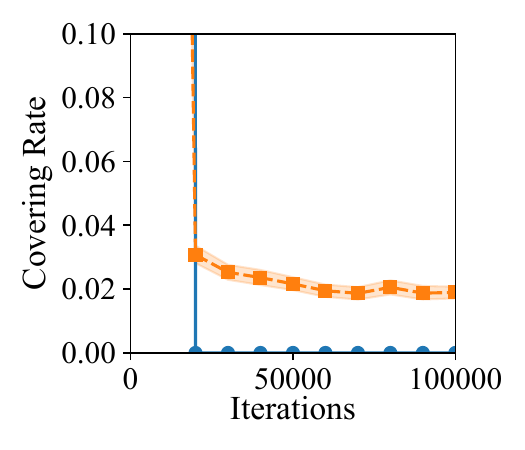} \label{fig:f4_covering sup}}
  \\
  \subfloat[ASN]{
    \includegraphics[width=0.24\textwidth]{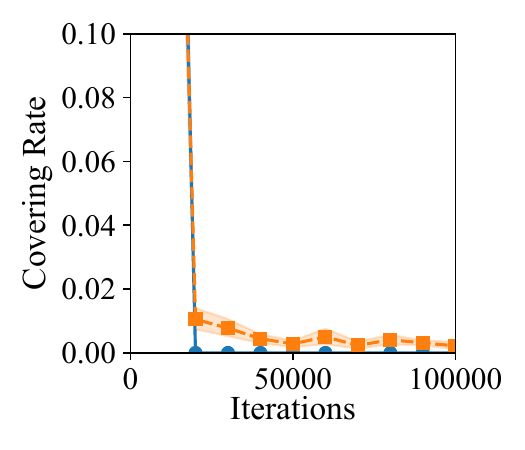}
    \label{fig:asn_covering sup}
  }
  \subfloat[CCPP]{\includegraphics[width=0.24\textwidth]{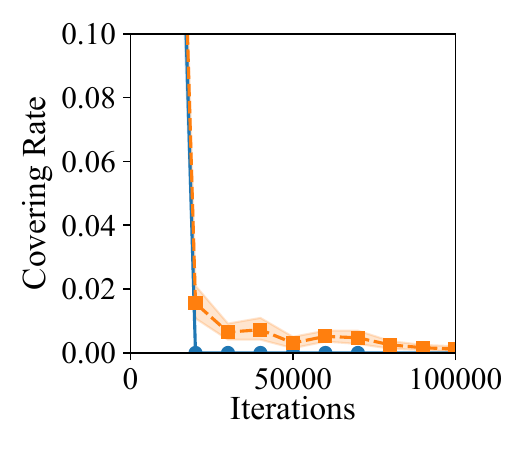} \label{fig:ccpp_covering sup}}
  \subfloat[CS]{\includegraphics[width=0.24\textwidth]{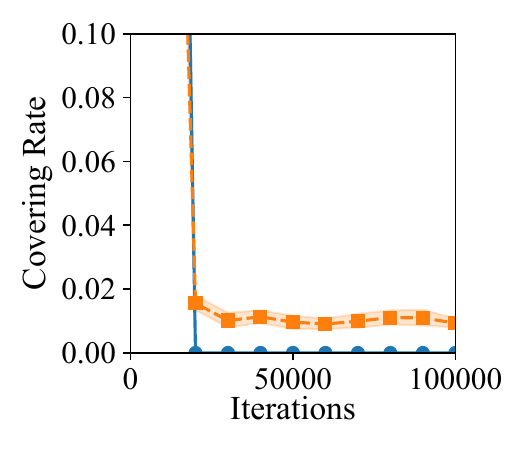} \label{fig:cs_covering sup}}
  \subfloat[EEC]{\includegraphics[width=0.24\textwidth]{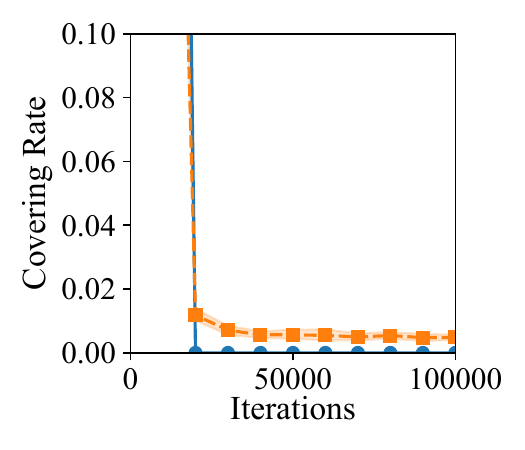} \label{fig:eec_covering sup}}
  \caption{Covering-rate trajectories of XCSF and KACS during training
    on the eight benchmark problems. Rates are normalized by the number
    of match-set evaluations. Curves show the mean over 30 runs, and
  shaded regions denote 95\% confidence intervals.}
  \label{fig:all_covering sup}
\end{figure*}

{Fig.~\ref{fig:all_covering kacs sup} localizes this behavior within the KACS architecture. The covering rate of the inner submodels approaches zero after initialization on every problem, following a pattern similar to XCSF. In contrast, the outer submodels continue to invoke covering throughout training, although the magnitude is problem dependent. This difference is consistent with the distinct input domains of the two submodel types: inner submodels always receive inputs in $[0,1]$, whereas the outer inputs $\hat{z}_q$ change as the inner functions are learned and can span a substantially wider range. Consequently, recurrent coverage gaps are concentrated in the outer submodels rather than being a general property of all KACS rulesets.

  Persistent late-stage covering indicates that stable outer-submodel coverage is not always maintained and is therefore consistent with recurrent cover-delete behavior. Moreover, this behavior does not cause a general accuracy failure under the standard settings, where KACS remains competitive despite the nonzero outer covering rate. On the other hand, persistent outer-submodel covering becomes harmful when the population budget is too small to maintain adequate coverage, as observed in the high-dimensional experiments analyzed in Section~\ref{sss:scalability_discussion} of the main article. In summary, XCSF and the KACS inner submodels maintain stable coverage after initialization, whereas the dynamically changing outer domains make the KACS outer submodels more susceptible to recurrent covering.}

\begin{figure*}[h]
  \centering
  \subfloat[$f_1$: Rastrigin Function]{
    \includegraphics[width=0.24\textwidth]{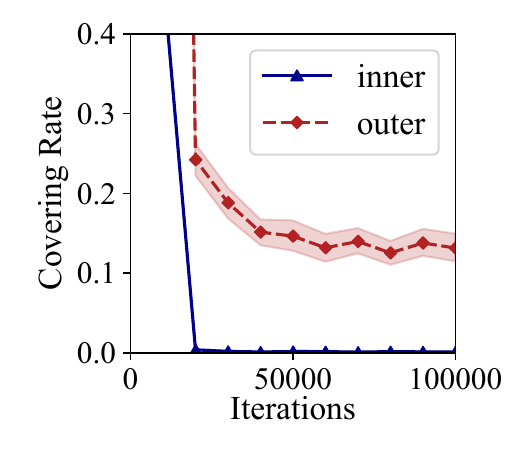}
    \label{fig:f1_covering kacs sup}
  }
  \subfloat[$f_2$: Rosenbrock Function]{\includegraphics[width=0.24\textwidth]{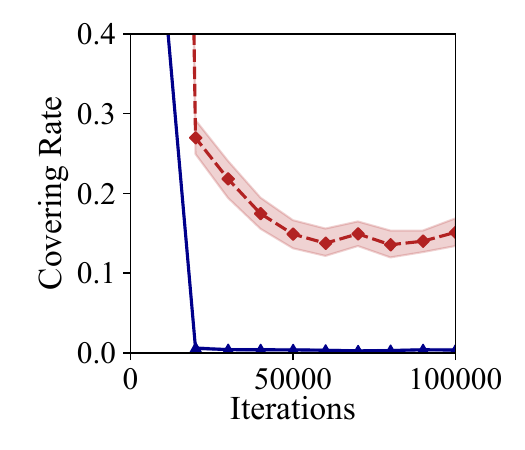} \label{fig:f2_covering kacs sup}}
  \subfloat[$f_3$: Cross Function]{\includegraphics[width=0.24\textwidth]{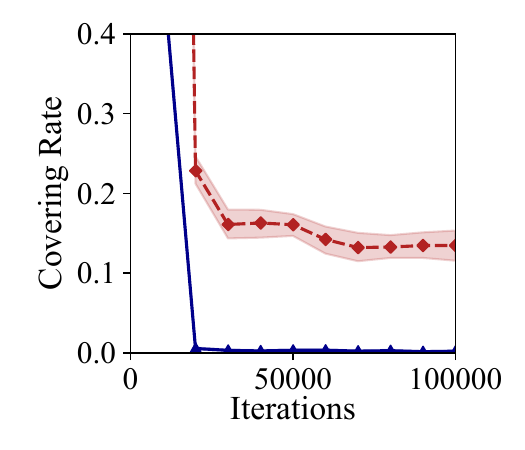} \label{fig:f3_covering kacs sup}}
  \subfloat[$f_4$: Styblinski-Tang Function]{\includegraphics[width=0.24\textwidth]{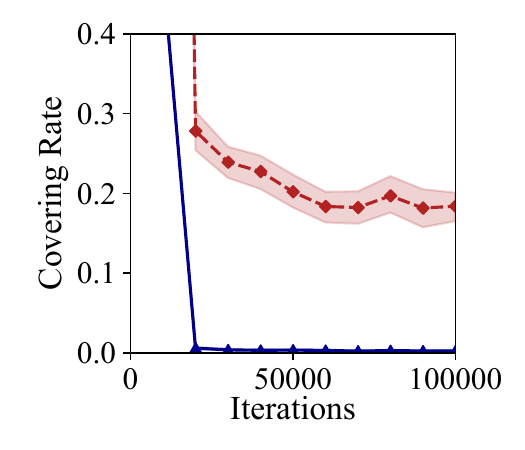} \label{fig:f4_covering kacs sup}}
  \\
  \subfloat[ASN]{
    \includegraphics[width=0.24\textwidth]{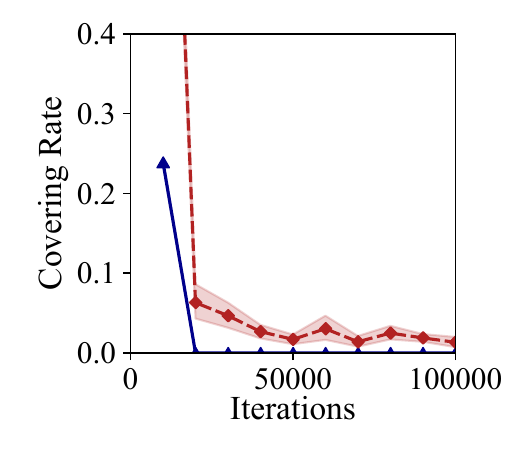}
    \label{fig:asn_covering kacs sup}
  }
  \subfloat[CCPP]{\includegraphics[width=0.24\textwidth]{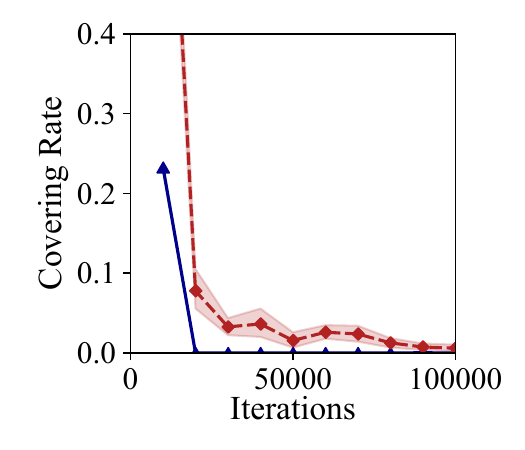} \label{fig:ccpp_covering kacs sup}}
  \subfloat[CS]{\includegraphics[width=0.24\textwidth]{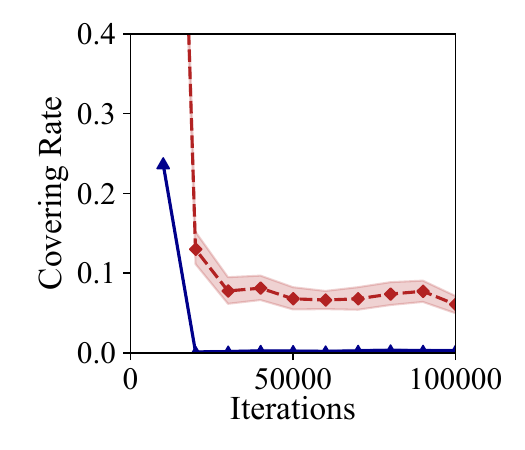} \label{fig:cs_covering kacs sup}}
  \subfloat[EEC]{\includegraphics[width=0.24\textwidth]{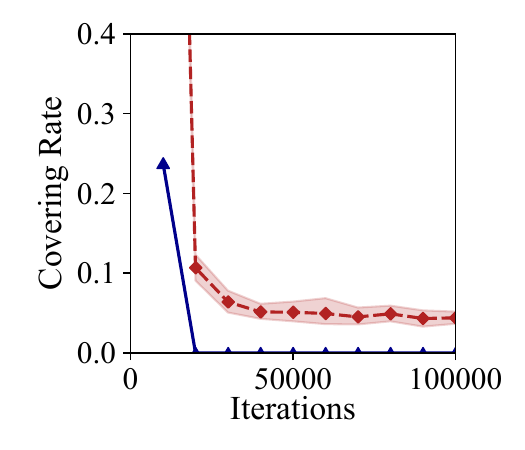} \label{fig:eec_covering kacs sup}}
  \caption{Covering-rate trajectories of the inner and outer KACS
    submodels during training on the eight benchmark problems. Each rate
    is normalized by the number of match-set evaluations for the
    corresponding submodel type. Curves show the mean over 30 runs, and
  shaded regions denote 95\% confidence intervals.}
  \label{fig:all_covering kacs sup}
\end{figure*}


\clearpage
\makeatletter
\let\addcontentsline\arxiv@mainaddcontentsline
\makeatother
\end{document}